\definecolor{LightCyan}{rgb}{0.88,1,1}
\newcommand{\Char}{\mathbbm{1}}
\newcommand{\cF}{\mathcal{F}}
\newcommand{\cN}{\mathcal{N}}
\newcommand{\cS}{\mathcal{S}}
\newcommand{\cL}{\mathcal{L}}
\newcommand{\cJ}{\mathcal{J}}
\newcommand{\cP}{\mathcal{P}}
\newcommand{\cX}{\mathcal{X}}
\newcommand{\cD}{\mathcal{D}}
\newcommand{\cW}{\mathcal{W}}
\newcommand{\N}{\mathbb{N}}
\newcommand{\R}{\mathbb{R}}
\newcommand{\E}{\mathbb{E}}
\newcommand\ve{\mathbf{e}}
\newcommand\vg{\mathbf{g}}
\newcommand\vv{\mathbf{v}}
\newcommand\vx{\mathbf{x}}
\newcommand\vzero{\mathbf{0}}
\newcommand\vone{\mathbf{1}}
\newcommand\mA{\mathbf{A}}
\newcommand\mB{\mathbf{B}}
\newcommand\mC{\mathbf{C}}
\newcommand\mD{\mathbf{D}}
\newcommand\mF{\mathbf{F}}
\newcommand\mM{\mathbf{M}}
\newcommand\mH{\mathbf{H}}
\newcommand\mI{\mathbf{I}}
\newcommand\mJ{\mathbf{J}}
\newcommand\mG{\mathbf{G}}
\newcommand\mK{\mathbf{K}}
\newcommand\mP{\mathbf{P}}
\newcommand\mV{\mathbf{V}}
\newcommand\mW{\mathbf{W}}
\newcommand\mU{\mathbf{U}}
\newcommand\mT{\mathbf{T}}
\newcommand\mX{\mathbf{X}}
\newcommand\mZ{\mathbf{Z}}
\newcommand\mSigma{\boldsymbol{\Sigma}}
\newcommand\vpi{\boldsymbol{\pi}}
\theoremstyle{plain}
\newtheorem{theorem}{Theorem}[section]
\newtheorem{lemma}[theorem]{Lemma}
\newtheorem{corollary}[theorem]{Corollary}
\newtheorem{assumption}[theorem]{Assumption}
\theoremstyle{definition}
\theoremstyle{remark}
\theoremstyle{nonumberbreak}
\newtheorem{remarknn}{Remark}
\title{Does Worst-Performing Agent Lead the Pack? Analyzing Agent Dynamics in Unified Distributed SGD}
\author{%
  Jie Hu ~~~~~~~ Yi-Ting Ma ~~~~~~ Do Young Eun\\
  Department of Electrical and Computer Engineering\\
  North Carolina State University\\
  \texttt{\{jhu29, yma42, dyeun\}@ncsu.edu} \\
}
\begin{document}

\maketitle

\begin{abstract}
Distributed learning is essential to train machine learning algorithms across \textit{heterogeneous} agents while maintaining data privacy. We conduct an asymptotic analysis of Unified Distributed SGD (UD-SGD), exploring a variety of communication patterns, including decentralized SGD and local SGD within Federated Learning (FL), as well as the increasing communication interval in the FL setting. In this study, we assess how different sampling strategies, such as \textit{i.i.d.} sampling, shuffling, and Markovian sampling, affect the convergence speed of UD-SGD by considering the impact of agent dynamics on the limiting covariance matrix as described in the Central Limit Theorem (CLT). Our findings not only support existing theories on linear speedup and asymptotic network independence, but also theoretically and empirically show how efficient sampling strategies employed by individual agents contribute to overall convergence in UD-SGD. Simulations reveal that a few agents using highly efficient sampling can achieve or surpass the performance of the majority employing moderately improved strategies, providing new insights beyond traditional analyses focusing on the worst-performing agent.
\end{abstract}

\section{Introduction}
Distributed learning deals with the training of models across multiple agents over a communication network in a distributed manner, while addressing the challenges of privacy, scalability, and high-dimensional data \cite{boyd2011distributed,mcmahan2017communication}. Each agent $i \in [N]$ holds a private dataset $\cX_i$ and an agent-specified loss function $F_i: \R^d \times \cX_i \to \R$ that depends on the model parameter $\theta \in \R^d$ and a data point $X \in \cX_i$. The goal is then to find a local minima $\theta^*$ of the objective function $f(\theta) \triangleq \frac{1}{N}\sum_{i=1}^N f_i(\theta)$, where agent $i$'s loss function $f_i(\theta) \triangleq \E_{X \sim \cD_i}[F_i(\theta, X)]$ and $\cD_i$ represents the target distribution of data for agent $i$.\footnote{Throughout the paper we don't impose convexity assumption on $f(\theta)$. For convex $f(\theta)$, $\cL$ is the global minima. For non-convex $f(\theta)$, $\cL$ represents the collection of local minima, which is of great interest in neural network training for sufficiently good performance \cite{dauphin2014identifying, choromanska2015loss}. With an additional condition such as the Polyak-Lojasiewicz inequality, non-convex $f(\theta)$ is ensured to have a unique minima \cite{ahn2020sgdshuffling,Wojtowytsch2021sgd,yun2022minibatch}.} Each agent $i$ can locally compute the gradient $\nabla F_i(\theta,X) \in \R^d$ w.r.t. $\theta$ for every sampled data point $X \in \cX_i$. Due to the distributed nature, $\{\cD_i\}_{i\in[N]}$ and $\{\cX_i\}_{i\in[N]}$ are not necessarily identically distributed over $[N]$ so that the minima of each local function $f_i(\theta)$ can be far away from $\cL$. This is particularly relevant in decentralized training data, e.g., Federated Learning (FL) with \textit{heterogeneous} data across data centers or devices \cite{zhao2018federated,ghosh2020efficient}.

In this paper, we focus on Unified Distributed SGD (UD-SGD), where each agent $i\in [N]$ updates its model parameter $\theta_{n+1}^i$ in a two-step process:
\begin{subequations}\label{eqn:GD-SGD}\vspace{-2mm}
\begin{equation}\label{eqn:GD-SGD-local-update}
    \textstyle\textit{Local update:~~} \theta_{n+\nicefrac{1}{2}}^i = \theta_{n}^i - \gamma_{n+1} \nabla F_i(\theta_{n}^i, X^i_{n}),
\end{equation}
\begin{equation}\label{eqn:GD-SGD-aggregation}
    \textstyle \textit{Aggregation:~~} \theta_{n+1}^i = \sum_{j=1}^N w_{n}(i,j) \theta_{n+\nicefrac{1}{2}}^j,
\end{equation}
\end{subequations}
where $\gamma_n$ denotes the step size, $X^i_{n}$ is the data sampled by agent $i$ at time $n$ (i.e., agent dynamics), and $\mW_n \!=\! [w_n(i,j)]_{i,j\in[N]}$ represents the doubly-stochastic communication matrix satisfying $w_n(i,j) \geq 0$ and $\vone^T\mW_n \!=\! \vone^T$, $\mW_n \vone \!=\! \vone$. In the special case of $N=1$, \eqref{eqn:GD-SGD} simplifies to the vanilla SGD where $\mW_n = 1$ for all $n$. UD-SGD covers a wide range of distributed algorithms, e.g., decentralized SGD (DSGD) \cite{wai2020convergence,zeng2022finite,olshevsky2022asymptotic,sun2023decentralized}, distributed SGD with changing topology (DSGD-CT) \cite{doan2019finite,koloskova2020unified}, local SGD (LSGD) in FL \cite{mcmahan2017communication,woodworth2020local}, and its variant aimed at reducing communication costs (LSGD-RC) \cite{li2022stat}.

\begin{wrapfigure}{r}{0.5\textwidth}
  \begin{center} \vspace{-5mm}
  \includegraphics[width = 0.5\textwidth]{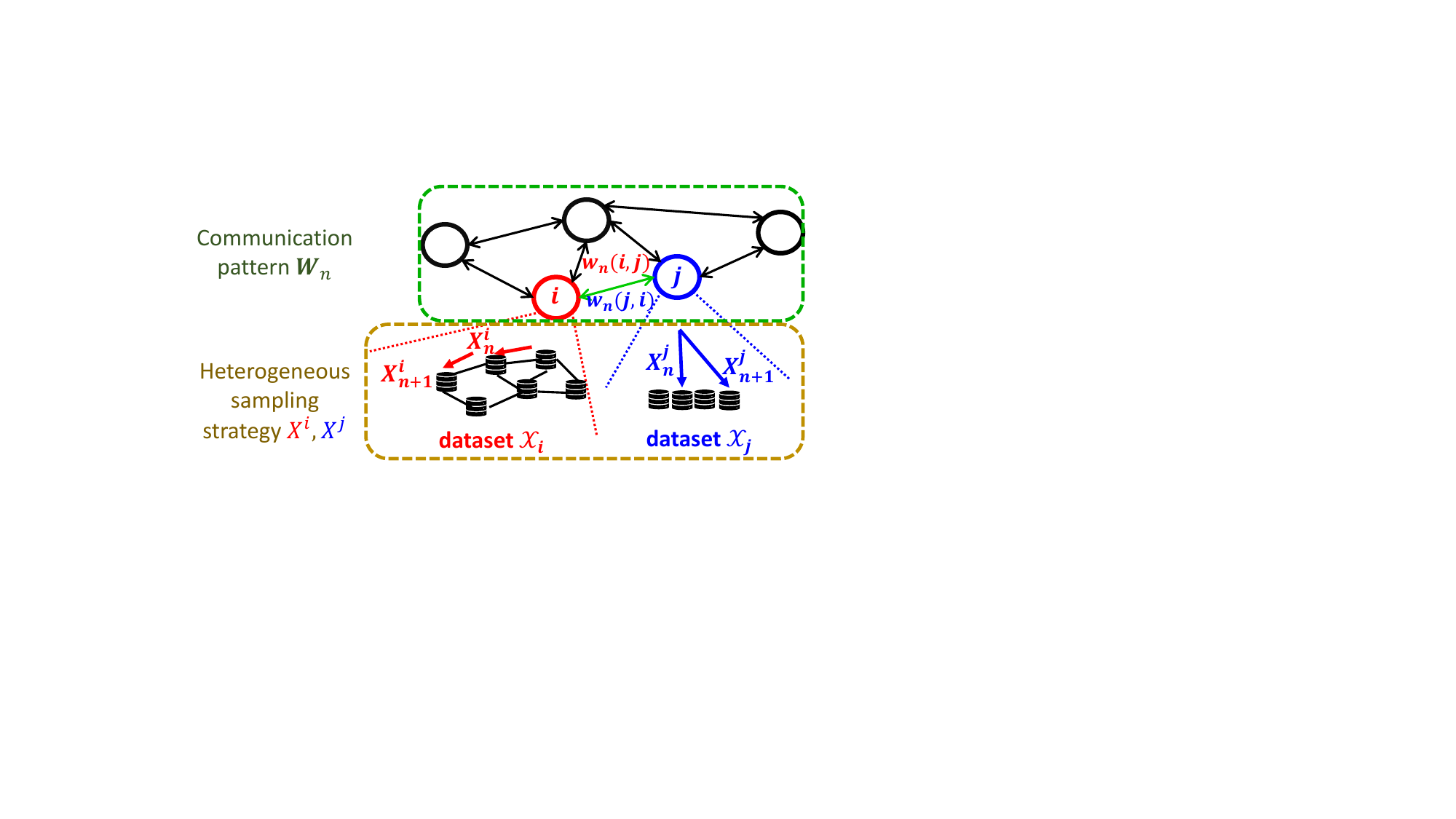}\vspace{-3mm}
  \end{center}
  \caption{GD-SGD algorithm with a communication network of $N=5$ agents, each holding potentially distinct datasets; e.g., agent $j$ (in blue) samples $\cX_j$ \textit{i.i.d.} and agent $i$ (in red) samples $\cX_i$ via Markovian trajectory.}\vspace{-3mm}\label{fig:GD-SGD_diagram}
\end{wrapfigure}
\textbf{Versatile Communication Patterns $\{\mW_n\}$:} For visualization, we depict the scenarios of UD-SGD \eqref{eqn:GD-SGD} in \cref{fig:GD-SGD_diagram}. In DSGD, each agent (node) in the graph communicates with its neighbors after each SGD computation via $\mW_n$, representing the underlying network topology. As a special case, central server-based aggregation, forming a fully connected network, translates $\mW_n$ into a rank-1 matrix $\mW_n \!=\! \vone\vone^T/N$. To minimize communication expenses, FL variants allow each agent to perform multiple SGD steps before aggregation \cite{mcmahan2017communication,stich2018local,woodworth2020local}, resulting in a communication interval of length $K$ and a consistent pattern $\mW_n = \mW$ for $n = mK, \forall m \in \N$, and $\mW_n = \mI_N$ otherwise.
In particular, i) $\mW \!=\! \vone\vone^T/N$ corresponds to LSGD with full agent participation (LSGD-FP) \cite{woodworth2020local,khodadadian2022federated,li2022stat}; ii) $\mW$ is a random matrix generated by partial agent participation (LSGD-PP) \cite{mcmahan2017communication,chen2022optimal,wang2022unified}; iii) $\mW$ is generated by Metropolis-Hasting algorithm in decentralized setting, e.g., hybrid LSGD (HLSGD) \cite{hosseinalipour2022multi,guo2022hybrid} and decentralized FL (DFL) \cite{lalitha2018fully,ye2022decentralized,chellapandi2023convergence}. We defer further discussion of $\mW$ to \cref{subsection:examples_of_W}.

\textbf{Markovian vs \textit{i.i.d.} Sampling:} Agents typically employ \textit{i.i.d.} or Markovian sampling, as illustrated in the bottom brown box of \cref{fig:GD-SGD_diagram}. In cases where agents have full access to their data, DSGD with \textit{i.i.d} sampling has been extensively studied \cite{neglia2020decentralized,koloskova2020unified,olshevsky2022asymptotic,le2023refined}. In FL, many application-oriented LSGD variants have been investigated \cite{li2022stat, chen2022optimal, ye2022decentralized, guo2022hybrid,hosseinalipour2022multi,liao2023adaptive}. However, these works solely focus on \textit{i.i.d.} sampling, restricting their applicability to Markovian sampling scenarios.

Markovian sampling, which has received increased attention in limited settings (see \cref{tab:related_works}), is vital where agents lack independent data access. For instance, in statistical applications, agents with an unknown a priori distribution often use Markovian sampling over \textit{i.i.d.} sampling \cite{jerrum1996markov,robert1999metropolis}. In HLSGD across device-to-device (D2D) networks \cite{guo2022hybrid,hosseinalipour2022multi}, random walks reduce communication costs compared to the frequent aggregations required by Gossip algorithms \cite{hu2022efficiency,even2023stochastic,ayache2023walk}. For single-agent scenarios, vanilla SGD with Markovian noise, as applied in a D2D network, has shown improved communication efficiency and privacy \cite{sun2018markov,even2023stochastic,hendrikx2023principled}. In contrast, for agents with full data access, Markov Chain Monte Carlo (MCMC) methods can be more efficient than \textit{i.i.d.} sampling, especially in high-dimensional spaces with constraints \cite{dyer1993mildly,jerrum1996markov},
where acceptance-rejection methods \cite{bremaud2013markov} lead to computational inefficiency (e.g., wasted samples) due to multiple rejections before obtaining a sample that satisfies constraints \cite{duchi2012ergodic,sun2018markov}. In addition, shuffling methods can be considered as high-order Markov chains \cite{hu2022efficiency}, which achieves faster convergence than \textit{i.i.d.} sampling \cite{ahn2020sgdshuffling,yun21can,yun2022minibatch}.

\begin{table}[!ht]
    \caption{Comparison of recent works in distributed learning: We classify the communication patterns into seven categories, i.e., DSGD, DSGD-CT, LSGD-FP, LSGD-PP, LSGD-RC, HLSGD and DFL. We mark `UD-SGD' when all aforementioned patterns are included and the detailed discussion on $\{\mW_n\}$ is referred to \cref{subsection:examples_of_W}. Abbreviations: `\textbf{Asym.}' $=$ `Asymptotic', `\textbf{D.A.B}' $=$ `Differentiating Agent Behavior', `\textbf{L.S.}' $=$ `Linear Speedup', `\textbf{A.N.I.}' $=$ `Asymptotic Network Independence'.}\vspace{-2mm}
    \label{tab:related_works}
    \begin{center}
    \begin{small}
    \begin{tabular}{ccccccc}
        \toprule
        Reference & Analysis & Sampling & Communication Pattern & D.A.B. & L.S. & A.N.I. \\
        \midrule
        \cite{morral2017success} & Asym.  & \textit{i.i.d.} & DSGD & \checkmark & \checkmark & \checkmark \\
        \midrule 
        \cite{li2022stat} & Asym.  & \textit{i.i.d.} & LSGD-RC & \checkmark & \checkmark & N/A \\
        \midrule
        \cite{koloskova2020unified, le2023refined} & Non-Asym.  & \textit{i.i.d.} & DSGD-CT & $\times$ & \checkmark & $\times$ \\
        \midrule
        \cite{olshevsky2022asymptotic} & Non-Asym.  & \textit{i.i.d.} & DSGD & $\times$ & $\times$ & \checkmark \\
        \midrule
        \cite{chen2022optimal,liao2023adaptive} & Non-Asym.  & \textit{i.i.d.} & LSGD-PP & $\times$ & \checkmark & N/A \\
        \midrule
        \cite{hosseinalipour2022multi,guo2022hybrid} & Non-Asym.  & \textit{i.i.d.} & HLSGD & $\times$ & \checkmark & $\times$ \\
        \midrule
        \cite{ye2022decentralized,chellapandi2023convergence} & Non-Asym.  & \textit{i.i.d.} & DFL & $\times$ & \checkmark & $\times$ \\
        \midrule
        \cite{wai2020convergence,zeng2022finite,sun2023decentralized} & Non-Asym.  & Markov & DSGD & $\times$ & $\times$ & $\times$ \\
        \midrule
        \cite{khodadadian2022federated, wang2023federated} & Non-Asym.  & Markov & LSGD-FP & $\times$ & \checkmark & N/A \\
        \midrule		
        \cite{sun2018markov,ayache2023walk,even2023stochastic} & Non-Asym.  & Markov & N/A (single agent) & N/A & N/A & N/A \\
        \midrule
        \cite{hu2022efficiency,li2023online} & Asym.  & Markov & N/A (single agent) & N/A & N/A & N/A \\
		\midrule     
        \rowcolor{cyan!30} Our Work & Asym.  & Markov & UD-SGD & \checkmark & \checkmark & \checkmark \\
        \bottomrule       
    \end{tabular}
    \end{small}
    \end{center}
    \vspace{-7mm}
\end{table}

\textbf{Limitations of Non-Asymptotic Analysis on Agent's Sampling Strategy:} Recent studies on the non-asymptotic behavior of DSGD and LSGD variants under Markovian sampling, as summarized in Table \ref{tab:related_works}, have made significant strides. However, these works often fall short in accurately revealing the statistical influence of \textit{each} agent dynamics $\{X_n^i\}$ on the performance of UD-SGD. For instance, \cite{wai2020convergence,sun2023decentralized} proposed the error bound $O(\frac{1/\log^2(1/\rho)}{n^{1-a}})$, where $a \in (0.5,1]$ and $\rho$ denotes the identical mixing rate for all agents, overlooking agent heterogeneity in sampling strategy. A similar assumption to $\rho$ is also evident in \cite{khodadadian2022federated}. More recent contributions from \cite{zeng2022finite, wang2023federated} have attempted to relax these constraints by considering a finite-time bound of $O(\tau_{mix}^2/(n+1))$, where $\tau_{mix}$ is the mixing time of the slowest agent. This approach, however, inherently focuses on \textit{the worst-performing agent}, neglecting how other agents with faster mixing rates might positively influence the system.\footnote{Although improving the finite-time upper bound to distinguish each agent may not be the focus of the aforementioned works, their analyses require every Markov chain to be close to some neighborhood of its stationary distribution. This naturally incurs a \textit{maximum operator}, and thus convergence is strongly influenced by the \textit{slowest} mixing rate, i.e., the worst-performing agent.\label{footnote2}} Such an analysis fails to capture the collective impact of \textit{other} agents on the overall system performance, a crucial aspect in large-scale applications where identifying and managing the worst-performing agent is challenging due to privacy concerns or sporadic unreachability. Since agents in distributed learning have the freedom to choose their sampling strategies, it's vital to understand how each agent's improved sampling approach contributes to the overall convergence speed of the UD-SGD algorithm. This understanding is key to enhancing system performance, particularly in large-scale machine learning scenarios where agent heterogeneity is a defining feature.

\textbf{Rationale for Asymptotic Analysis:} Recent trends in convergence analysis have leaned towards non-asymptotic methods, yet it's crucial to recognize the complementary role of asymptotic analysis for a better understanding of convergence behaviors, as highlighted in \cite{borkar2021ode,meyn2022control,doshi2023self,hu2024accelerating}. For vanilla SGD, \cite{mou2020linear, chen2020explicit} emphasized that central limit theorem (CLT) is far less asymptotic than it may appear under both \textit{i.i.d.} and Markovian sampling. Notably, the limiting covariance matrix, a key statistical feature in vanilla SGD's CLT, also prominently features in high-probability bound \cite{mou2020linear}, explicit finite-time bound \cite{chen2020explicit} and $1$-Wasserstein distance in the non-asymptotic CLT \cite{srikant2024rate}. \cite{hu2022efficiency} further underscored this by numerically showing that the limiting covariance matrix provides a more precise depiction of convergence than the mixing rates often used in finite-time upper bounds \cite{duchi2012ergodic, sun2018markov}. Moreover, they argued that finite-time analysis may not suitably apply to certain efficient high-order Markov chains, due to the lack of comparative mixing-rate metrics.

\textbf{Our Contributions:} We present an asymptotic analysis of the UD-SGD algorithm \eqref{eqn:GD-SGD} under heterogeneous agent dynamics $\{X^i_n\}$ and a large family of communication patterns $\{\mW_n\}$. Specifically,

    $\bullet$~~Under appropriate assumptions, all agents performing \eqref{eqn:GD-SGD} asymptotically reach the consensus and find $\theta^*$: $\forall i \in [N]$, $\theta_n \triangleq \frac{1}{N}\sum_{i=1}^N \theta^i_n$ denotes the average model parameter among all agents, we have
    \begin{equation}
        \lim_{n\to\infty}\! \|\theta^i_n \!-\! \theta_n\| \!=\! 0,~~ \lim_{n\to\infty}\! \| \theta_n \!-\! \theta^*\| \!=\! 0 ~\text{ a.s.}
    \end{equation}
    Moreover, we derive the CLT of UD-SGD in the form of
    \begin{equation}\label{eqn:PR_CLT}
    \gamma_n^{-1/2}(\theta_n - \theta^*) \xrightarrow[n\to\infty]{dist.} \cN\left(\vzero, \mV\right).
    \end{equation}
    Our framework addresses technical challenges in quantifying consensus error under various communication patterns and slowly increasing communication interval. This shows a substantial extension compared to previous studies \cite{morral2017success,koloskova2020unified,li2022stat}, particularly in regulating the growth of communication intervals (Assumption \ref{assumption:Decreasing step size and slowly increasing communication interval}-ii) and in proving the scaled consensus error's boundedness (Lemma \ref{lemma:boundedness_phi}). Furthermore, we reformulate UD-SGD as a stochastic approximation-like iteration and tackle the Markovian noise term using the Poisson equation, a technique previously confined only to vanilla SGD with Markovian sampling \cite{chen2020explicit,hu2022efficiency,li2023online}. The key here is to devise the noise decomposition that separates the consensus error among all agents from the error caused by the bias from the Markov chain, which aligns with the target distribution only asymptotically at infinity, not at finite times.
    
     $\bullet$~~In analyzing \eqref{eqn:PR_CLT}, we derive the exact form of $\mV$ as $\frac{1}{N^2}\sum_{i=1}^N \mV_i$. Here, $\mV_i$ is the limiting covariance matrix of agent $i$, which depends mainly on its sampling strategy $\{X^i_n\}$. This allows us to show that improving \textit{individual} agents' sampling strategy can reduce the covariance in CLT, which in turn implies a smaller mean-square error (MSE) for large time $n$. This is a significant advancement over previous finite-sample bounds that only account for the worst-performing agent and do not fully capture the effect of individual agent dynamics on overall system performance. Our CLT result \eqref{eqn:PR_CLT} also treats recent findings in \cite{hu2022efficiency} as a very special case with $N=1$, where the relationship therein between the sampling efficiency of the Markov chain and the limiting covariance matrix in the CLT of vanilla SGD, can carry over to our UD-SGD.

    $\bullet$~~We demonstrate that our analysis supports recent findings from studies such as \cite{khodadadian2022federated}, which exhibited linear speedup scaling with the number of agents under LSGD-FP with Markovian sampling; and \cite{pu2020asymptotic,olshevsky2022asymptotic}, which examined the notion of `asymptotic network independence' for DSGD with \textit{i.i.d.} sampling, where the convergence of the algorithm \eqref{eqn:GD-SGD} at large time $n$ depends solely on the left eigenvector of $\mW_n$ ($\frac{1}{N}\vone$ considered in this paper) rather than the specific communication network topology encoded in $\mW_n$, but now under Markovian sampling. We extend these findings in view of CLT to a broader range of communication patterns $\{\mW_n\}$ and general sampling strategies $\{X^i_n\}$.

    $\bullet$~~We conduct numerical experiments using logistic regression and neural network training with several choices of agents' sampling strategies, including a recently proposed one via nonlinear Markov chain \cite{doshi2023self}. Our results uncover a key phenomenon: a handful of compliant agents adopting highly efficient sampling strategies can match or exceed the performance of the majority using moderately improved strategies. This finding is crucial for practical optimization in large-scale learning systems, moving beyond the current literature that only considers the worst-performing agent in more restrictive settings.

\section{Preliminaries}
\textbf{Basic Notations:}
We use $\|\vv\|$ to indicate the Euclidean norm of a vector $\vv \in \R^d$ and $\|\mM\|$ to indicate the spectral norm of a matrix $\mM \in \R^{d \times d}$. The identity matrix of dimension $d$ is denoted by $\mI_d$, and the all-one (resp. all-zero) vector of dimension $N$ is denoted by $\vone$ (resp. $\vzero$). Let $\mJ \triangleq \vone\vone^T/N$. The diagonal matrix with the entries of $\vv$ on the main diagonal is written as $\text{diag}(\vv)$. We also use `$\succeq$' for Loewner ordering such that $\mA \succeq \mB$ is equivalent to $\vx^T(\mA - \mB)\vx \geq 0$ for any $\vx \in \R^d$.

\textbf{Asymptotic Covariance Matrix:} 
Asymptotic variance is a widely used metric for evaluating the second-order properties of Markov chains associated with a scalar-valued test function in the MCMC literature, e.g., Chapter 6.3 \cite{bremaud2013markov}, and asymptotic covariance matrix is its multivariate version for a vector-valued function. Specifically, we consider a finite, irreducible, aperiodic and positive recurrent (ergodic) Markov chain $\{X_n\}_{n \geq 0}$ with transition matrix $\mP$ and stationary distribution $\vpi$, and the estimator $\hat \mu_n(\vg) \triangleq \frac{1}{n}\sum_{s = 0}^{n-1} \vg(X_s)$ for any vector-valued function $\vg:[N]\to\R^d$. According to the ergodic theorem \cite{bremaud2013markov,HandbookMCMC}, we have $\lim_{n\to\infty}\hat \mu_n(\vg) = \E_{\vpi}(\vg)$ a.s.. As defined in \cite{HandbookMCMC,hu2022efficiency}, the asymptotic covariance matrix $\mSigma_X(\vg)$ for a vector-valued function $\vg(\cdot)$ is given by
\begin{equation}\label{eqn:asymptotic covariance matrix}
    \mSigma_X(\vg) \!\triangleq\! \lim_{n\to\infty}\!n\cdot\text{Var}(\hat \mu_n(\vg)) \!=\! \lim_{n\to\infty}\frac{1}{n}\cdot\E\left\{\Delta_n\Delta_n^T\right\},
\end{equation}
where $\Delta_n \triangleq \sum_{s=0}^{n-1}(\vg(X_s) - \E_{\vpi}(\vg))$. By following the algebraic manipulations in \cite[Theorem 6.3.7]{bremaud2013markov} for asymptotic variance (univariate version), we can rewrite \eqref{eqn:asymptotic covariance matrix} in a matrix form such that
\begin{equation}\label{eqn:asymptotic_covariance_matrix_matrix_form}
    \mSigma_X(\vg) = \mG^T\text{diag}(\vpi)\left(\mZ- \mI_N + \vone\vpi^T \right)\mG,
\end{equation}
where $\mG \triangleq [\vg(1), \cdots, \vg(N)]^T \in \R^{N\times d}$ and $\mZ \triangleq [\mI_N - \mP + \vone\vpi^T]^{-1}$. This matrix form explicitly shows the dependence on the transition matrix $\mP$ and its stationary distribution $\vpi$, and will be utilized in our \cref{theorem: CLT_D-SGD}.

\textbf{Model Description:}
The UD-SGD in \eqref{eqn:GD-SGD} can be expressed in a compact iterative form, i.e., we have
\begin{equation}\label{eqn:GD-SGD_each_node}
\textstyle    \theta_{n+1}^i = \sum_{j=1}^N w_{n}(i,j)(\theta_{n}^j - \gamma_{n+1} \nabla F_j(\theta_{n}^j, X^j_{n})),
\end{equation}
at each time $n$, where each agent $i$ samples according to its own Markovian trajectory $\{X^i_n\}_{n\geq 0}$ with stationary distribution $\pi_i$ such that $\E_{X \!\sim\! \pi_i}[F_i(\theta,X)] \!=\! f_i(\theta)$. Let $K_l$ denote the communication interval between the $(l-1)$-th and $l$-th aggregation among $N$ agents, and $n_l\triangleq \sum_{m=1}^{l} K_m$ be the time instance for the $l$-th aggregation. We also define $\tau_n \triangleq \min_{l}\{l: n_l \geq n\}$ as the index of the upcoming aggregation at time $n$ such that $K_{\tau_n}$ indicates the communication interval for the $\tau_n$-th aggregation, or more precisely, the length of the communication interval that includes the time index $n$. The communication pattern follows that $\mW_n = \mI_n$ if $n \neq n_l$ and $\mW_n = \mW$ otherwise for $l\geq 1$, where the examples of $\mW$ will be discussed in \cref{subsection:examples_of_W}. Note that i) when $K_l = 1$, \eqref{eqn:GD-SGD_each_node} reduces to DSGD; ii) when $K_l = K > 1$, \eqref{eqn:GD-SGD_each_node} becomes the local SGD in FL. iii) When $K_l$ increases with $l$, we recover some choices of $K_l$ studied in \cite{li2022stat} beyond LSGD-RC with \textit{i.i.d.} sampling. This increasing communication interval aims to further reduce the frequency of aggregation among agents for lower communication costs, but now under a Markovian sampling setting and a wider range of communication patterns. We below state the assumptions needed for the main theoretical results.

\begin{assumption}[Regularity of the gradient]\label{assumption:Regularity of the gradient}
For each $i\in [N]$ and $X \in \cX^i$, the function $F_i(\theta,X)$ is L-smooth in terms of $\theta$, i.e., for any $\theta_1,\theta_2 \in \R^d$, \vspace{-1mm}
\begin{equation}\label{eqn:lipschitz_continuity}
    \|\nabla F_i(\theta_1, X) - \nabla F_i(\theta_2, X) \| \leq L \|\theta_1 -\theta_2\|.
\end{equation}
In addition, we assume that the objective function $f$ is twice continuously differentiable and $\mu$-strongly convex only around the local minima $\theta^* \in \cL$, i.e., \vspace{-1mm}
\begin{equation}\label{eqn:Hessian_of_f}
    \mH \triangleq \nabla^2 f(\theta^*) \succeq \mu \mI_d.
\end{equation}
\end{assumption}\vspace{-1mm}
\cref{assumption:Regularity of the gradient} imposes the regularity conditions on the gradient $\nabla F_i(\cdot, X)$ and Hessian matrix of the objective function $f(\cdot)$, as is commonly assumed in \cite{borkar2009stochastic,kushner2003stochastic,fort2015central,hu2022efficiency}. Note that \eqref{eqn:lipschitz_continuity} requires per-sample Lipschitzness of $\nabla F_i$ and is stronger than the Lipschitzness of its expected version $\nabla f_i$, which is commonly assumed under \textit{i.i.d} sampling setting \cite{wang2020tackling,Li2020On,fraboni2022general}. However, we remark that this is in line with previous work on DSGD and LSGD-FP under Markovian sampling as well \cite{wai2020convergence,khodadadian2022federated,zeng2022finite}, because $\nabla F_i(\theta,X)$ is no longer the unbiased stochastic version of $\nabla f_i(\theta)$ and the effect of $\{X^i_n\}$ has to be taken into account in the analysis. The local strong convexity at the minimizer is commonly assumed to analyze the convergence of the algorithm under both asymptotic and non-asymptotic analysis \cite{borkar2009stochastic,fort2015central,hu2022efficiency,kushner2003stochastic,li2023online,zeng2022finite}.

\begin{assumption}[Ergodicity of Markovian sampling]\label{assumption:Ergodicity of Markovian sampling}
    $\{X^i_n\}_{n\geq 0}$ is an ergodic Markov chain with stationary distribution $\vpi_i$ such that $\E_{X \sim \vpi_i}[F_i(\theta,X)]= f_i(\theta)$, and is independent from $\{X^j_n\}_{n\geq 0}, j \neq i$. \vspace{-1mm}
\end{assumption}
The ergodicity of the underlying Markov chains, as stated in \cref{assumption:Ergodicity of Markovian sampling}, is commonly assumed in the literature \cite{duchi2012ergodic,sun2018markov,zeng2022finite,khodadadian2022federated,hu2022efficiency}. This assumption ensures the asymptotic unbiasedness of the loss function $F_i(\theta, \cdot)$, which takes \textit{i.i.d.} sampling as a special case.

\begin{assumption}[Decreasing step size and slowly increasing communication interval]\label{assumption:Decreasing step size and slowly increasing communication interval}
    i) For bounded communication interval $K_{\tau_n} \leq K, \forall n$, we assume the polynomial step size $\gamma_n = 1/n^a$ and $a \in (0.5,1]$; Or ii) If $K_{\tau_n}\to \infty$ as $n\to\infty$, we assume $\gamma_n = 1/n$ and define $\eta_n = \gamma_n K^{L+1}_{\tau_n}$, where the sequence $\{K_l\}_{l\geq 0}$ satisfies $\sum_n \eta^2_n < \infty$, $K_{\tau_n} = o(\gamma_n^{-1/2(L+1)})$, and $\lim_{l\to\infty} \eta_{n_l+1}/\eta_{n_{l+1}+1} = 1$. \vspace{-1mm}
\end{assumption}
In \cref{assumption:Decreasing step size and slowly increasing communication interval}, the polynomial step size $\gamma_n$ is standard in the literature and it has the property $\sum_n \gamma_n = \infty$, $\sum_n \gamma^2_n < \infty$ \cite{chen2020explicit,hu2022efficiency}. Inspired by \cite{li2022stat},  we introduce $\eta_n$ to control the step size within each $l$-th communication interval with length $K_l$ to restrict the growth of $K_l$. Specifically, $\sum_n \eta^2_n < \infty$ and $K_{\tau_n} = o(\gamma_n^{-1/2(L+1)})$ ensure that $\eta_n\to 0$ and $K_{\tau_n}$ does not increase too fast in $n$. $\lim_{l\to\infty} \eta_{n_l+1}/\eta_{n_{l+1}+1} = 1$ sets the restriction on the increment from $n_l$ to $n_{l+1}$. Several practical forms of $K_l$ suggested by \cite{li2022stat}, including $K_l \sim \log(l)$ and $K_l \sim \log\log(l)$, also satisfy \cref{assumption:Decreasing step size and slowly increasing communication interval}-ii). We defer to \cref{appendix:discussion_of_Kl} the mathematical verification of these two types of $K_l$, together with the practical implications of increasing communication interval $K_l$.

\begin{remarknn}\label{remark:1}
In \cref{assumption:Decreasing step size and slowly increasing communication interval}, we incorporate an increasing communication interval along with a step size $\gamma_n = 1/n$. This complements the choice of step size $\gamma_n$ in \cite[Assumption $3.3$]{li2022stat}, where $\gamma_n = 1/n^a$ for $a\in (0.5,1)$. It is important to note, however, that the increasing communication interval specified in \cite[Assumption $3.2$]{li2022stat} is applicable only in \textit{i.i.d} sampling. Under the Markovian sampling framework, the expression $\nabla F_i(\theta,X) - \nabla f_i(\theta)$ loses its unbiased and Martingale difference properties. Consequently, the Martingale CLT application as utilized by \cite{li2022stat} does not directly extend to Markovian sampling. To address this, we adapted techniques from \cite{fort2015central,morral2017success} to accommodate the increasing communication interval within the Markovian sampling setting and various communication patterns. This adaptation necessitates $\gamma_n = 1/n$, a specification not covered in \cite{li2022stat}.  Exploring more general forms of $K_l$ that could relax this assumption is outside the scope of our current study.
\end{remarknn}

\begin{assumption}[Stability on model parameter]\label{assumption:Boundedness on model parameter}
    We assume $\sup_{n}\|\theta_n^i\|< \infty$ almost surely $\forall i\in[N]$.
\end{assumption}\vspace{-1mm}
\cref{assumption:Boundedness on model parameter} claims that the sequence of $\{\theta_n^i\}$ always remains in a path-dependent compact set. It is to ensure the stability of the algorithm that serves the purpose of analyzing the convergence, which is often assumed under the asymptotic analysis of vanilla SGD with Markovian noise \cite{delyon1999convergence,fort2015central,li2023online}. As mentioned in \cite{morral2017success,vidyasagar2022convergence}, checking \cref{assumption:Boundedness on model parameter} is challenging and requires case-by-case analysis, even under \textit{i.i.d.} sampling. Only recently the stability of SGD under Markovian sampling has been studied in \cite{borkar2021ode}, but the result for UD-SGD remains unknown in the literature. Thus, we analyze each agent's sampling strategy in the asymptotic regime under this stability condition. 

\begin{assumption}[Contraction property of communication matrix]\label{assumption:Contraction property of communication matrix}
    i). $\{\mW_{n}\}_{n\geq 0}$ is independent of the sampling strategy $\{X_n^i\}_{n\geq 0}$ for all $i\in[N]$ and is assumed to be doubly-stochastic for all $n$;    
    ii). At each aggregation step $n_l$, $\mW_{n_l}$ is independently generated from some distribution $\cP_{n_l}$ such that $\|\E_{\mW \!\sim\cP_{n_l}}\![\mW^T\mW] \!-\! \mJ\| \!\leq\! C_1 \!<\! 1$ for some constant $C_1$.
\end{assumption}
The doubly-stochasticity of $\mW_n$ in \cref{assumption:Contraction property of communication matrix}-i) is widely assumed in the literature \cite{mathkar2016nonlinear,doan2019finite,koloskova2020unified,zeng2022finite}. \cref{assumption:Contraction property of communication matrix}-ii) is a contraction property to ensure that agents employing UD-SGD will asymptotically achieve the consensus, which is also common in \cite{bianchi2013performance,doan2019finite,zeng2022finite}. Examples of $\mW$ that satisfy \cref{assumption:Contraction property of communication matrix}-ii), e.g., Metropolis-Hasting matrix, partial agent participation in FL, are deferred to Appendix \ref{subsection:examples_of_W} due to space constraint.

\section{Asymptotic Analysis of UD-SGD}
\textbf{Almost Sure Convergence:} 
Let $\theta_n \triangleq \frac{1}{N}\sum_{i=1}^N \theta_n^i$ represent the consensus among all the agents at time $n$, we establish the asymptotic consensus of the local parameters ${\theta_n^i}$, as stated in Lemma \ref{lemma:consensus}. 
\begin{lemma}\label{lemma:consensus}
Under Assumptions \ref{assumption:Regularity of the gradient}, \ref{assumption:Decreasing step size and slowly increasing communication interval}, \ref{assumption:Boundedness on model parameter} and \ref{assumption:Contraction property of communication matrix}, the consensus error $\theta_n^i \!-\! \theta_n$ diminishes to zero at the rate specified below: Almost surely, for every agent $i \in [N]$, \vspace{-1mm}
\begin{equation}\label{eqn:consensus_lim}
    \left\|\theta_n^i \!-\! \theta_n\right\| \!=\! \begin{cases}
        O(\gamma_n) & \text{under Assum. \ref{assumption:Decreasing step size and slowly increasing communication interval}-i)}, \\ O(\eta_n) & \text{under Assum. \ref{assumption:Decreasing step size and slowly increasing communication interval}-ii)}.
    \end{cases}
\end{equation}
\end{lemma}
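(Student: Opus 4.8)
The plan is to reduce the per-agent error $\|\theta_n^i-\theta_n\|$ to the norm of a single consensus-error matrix and to derive a self-contained recursion for it. Stack the local iterates into $\Theta_n\in\R^{N\times d}$ whose $i$-th row is $(\theta_n^i)^T$, and let $\mG_n\in\R^{N\times d}$ have $j$-th row $\nabla F_j(\theta_n^j,X_n^j)^T$, so that \eqref{eqn:GD-SGD_each_node} reads $\Theta_{n+1}=\mW_n(\Theta_n-\gamma_{n+1}\mG_n)$. Writing the consensus error as $\Phi_n\triangleq(\mI_N-\mJ)\Theta_n$, whose $i$-th row is $(\theta_n^i-\theta_n)^T$ so that $\max_i\|\theta_n^i-\theta_n\|\le\|\Phi_n\|$, the double stochasticity in \cref{assumption:Contraction property of communication matrix} gives $\mJ\mW_n=\mW_n\mJ=\mJ$ and hence $(\mI_N-\mJ)\mW_n=\mW_n-\mJ=(\mW_n-\mJ)(\mI_N-\mJ)$. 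Substituting yields the clean recursion
\[ \Phi_{n+1}=\mW_n\Phi_n-\gamma_{n+1}(\mW_n-\mJ)\mG_n, \]
because $(\mW_n-\mJ)\mJ=0$ kills the consensus component. It therefore suffices to show $\|\Phi_n\|=O(\gamma_n)$ (resp. $O(\eta_n)$).

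Next I would establish that the forcing term is almost surely bounded: by \cref{assumption:Boundedness on model parameter} the iterates lie in a path-dependent compact set, and by the $L$-smoothness in \cref{assumption:Regularity of the gradient} together with the finiteness of each $\cX_i$, the norms $\|\nabla F_j(\theta_n^j,X_n^j)\|$ are bounded, so $\sup_n\|\mG_n\|\le G<\infty$ a.s. The contraction then enters only at aggregation times: on non-aggregation steps $\mW_n=\mI_N$ is non-expansive on the range of $\mI_N-\mJ$, while at an aggregation step, using $(\mW_{n_l}-\mJ)^T(\mW_{n_l}-\mJ)=\mW_{n_l}^T\mW_{n_l}-\mJ$ and \cref{assumption:Contraction property of communication matrix}-ii), $\E[\|(\mW_{n_l}-\mJ)\Phi\|^2\mid\Phi]\le\|\E[\mW^T\mW]-\mJ\|\,\|\Phi\|^2\le C_1\|\Phi\|^2$ with $C_1<1$. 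Hence over one full communication interval the linear propagator contracts the error in mean square by a factor at most $C_1$.

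I would then unroll the recursion across interval boundaries. Iterating from the last aggregation expresses $\Phi_n$ as the contracted previous error plus the step-size-weighted, projected gradient increments accumulated within the current interval. For case i) ($K_{\tau_n}\le K$), the slow variation $\gamma_{s+1}\asymp\gamma_n$ over a window of length $O(K)$ and the a.s. bound $G$ make each interval contribute $O(K\gamma_n)$, and summing the geometric series in $C_1$ over earlier intervals gives $\E\|\Phi_n\|^2=O(\gamma_n^2)$; the almost-sure rate $\|\Phi_n\|=O(\gamma_n)$ then follows from the boundedness of the scaled error $\Phi_n/\gamma_n$ (\cref{lemma:boundedness_phi}), itself obtained by a Robbins--Siegmund/supermartingale argument using $\sum_n\gamma_n^2<\infty$. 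For case ii) the interval length $K_{\tau_n}\to\infty$, so within an interval the error is only non-expansive and the gradient discrepancies accumulate while the iterates drift; bounding $\nabla F_j(\theta_s^j,X_s^j)$ along the drifting trajectory by iterating the $L$-smoothness estimate across the interval produces the polynomial factor $K_{\tau_n}^{L+1}$, i.e. a per-interval contribution $O(\gamma_nK_{\tau_n}^{L+1})=O(\eta_n)$. The conditions $K_{\tau_n}=o(\gamma_n^{-1/2(L+1)})$, $\sum_n\eta_n^2<\infty$ and $\lim_l\eta_{n_l+1}/\eta_{n_{l+1}+1}=1$ guarantee that $\eta_n\to0$ slowly and regularly enough that these per-interval contributions, discounted by the contraction at each aggregation, telescope to $\|\Phi_n\|=O(\eta_n)$ a.s.

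I expect the main obstacle to be case ii): quantifying how the consensus error compounds across the growing local interval before the next contraction can reabsorb it. Because no mixing occurs inside the interval, the naive bound degrades as the interval lengthens, and one must show that the precise growth is no worse than $\eta_n=\gamma_nK_{\tau_n}^{L+1}$ and that the cross-interval recursion, with a contraction applied only once per (growing) interval, still converges --- this is exactly where the slow-variation condition $\eta_{n_l+1}/\eta_{n_{l+1}+1}\to1$ is needed. A secondary difficulty is upgrading the mean-square bound to an almost-sure rate when $\mW$ is random (e.g. partial participation), where no pathwise contraction is available; this is handled through the scaled-error boundedness of \cref{lemma:boundedness_phi} and the summability $\sum_n\eta_n^2<\infty$.
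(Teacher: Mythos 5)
Your proposal follows essentially the same route as the paper's proof: the same projected recursion $\Phi_{n+1}=(\mW_n-\mJ)\Phi_n-\gamma_{n+1}(\mW_n-\mJ)\mG_n$ with contraction (via $\|\E[\mW^T\mW]-\mJ\|\le C_1<1$) acting only at aggregation times, the per-interval gradient growth obtained by iterating $L$-smoothness to produce the $K_{\tau_n}^{L+1}$ factor and hence the normalization $\eta_n$, the slow-variation condition $\eta_{n_l+1}/\eta_{n_{l+1}+1}\to1$ to keep the cross-interval recursion contractive, and the scaled-error boundedness of \cref{lemma:boundedness_phi} combined with \cref{assumption:Boundedness on model parameter} (localization over increasing compact sets) to reach the almost-sure rate. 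One minor correction: the paper proves \cref{lemma:boundedness_phi} not via Robbins--Siegmund with $\sum_n\gamma_n^2<\infty$, but via the uniform contraction recursion $\E[\|\phi_{n_{l+1}}\|^2\mid\cF_{n_l}]\le\delta_2\|\phi_{n_l}\|^2+M_\Omega$ with $\delta_2<1$; Robbins--Siegmund would not apply directly here since the per-interval forcing term $M_\Omega$ is a constant rather than summable, and the step-size summability is used elsewhere (for the martingale noise in the convergence and CLT arguments), not for this lemma.
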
\vspace{-3mm}
\cref{lemma:consensus} indicates that all agents asymptotically reach consensus at a rate of $O(\gamma_n)$ (or $O(\eta_n)$). This finding extends the scope of \cite[Proposition 1]{morral2017success}, incorporating considerations for Markovian sampling, FL settings, and increasing communication interval $K_l$. The proof, detailed in Appendix \ref{appendix:proof_of_consensus}, primarily tackles the challenge of establishing the boundedness of the sequences $\{\gamma_n^{-1}(\theta_n^i-\theta_n)\}$ (or $\{\eta_n^{-1}(\theta_n^i-\theta_n)\}$) almost surely for all $i \in [N]$. This is specifically analyzed in Lemma \ref{lemma:boundedness_phi}.  Next, with additional \cref{assumption:Ergodicity of Markovian sampling}, we are able to obtain the almost sure convergence of $\theta_n$ to $\theta^* \in \cL$.
\begin{theorem}\label{theorem: almost_sure_convergence}
    Under Assumptions \ref{assumption:Regularity of the gradient} - \ref{assumption:Contraction property of communication matrix},  the consensus $\theta_n$ converges to $\cL$ almost surely, i.e.,\vspace{-1mm}
    \begin{equation}
        \textstyle\limsup_{n} \inf_{\theta^* \in \mathcal{L}}\left\|\theta_n - \theta^*\right\| = 0 ~~~~\text{a.s.}
    \end{equation}
\end{theorem}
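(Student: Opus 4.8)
The plan is to collapse the multi-agent recursion into a single stochastic-approximation (SA) iteration for the average $\theta_n$ and then run the ODE method. First I would average the compact update \eqref{eqn:GD-SGD_each_node} over $i\in[N]$ and use that $\mW_n$ is doubly stochastic, so that $\sum_{i}w_n(i,j)=1$ for every $j$ and the aggregation step \eqref{eqn:GD-SGD-aggregation} leaves the average invariant; this yields
\begin{equation*}
\theta_{n+1} = \theta_n - \gamma_{n+1}\cdot\tfrac{1}{N}\sum_{j=1}^N \nabla F_j(\theta_n^j, X_n^j).
\end{equation*}
The associated mean-field ODE is $\dot\theta=-\nabla f(\theta)$, whose stationary set is precisely $\cL$, so the goal becomes showing that the iterates track this flow and accumulate on $\cL$. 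I would then split the driving term around the true gradient,
\begin{equation*}
\tfrac{1}{N}\sum_{j}\nabla F_j(\theta_n^j,X_n^j) = \nabla f(\theta_n) + \tfrac{1}{N}\sum_{j}\big(\nabla f_j(\theta_n^j)-\nabla f_j(\theta_n)\big) + \tfrac{1}{N}\sum_{j}\big(\nabla F_j(\theta_n^j,X_n^j)-\nabla f_j(\theta_n^j)\big),
\end{equation*}
isolating a consensus-error term and a Markov-noise term.

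The consensus-error term is immediate: by \cref{assumption:Regularity of the gradient} each $\nabla f_j$ is $L$-Lipschitz, so its norm is at most $L\cdot\tfrac{1}{N}\sum_j\|\theta_n^j-\theta_n\|$, which by \cref{lemma:consensus} is $O(\gamma_n)$ (resp.\ $O(\eta_n)$). Weighted by $\gamma_{n+1}$ this is summable, since $\sum_n\gamma_n^2<\infty$ under \cref{assumption:Decreasing step size and slowly increasing communication interval}-i), and $\sum_n\gamma_n\eta_n\le\sum_n\eta_n^2<\infty$ under \cref{assumption:Decreasing step size and slowly increasing communication interval}-ii) because $\eta_n\ge\gamma_n$.

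The heart of the argument is the Markov-noise term, handled via the Poisson equation. Since the iterates remain in a path-dependent compact set by \cref{assumption:Boundedness on model parameter}, and each chain $\{X_n^j\}$ is ergodic with stationary mean $\nabla f_j(\theta)$ by \cref{assumption:Ergodicity of Markovian sampling}, for fixed $\theta$ the Poisson equation $\hat G_j(\theta,\cdot)-\mP_j\hat G_j(\theta,\cdot)=\nabla F_j(\theta,\cdot)-\nabla f_j(\theta)$ admits a solution $\hat G_j$ that is bounded and Lipschitz in $\theta$, the latter inherited from \cref{assumption:Regularity of the gradient}. Writing $\nabla F_j(\theta_n^j,X_n^j)-\nabla f_j(\theta_n^j)=\hat G_j(\theta_n^j,X_n^j)-(\mP_j\hat G_j(\theta_n^j,\cdot))(X_n^j)$ and inserting $\pm\hat G_j(\theta_n^j,X_{n+1}^j)$ splits the noise into a martingale difference $\hat G_j(\theta_n^j,X_{n+1}^j)-(\mP_j\hat G_j(\theta_n^j,\cdot))(X_n^j)$ and a remainder $\hat G_j(\theta_n^j,X_n^j)-\hat G_j(\theta_n^j,X_{n+1}^j)$. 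The martingale part, weighted by $\gamma_{n+1}$ and bounded, has square-summable increments and thus its cumulative sum converges a.s.; the remainder, after rewriting $\hat G_j(\theta_n^j,X_{n+1}^j)=\hat G_j(\theta_{n+1}^j,X_{n+1}^j)-\big(\hat G_j(\theta_{n+1}^j,X_{n+1}^j)-\hat G_j(\theta_n^j,X_{n+1}^j)\big)$ and Abel summation, reduces to boundary terms, increments controlled by $|\gamma_{n+1}-\gamma_n|$, and Lipschitz increments bounded by $\|\theta_{n+1}^j-\theta_n^j\|=O(\gamma_n)$, all summable. Independence of the chains across agents (\cref{assumption:Ergodicity of Markovian sampling}) lets the per-agent martingale parts be treated jointly.

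Collecting these estimates shows that $\theta_n$ obeys an SA recursion $\theta_{n+1}=\theta_n-\gamma_{n+1}\nabla f(\theta_n)+\gamma_{n+1}(e_n+\delta M_n)$, where $\sum_n\gamma_{n+1}e_n$ converges and $\sum_n\gamma_{n+1}\delta M_n$ converges a.s. With $f$ serving as a Lyapunov function for $\dot\theta=-\nabla f(\theta)$ and $\sum_n\gamma_n=\infty$, a Robbins--Siegmund / Kushner--Clark ODE argument yields $\limsup_n\inf_{\theta^*\in\cL}\|\theta_n-\theta^*\|=0$ a.s., with local strong convexity near each minimizer (\cref{assumption:Regularity of the gradient}) confining the limit points to $\cL$. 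I expect the Markov-noise step to be the main obstacle: because the sampling is Markovian and $\theta_n^j$ drifts while being coupled across agents through aggregation \eqref{eqn:GD-SGD-aggregation}, the Poisson solution must be tracked as $\theta$ changes, and it is precisely its Lipschitz regularity in $\theta$ combined with the $O(\gamma_n)$ per-step movement that makes the remainder terms summable.
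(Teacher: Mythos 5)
Your proposal follows essentially the same route as the paper: average out the doubly-stochastic aggregation to get a single recursion for $\theta_n$, split the driving term into a consensus-error part (controlled by Lemma \ref{lemma:consensus}) and a Markovian-noise part, decompose the latter via the Poisson equation into a martingale difference, a telescoping remainder handled by Abel summation, and a Lipschitz increment, and then invoke a Kushner--Clark/Delyon-type SA convergence theorem with $V=f$. This is exactly the paper's decomposition $e^i_{n+1} + (\nu^i_n - \nu^i_{n+1}) + \xi^i_{n+1}$ in \eqref{eqn:48}, up to a trivial shift in time-indexing, and your stronger summability requirement for the consensus term is fine (the paper's cited SA theorem only needs that remainder to vanish, which is weaker).

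One imprecision worth flagging: you assert the Lipschitz-increment term is controlled by $\|\theta^j_{n+1}-\theta^j_n\|=O(\gamma_n)$, but this per-agent bound is not the naive SGD step bound --- at an aggregation instant, $\theta^j_{n+1}-\theta^j_n = \sum_k w_n(j,k)(\theta^k_n-\theta^j_n) - \gamma_{n+1}\sum_k w_n(j,k)\nabla F_k(\cdot)$, so the increment is dominated by the consensus error, not the step size. This is precisely why the paper does not bound $\|\theta^j_{n+1}-\theta^j_n\|$ directly and instead re-decomposes $\xi^i_{n+1}$ through the \emph{average} iterate, $\xi^i_{n+1} = [\mP_i m_{\theta^i_{n+1}}(X^i_n) - \mP_i m_{\theta_{n+1}}(X^i_n)] + [\mP_i m_{\theta_n}(X^i_n) - \mP_i m_{\theta^i_n}(X^i_n)] + [\mP_i m_{\theta_{n+1}}(X^i_n) - \mP_i m_{\theta_n}(X^i_n)]$, bounding the first two pieces by $\|\cJ_{\bot}\Theta_{n+1}\|+\|\cJ_{\bot}\Theta_n\|$ and the third by $\gamma_{n+1}$ times a bounded gradient average. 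Concretely, under Assumption \ref{assumption:Decreasing step size and slowly increasing communication interval}-ii) your claimed rate is wrong: the increment is $O(\eta_n)$, not $O(\gamma_n)$, since the consensus error only decays like $\eta_n = \gamma_n K^{L+1}_{\tau_n}$. The slip is repairable within your own argument --- Lemma \ref{lemma:consensus} gives the $O(\eta_n)$ bound, and $\sum_n \gamma_n \eta_n \le \sum_n \eta_n^2 < \infty$ (indeed $\eta_n \to 0$ alone suffices for the SA theorem) --- but as written the $O(\gamma_n)$ per-step-movement claim is unjustified and false in the increasing-interval case.
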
\vspace{-2mm}
\cref{theorem: almost_sure_convergence} is achieved by decomposing the Markovian noise term $\nabla F_i(\theta_n^i, X_n^i) - \nabla f_i(\theta_n^i)$, using the Poisson equation technique as discussed in \cite{benveniste2012adaptive,fort2015central,chen2020explicit}, into a Martingale difference noise term, along with additional noise terms. We then reformulate \eqref{eqn:GD-SGD_each_node} into an iteration akin to stochastic approximation, as depicted in \eqref{eqn:decomposition_DSGD}. The subsequent step involves verifying the conditions on these noise terms under our stated assumptions. Crucially, this theorem also establishes that UD-SGD ensures an almost sure convergence of each agent to a local minimum $\theta^* \in \cL$, even in scenarios where the communication interval $K_l$ gradually increases, in accordance with Assumption \ref{assumption:Decreasing step size and slowly increasing communication interval}-ii). The detailed proof of this theorem is provided in Appendix \ref{appendix:proof_of_a.s.convergence}.

\textbf{Central Limit Theorem:} 
Let $\mU_i \!\triangleq\! \mSigma_{X^i}(\nabla F_i(\theta^*,\cdot))$ represent the asymptotic covariance matrix (defined in \eqref{eqn:asymptotic_covariance_matrix_matrix_form}) associated with each agent $i \in [N]$, given their sampling strategy $\{X^i_n\}$ and function $\nabla F_i(\theta^*,\cdot)$. Define $\mU \triangleq \frac{1}{N^2}\sum_{i=1}^N \mU_i$. We assume the polynomial step-size $\gamma_n \!\sim\! \gamma_{\star}/n^a$, $a\!\in\!(0.5,1]$ and $\gamma_{\star} > 0$. In the case of $a = 1$, we further assume $\gamma_{\star} > 1/2\mu$, where $\mu$ is defined in \eqref{eqn:Hessian_of_f}. For notational simplicity, and without loss of generality, our remaining CLT result is stated while conditioning on the event that $\{\theta_n\to\theta^*\}$ for some $\theta^*\in\mathcal{L}$. 
\begin{theorem}\label{theorem: CLT_D-SGD}
    Let Assumptions \ref{assumption:Regularity of the gradient} - \ref{assumption:Contraction property of communication matrix} hold. Then,\vspace{-1mm}
    \begin{equation}\label{eqn:GD-SGD_CLT}
        \gamma_n^{-1/2}(\theta_n - \theta^*) \xrightarrow[n\to\infty]{dist.} \cN(\vzero,\mV),
    \end{equation}
    where the limiting covariance matrix $\mV$ is in the form of
    \begin{equation}\label{eqn:closed_form_V}
       \textstyle \mV = \int_{0}^{\infty} e^{\mM t}\mU e^{\mM t}dt.
    \end{equation}
    Here, we have $\mM = -\mH$ if $a \in (0.5,1)$, or $\mM = \mI_d/2\gamma_{\star} - \mH$ if $a=1$, where $\mH$ is defined in \eqref{eqn:Hessian_of_f}.
    
    Moreover, let $\Bar{\theta}_n = \frac{1}{n}\sum_{s=0}^{n-1} \theta_s$ and $\mV' = \mH^{-1} \mU \mH^{-1}$. For $a \in (0.5,1)$, we have\vspace{-1mm}
    \begin{equation}\label{eqn:GD-SGD_CLT_PR_AVG}
       \sqrt{n}(\bar{\theta}_n - \theta^*) \xrightarrow[n\to\infty]{dist.} \cN(\vzero,\mV'),
    \end{equation}
\end{theorem}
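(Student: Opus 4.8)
The plan is to collapse the coupled $N$-agent system into a single stochastic-approximation recursion on the consensus iterate $\theta_n$ and then invoke a CLT for stochastic approximation driven by Markovian noise. The first step exploits that each $\mW_n$ is doubly-stochastic, so its column sums equal one; averaging \eqref{eqn:GD-SGD_each_node} over $i\in[N]$ annihilates the communication matrix and leaves the clean recursion
\begin{equation}
\textstyle \theta_{n+1} = \theta_n - \frac{\gamma_{n+1}}{N}\sum_{j=1}^N \nabla F_j(\theta_n^j, X_n^j).
\end{equation}
I would then split the averaged gradient into three pieces: the mean field $\nabla f(\theta_n)$, a consensus-error term $\frac{1}{N}\sum_j[\nabla f_j(\theta_n^j)-\nabla f_j(\theta_n)]$, and a Markovian-noise term $\frac{1}{N}\sum_j[\nabla F_j(\theta_n^j,X_n^j)-\nabla f_j(\theta_n^j)]$. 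By the $L$-smoothness of \cref{assumption:Regularity of the gradient} and the consensus rates of \cref{lemma:consensus}, the middle piece is $O(\gamma_n)$ (resp.\ $O(\eta_n)$), hence negligible after the CLT rescaling by $\gamma_n^{-1/2}$.

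The heart of the argument is the Markovian-noise term. For each agent $i$ I would introduce the solution $\hat\nu_i(\theta,\cdot)$ of the Poisson equation $\hat\nu_i-\mP_i\hat\nu_i=\nabla F_i(\theta,\cdot)-\nabla f_i(\theta)$ associated with the kernel $\mP_i$, and write $\nabla F_i(\theta_n^i,X_n^i)-\nabla f_i(\theta_n^i)$ as a martingale-difference term $\hat\nu_i(\theta_n^i,X_{n+1}^i)-(\mP_i\hat\nu_i)(\theta_n^i,X_n^i)$ plus telescoping and cross terms. Using the Lipschitz regularity of $\hat\nu_i$ in $\theta$ (inherited from \cref{assumption:Regularity of the gradient}), Abel summation, and the almost-sure convergence $\theta_n^i\to\theta^*$ from \cref{theorem: almost_sure_convergence}, I would bound the telescoping and cross terms by $o(\sqrt{\gamma_n})$ once summed against the step sizes, so that only the martingale part survives. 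Taylor expanding $\nabla f(\theta_n)=\mH(\theta_n-\theta^*)+o(\|\theta_n-\theta^*\|)$ around the minimizer (where $\nabla f(\theta^*)=\vzero$) recasts the recursion in the standard form
\begin{equation}
\textstyle \theta_{n+1}-\theta^* = (\mI_d-\gamma_{n+1}\mH)(\theta_n-\theta^*) + \gamma_{n+1}\vw_n + \gamma_{n+1}\vr_n,
\end{equation}
with $\vw_n$ the zero-mean martingale noise and $\vr_n$ a remainder that vanishes in the relevant sense.

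To identify the limiting covariance I would use the independence of the agents' chains (\cref{assumption:Ergodicity of Markovian sampling}): the noise $\vw_n$ then has stationary covariance $\frac{1}{N^2}\sum_{i=1}^N\mU_i=\mU$, via the standard identity that the stationary covariance of the Poisson-equation martingale difference at $\theta^*$ equals the asymptotic covariance matrix $\mSigma_{X^i}(\nabla F_i(\theta^*,\cdot))$ of \eqref{eqn:asymptotic_covariance_matrix_matrix_form}. Since $\mH\succeq\mu\mI_d$ and $\gamma_n\sim\gamma_{\star}/n^a$, applying the Markov-driven stochastic-approximation CLT (e.g.\ \cite{fort2015central,morral2017success}) gives $\gamma_n^{-1/2}(\theta_n-\theta^*)\xrightarrow{dist.}\cN(\vzero,\mV)$, where $\mV$ solves the Lyapunov equation $\mM\mV+\mV\mM+\mU=\vzero$ and hence admits the integral form \eqref{eqn:closed_form_V}; the effective drift is $\mM=-\mH$ for $a\in(0.5,1)$ and $\mM=\mI_d/2\gamma_{\star}-\mH$ for $a=1$, and the assumption $\gamma_{\star}>1/2\mu$ is exactly what keeps $\mM$ Hurwitz so the integral converges. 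The averaged statement \eqref{eqn:GD-SGD_CLT_PR_AVG} then follows for $a\in(0.5,1)$ from the Polyak--Ruppert averaging CLT, whose limiting covariance is the classical $\mV'=\mH^{-1}\mU\mH^{-1}$.

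I expect the main obstacle to be the joint control of the consensus error and the Markovian remainder in the CLT scaling, most delicately under the increasing communication interval of \cref{assumption:Decreasing step size and slowly increasing communication interval}-ii), where the consensus error is only $O(\eta_n)$ with $\eta_n=\gamma_n K_{\tau_n}^{L+1}$ rather than $O(\gamma_n)$, and the Poisson decomposition must be carried out while the chains $\{X_n^i\}$ evolve over intervals of growing length $K_l$. Verifying that the accumulated remainder remains $o(\sqrt{\gamma_n})$ will rely delicately on the growth conditions $\sum_n\eta_n^2<\infty$, $K_{\tau_n}=o(\gamma_n^{-1/2(L+1)})$, and $\lim_{l\to\infty}\eta_{n_l+1}/\eta_{n_{l+1}+1}=1$. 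A secondary technical point is transferring the noise covariance from the moving evaluation points $\theta_n^i$ to the fixed $\theta^*$, which requires the Lipschitz continuity of the Poisson solutions together with the almost-sure consensus and convergence established in \cref{lemma:consensus} and \cref{theorem: almost_sure_convergence}.
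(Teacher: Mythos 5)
Your overall route is the paper's own: average \eqref{eqn:GD-SGD_each_node} using double stochasticity of $\mW_n$, split into mean field, consensus error, and Markovian noise, solve the per-agent Poisson equation $m_{\theta^i}-\mP_i m_{\theta^i}=\nabla F_i(\theta^i,\cdot)-\nabla f_i(\theta^i)$ to extract a martingale difference plus telescoping remainders, control everything at the $o(\sqrt{\gamma_n})$ scale via \cref{lemma:consensus} (with exactly the growth conditions of \cref{assumption:Decreasing step size and slowly increasing communication interval}-ii) you flag), and then invoke the stochastic-approximation CLT of \cite{fort2015central} together with the Lyapunov-equation representation of $\mV$ and the Polyak--Ruppert averaging theorem. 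This matches the paper's Appendix D step for step, including the use of agent independence to kill the cross terms $\E[e^i_{n+1}(e^j_{n+1})^T|\cF_n]=0$ so that the covariance aggregates as $\frac{1}{N^2}\sum_i\mU_i$.

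There is, however, one genuine gap: your verification of the covariance condition. The CLT you cite requires the decomposition $\E[e_{n+1}e_{n+1}^T|\cF_n]=\mU+\mD^{(A)}_n+\mD^{(B)}_n$ with $\mD^{(A)}_n\to 0$ a.s.\ and $\gamma_n\,\E\bigl[\bigl\|\sum_{k\leq n}\mD^{(B)}_k\bigr\|\bigr]\to 0$. In this setting $\E[e^i_{n+1}(e^i_{n+1})^T|\cF_n]$ is the \emph{random} matrix $G_i(\theta^i_n,X^i_{n-1})$, which depends on the current chain state; since the chains are never in stationarity at finite $n$, asserting that the noise ``has stationary covariance $\frac{1}{N^2}\sum_i\mU_i$ via the standard identity'' is the statement to be proved, not a verification. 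The paper closes this with a \emph{second} Poisson equation, now for the matrix-valued function $G_i(\theta,\cdot)-g_i(\theta)$ with $g_i(\theta)=\E_{x\sim\vpi_i}[G_i(\theta,x)]$: its solution $\varphi^i_{\theta}$ splits $G_i(\theta^i_n,X^i_{n-1})-g_i(\theta^*)$ into (i) $g_i(\theta^i_n)-g_i(\theta^*)$, which your Lipschitz-transfer remark does cover; (ii) a matrix martingale-difference part, bounded via Burkholder's inequality so that $\gamma_n\E\bigl[\bigl\|\sum_{k\leq n}\cdot\bigr\|\bigr]\lesssim\gamma_n\sqrt{n}\to 0$ (using $a>1/2$); and (iii) a telescoping part bounded through the Lipschitz continuity of $\varphi^i_{\theta}$ in $\theta$ and $\gamma_n\sum_{k\leq n}\gamma_k=O(n^{1-2a})\to 0$. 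Your sketch handles only the $\theta$-dependence of the conditional covariance and is silent on its $X$-dependence, which is the most technical new step of the proof. Two minor additional points: the identification $g_i(\theta^*)=\mSigma_{X^i}(\nabla F_i(\theta^*,\cdot))$ still needs the explicit computation in \eqref{eqn:asym_cov_theta_i} (and the paper slightly extends the cited CLT to positive semi-definite $\mU$); and for the averaged statement \eqref{eqn:GD-SGD_CLT_PR_AVG} one must check the extra condition $n^{-1/2}\sum_{k\leq n}r^{(1)}_k\to 0$, which holds because $a\in(0.5,1)$ forces the bounded-interval regime, whence $\eta_n=O(\gamma_n)$ and the sum is $O(n^{1/2-a})=o(1)$.
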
 \vspace{-2mm}
The proof, presented in \cref{appendix_proof_of_CLT}, addresses the technical challenges in deriving the CLT for UD-SGD, specifically the second-order conditions in decomposing the Markovian noise term, which is not present in the \textit{i.i.d.} sampling case \cite{morral2017success,koloskova2020unified,li2022stat}. We decompose $\nabla F_i(\theta_n, X_n^i) \!-\! \nabla f_i(\theta_n)$ into three parts in \eqref{eqn:48} using Poisson equation: $e_{n+1}^i, \nu_{n+1}^i,\xi_{n+1}^i$. The consensus error $\theta_n^i-\theta_n$ embedded in noise terms $e_{n+1}^i$ and $\xi_{n+1}^i$ is a new factor, whose characteristics have been quantified in our Lemma \ref{lemma:consensus} but are not present in the single-agent scenario analyzed as an application of stochastic approximation in \cite{delyon2000stochastic,fort2015central}. The specifics of this analysis are expanded upon in Appendices \ref{section:C1_C3} to \ref{section:C5}. We require $\gamma_{\star}\!>\!1/2\mu$ for $a\!=\!1$ to ensure that the largest eigenvalue of $\mM$ is negative, as this is a necessary condition for the existence of $\mV$ in \eqref{eqn:closed_form_V} (otherwise integration diverges). In the case where there is only one agent ($N\!=\!1$), $\mV$ and $\mV'$ reduce to the matrices specified in the CLT result of vanilla SGD \cite{fort2015central,hu2022efficiency,li2023online}. In addition, for a special case of constant communication interval in Assumption \ref{assumption:Decreasing step size and slowly increasing communication interval}-i) and \textit{i.i.d.} sampling as shown in Table \ref{tab:related_works}, we recover the CLT of LSGD-RC in \cite{li2022stat}. See \cref{appendix:CLT_result_discussion} for detailed discussions. 

Theorem \ref{theorem: CLT_D-SGD} has significant implications for the MSE of $\{\theta_n\}$ for large time $n$, i.e., $\E[\|\theta_n \!- \theta^*\|^2] \!\!=\! \sum_{i=1}^d\ve_i^T\E[(\theta_n \!- \theta^*)(\theta_n \!- \theta^*)^T]\ve_i \!\approx\! \gamma_n\sum_{i=1}^d\ve_i^T\mV\ve_i = \gamma_n\text{Tr}(\mV)$, where $\ve_i$ is the $d$-dimensional vector of all zeros except $1$ at the $i$-th entry. This indicates that a smaller limiting covariance matrix $\mV$, according to the Loewner order, results in a smaller trace of $\mV$ and consequently in a reduced MSE for large $n$. Consideration for smaller $\mV$ will be presented in the next section, where agents have the opportunity to improve their sampling strategies.

\begin{remarknn}\label{remark:2}
Studies by \cite{pu2020asymptotic,olshevsky2022asymptotic} have shown that in DSGD with a fixed doubly-stochastic matrix $\mW$, the influence of communication topology diminishes after a transient period. Our \cref{theorem: CLT_D-SGD} extends these findings to Markovian sampling and a broader spectrum of communication patterns as in Table \ref{tab:related_works}. This extension is based on the fact that the consensus error, impacted primarily by the communication pattern, decreases faster than the CLT scale $O(\sqrt{\gamma_n})$ and is thus not the dominant factor in the asymptotic regime, as suggested by Lemma \ref{lemma:consensus}. 
\end{remarknn}

\begin{remarknn}\label{remark:3}
Recent studies have highlighted linear speedup with increasing number of agents $N$ in the dominant term of their finite-sample error bounds under DSGD-CT with i.i.d. sampling \cite{koloskova2020unified} and LSGD-FC with Markovian sampling \cite{khodadadian2022federated}. However, our \cref{theorem: CLT_D-SGD} demonstrates this phenomenon under more diverse communication patterns and Markovian sampling in Table \ref{tab:related_works} via the leading term $\mV$ in our CLT. Specifically, it scales with $1/N$, i.e. $\mV \!=\! \Bar{\mV}/N$, where $\Bar{\mV}\!=\!\frac{1}{N}\sum_{i=1}^N \mV_i$ denotes the average limiting covariance matrices across all $N$ agents and $\mV_i \!=\! \int_0^{\infty}e^{\mM t}\mU_ie^{\mM t}dt$, suggesting that the MSE $\E[\|\theta_n-\theta^*\|^2]$ will be improved by $1/N$. A similar argument also applies to $\mV'$ in \eqref{eqn:GD-SGD_CLT_PR_AVG}, i.e., $\mV' = \bar \mV' / N$, where $\bar \mV' = \frac{1}{N}\sum_{i=1}^N\mV'_i$ and $\mV'_i = \mH^{-1}\mU_i\mH^{-1}$. 
\end{remarknn}

\textbf{Impact of Agent's Sampling Strategy:} In the literature, the mixing time-based technique has been widely used in the non-asymptotic analysis in SGD, DSGD and various LSGD variants in FL \cite{duchi2012ergodic,sun2018markov,sun2023decentralized,zeng2022finite,khodadadian2022federated}, i.e., for each agent $i\in[N]$ and some constant $C$,\vspace{-1mm}
\begin{equation}\label{eqn:mixing_rate}
    \|\nabla F_i(\theta, X^i_n) - \nabla f_i(\theta)\| \leq C\|\theta\| \rho_i^n,
\end{equation}
where $\rho_i$ is the mixing rate of the underlying Markov chain. However, typical non-asymptotic analyses often rely on $\rho \triangleq \max_i \rho_i$ among $N$ agents, i.e., the worst-performing agent in their finite-time bounds \cite{zeng2022finite,wang2023federated}, or assume an identical mixing rate across all $N$ agents \cite{khodadadian2022federated,sun2023decentralized}. 

In contrast, Remark \ref{remark:3} highlights that each agent holds its own limiting covariance matrices $\mV_i$ and $\mV'_i$, which are predominantly governed by the matrix $\mU_i$, capturing the agent's sampling strategy $\{X^i_n\}$ and \textit{contributing equally} to the overall performance of UD-SGD. For each agent $i$, denote by $\mU_{i}^X$ and $\mU_{i}^Y$ the asymptotic covariance matrices associated with two candidate sampling strategies $\{X^i_n\}$ and $\{Y^i_n\}$, respectively. Let $\mV^X$ and $\mV^Y$ be the limiting covariance matrices of the distributed system in \eqref{eqn:closed_form_V}, where agent $i$ employs $\{X^i_n\}$ and $\{Y^i_n\}$, respectively, while keeping other agents' sampling strategies unchanged. Then, we have the following result. 
\begin{corollary}\label{corollary:improvement_each_agent}
   For agent $i$, if there exist two sampling strategies $\{X^i_n\}_{n\geq 0}$ and $\{Y^i_n\}_{n\geq 0}$ such that $\mU_i^X \succeq \mU_i^Y$, we have $\mV^X \succeq \mV^Y$. 
\end{corollary}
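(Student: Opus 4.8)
The plan is to exploit the linearity of the integral map $\mU \mapsto \mV = \int_0^\infty e^{\mM t}\mU e^{\mM t}\,dt$ in \eqref{eqn:closed_form_V}, together with the fact that switching agent $i$'s strategy perturbs only a single summand of $\mU = \frac{1}{N^2}\sum_{j=1}^N \mU_j$. First I would isolate the effect of the strategy change: since every agent $j \neq i$ retains the same sampling strategy, its asymptotic covariance matrix $\mU_j$ is unchanged, so the two aggregated matrices differ only through the $i$-th term, namely $\mU^X - \mU^Y = \frac{1}{N^2}(\mU_i^X - \mU_i^Y)$. The hypothesis $\mU_i^X \succeq \mU_i^Y$, combined with the elementary fact that scaling by the positive constant $1/N^2$ preserves the Loewner order, then yields $\mU^X \succeq \mU^Y$, i.e.\ $\mU^X - \mU^Y \succeq 0$.

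Next I would transfer this inequality through the integral representation. By linearity, $\mV^X - \mV^Y = \int_0^\infty e^{\mM t}(\mU^X - \mU^Y)e^{\mM t}\,dt$, where both $\mV^X$ and $\mV^Y$ are well-defined by \cref{theorem: CLT_D-SGD} (the step-size conditions guarantee that $\mM$ has strictly negative spectrum, so $e^{\mM t}$ decays exponentially). The key observation is that $\mM = -\mH$ or $\mM = \mI_d/2\gamma_{\star} - \mH$ is symmetric, hence $e^{\mM t}$ is symmetric for every $t \geq 0$. For an arbitrary test vector $\vx \in \R^d$ I would therefore write $\vx^T(\mV^X - \mV^Y)\vx = \int_0^\infty (e^{\mM t}\vx)^T(\mU^X - \mU^Y)(e^{\mM t}\vx)\,dt$. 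Since $\mU^X - \mU^Y \succeq 0$, each integrand is nonnegative, so the integral is nonnegative, giving $\vx^T(\mV^X - \mV^Y)\vx \geq 0$ for all $\vx$, which is exactly $\mV^X \succeq \mV^Y$.

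I expect no serious obstacle: the argument is essentially the monotonicity of a Lyapunov-type integral operator with respect to the Loewner order. The only point requiring genuine care is the symmetry of $\mM$, which allows the quadratic form to factor cleanly as $(e^{\mM t}\vx)^T(\mU^X - \mU^Y)(e^{\mM t}\vx)$; this is immediate from the definition of $\mM$ in \cref{theorem: CLT_D-SGD} and the symmetry of the Hessian $\mH$ in \eqref{eqn:Hessian_of_f}. Convergence of the integrals is not a separate concern, since the existence of both $\mV^X$ and $\mV^Y$ is already furnished by the CLT. The overall proof is thus short, with the conceptual content concentrated in the two observations that (i) a single-agent perturbation acts additively and positively on $\mU$, and (ii) the map $\mU \mapsto \mV$ is order-preserving.
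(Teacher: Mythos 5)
Your proposal is correct and follows essentially the same route as the paper: the paper's proof likewise reduces the claim to $\mU^X - \mU^Y = \frac{1}{N^2}(\mU_i^X - \mU_i^Y) \succeq 0$ via closure of the Loewner order under addition, with monotonicity of the map $\mU \mapsto \mV$ in \eqref{eqn:closed_form_V} taken as immediate. Your only addition is to spell out that implicit step explicitly --- writing $\vx^T(\mV^X - \mV^Y)\vx = \int_0^\infty (e^{\mM t}\vx)^T(\mU^X - \mU^Y)(e^{\mM t}\vx)\,dt \geq 0$ using the symmetry of $\mM$ --- which is a correct and slightly more complete rendering of the same argument.
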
\vspace{-2mm}
\cref{corollary:improvement_each_agent} directly follows from the definition of Loewner ordering, and Loewner ordering being closed under addition (i.e., $\mA \!\succeq\! \mB$ implies $\mA \!+\! \mC \!\succeq\! \mB \!+\! \mC$). It demonstrates that even a single agent improves its sampling strategy from $\{X^i_n\}$ to $\{Y^i_n\}$, it leads to an overall reduction in $\mV$ (in terms of Loewner ordering), thereby decreasing the MSE and benefiting the entire group of $N$ agents. The subsequent question arises: \textit{How do we identify an improved sampling strategy $\{Y^i_n\}$ over the baseline $\{X^i_n\}$?}

This question has been partially addressed by \cite{mira2001ordering,lee2012beyond,hu2022efficiency}, which qualitatively investigates the `efficiency ordering' of two sampling strategies. In particular, \cite[Theorem 
$3.6$ (i)]{hu2022efficiency} shows that sampling strategy $\{Y_n\}$ is more efficient than $\{X_n\}$ if and only if $\mSigma_{X}(\vg) \succeq \mSigma_{Y}(\vg)$ for any vector-valued function $\vg(\cdot) \in \R^d$. Consequently, in the UD-SGD framework, employing a more efficient sampling strategy $\{Y^i_n\}$ over the baseline $\{X^i_n\}$  by agent $i$ leads to $\mSigma_{X^i}(\nabla F_i(\theta^*, \cdot)) \succeq \mSigma_{Y^i}(\nabla F_i(\theta^*, \cdot))$, thus satisfying $\mU_i^X \succeq \mU_i^Y$. This finding, as per \cref{corollary:improvement_each_agent}, implies an overall improvement in UD-SGD. 

For illustration purposes, we list a few examples where two competing sampling strategies follow efficiency ordering: i) When an agent has complete access to the entire dataset (e.g., deep learning), shuffling techniques like single shuffling and random reshuffling are more efficient than \textit{i.i.d.} sampling \cite{hu2022efficiency,yun2022minibatch};
ii) When an agent works with a graph-like data structure and employs a random walk, e.g., agent $i$ in \cref{fig:GD-SGD_diagram}, using non-backtracking random walk (NBRW) is more efficient than simple random walk (SRW) \cite{lee2012beyond}. iii) A recently proposed self-repellent random walk (SRRW) is shown to achieve \textit{near-zero} sampling variance, indicating even higher sampling efficiency than NBRW and SRW \cite{doshi2023self}.\footnote{Note that SRRW is a \textit{nonlinear} Markov chain that depends on the relative visit counts of each node in the graph. While its application in single-agent optimization has been studied in \cite{hu2024accelerating}, expanding the theoretical examination of SRRW to multi-agent scenarios is beyond the scope of this paper. However, we can still numerically evaluate the performance of UD-SGD with multiple agents on general communication matrices using SRRW as a highly efficient sampling strategy in Section \ref{section:simulation}.} This random-walk-based sampling finds a particular application in large-scale FL within D2D networks (e.g., mobile networks, wireless sensor networks), where each agent acts as an edge server or access point, gathering information from the local D2D network \cite{hosseinalipour2022multi, guo2022hybrid}. Employing a random walk over local D2D network for each agent constitutes the sampling strategy.

Theorem \ref{theorem: CLT_D-SGD} and Corollary \ref{corollary:improvement_each_agent} not only qualitatively compare these sampling strategies but also allow for a quantitative assessment of the overall system enhancement. Since every agent contributes equally to the limiting covariance matrix $\mV$ of the distributed system as in Remark \ref{remark:3}, a key application scenario is to encourage a subset of compliant agents to adopt highly efficient strategies like SRRW, potentially yielding better performance than universally upgrading to slightly improved strategies like NBRW. This approach, more feasible and impactful in large-scale machine learning scenarios where some agents cannot freely modify their sampling strategies, is a unique aspect of our framework not addressed in previous works focusing on the worst-performing agent \cite{zeng2022finite,khodadadian2022federated,sun2023decentralized,wang2023federated}.
\section{Experiments}\label{section:simulation}
In this section, we empirically evaluate the effect of agents' sampling strategies under various communication patterns in UD-SGD. We consider the $L_2$-regularized binary classification problem\vspace{-1mm} 
\begin{equation}\label{eqn:logistic_regression}
\min_{\theta} f(\theta) \triangleq \frac{1}{N}\sum_{i=1}^N f_i(\theta), ~\text{with} ~ f_i(\theta) \!=\! \frac{1}{B}\!\sum_{j=1}^B \!\log\!\left(\!1\!+\!e^{\theta^T \vx_{i,j}}\!\right) \!-\!y_{i,j} \left(\theta^T \vx_{i,j}\right)  \!+\! \frac{\kappa}{2} \| \theta\|^2,
\end{equation}
where the feature vector $\vx_{i,j}$ and its corresponding label $y_{i,j}$ are held by agent $i$, with a penalty parameter $\kappa$ set to $1$. We use the \textit{ijcnn1} dataset \cite{chang2011libsvm} with $22$ features in each data point and $50$k data points in total, which is evenly distributed to two groups with $50$ agents each ($N = 100$ agents in total) and each agent holds $B = 500$ \textit{distinct} data points. Each agent in the first group has full access to its entire dataset, and thus can employ \textit{i.i.d.} sampling (baseline) or single shuffling. On the other hand, each agent in the other group has a graph-like structure and uses SRW (baseline), NBRW or SRRW with reweighting to sample its local dataset with uniform weight. In this simulation, we assume that agents can only communicate through a communication network using the DSGD algorithm. This scenario with heterogeneous agents, as depicted in \cref{fig:GD-SGD_diagram}, is of great interest in large-scale machine learning \cite{hosseinalipour2022multi,guo2022hybrid}.
In addition, we employ a decreasing step size $\gamma_n = 1/n$ in our UD-SGD framework \eqref{eqn:GD-SGD} because it is typically used for the strongly convex objective function and is tested to have the fastest convergence in this simulation setup. Due to space constraints, we defer detailed simulation setup, including the introduction of SRW, NBRW, and SRRW, to \cref{appendix:simulation1}. 
\begin{figure}[t]
	\centering
	\subfigure{\includegraphics[width=0.32\textwidth]{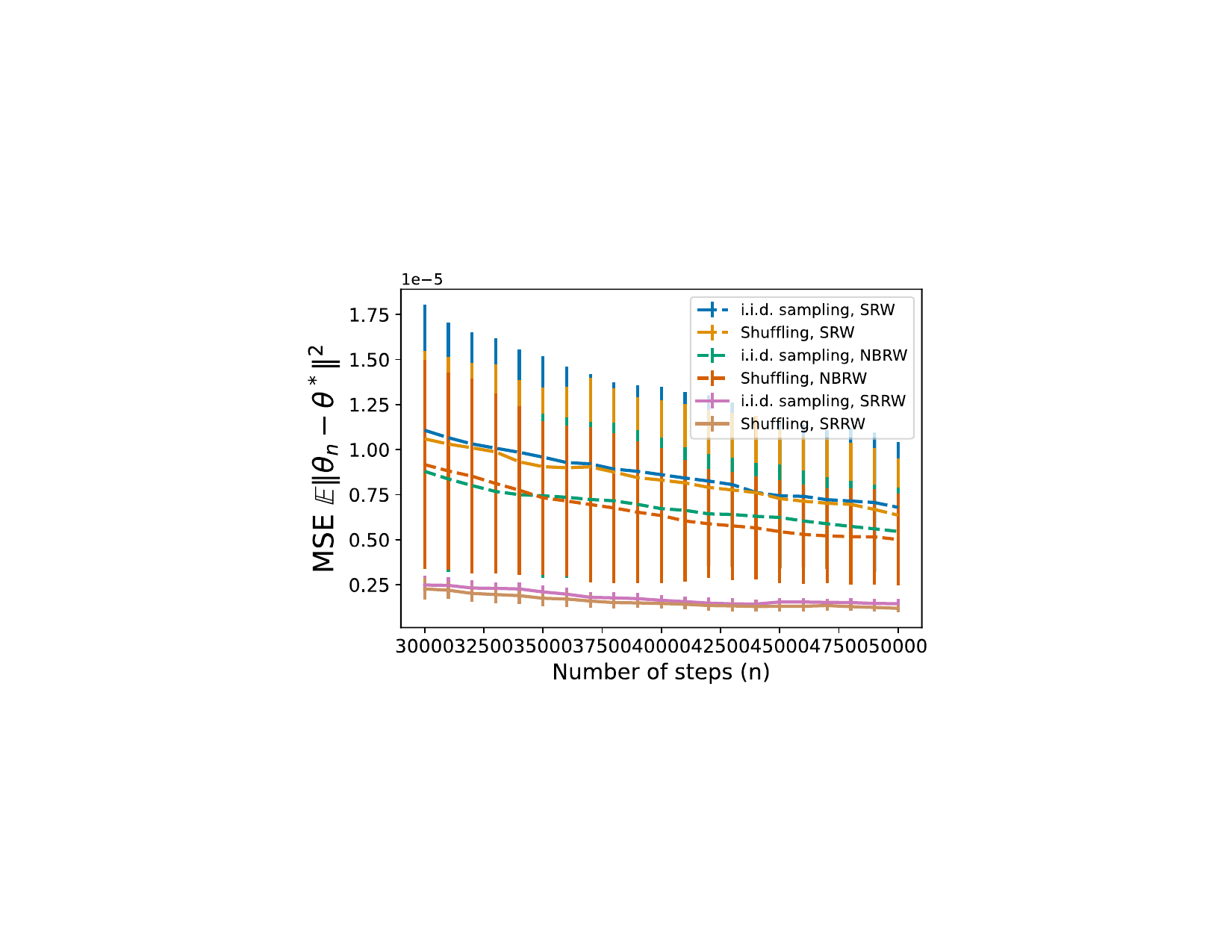} \label{fig:1a}}
	\subfigure{\includegraphics[width=0.32\textwidth]{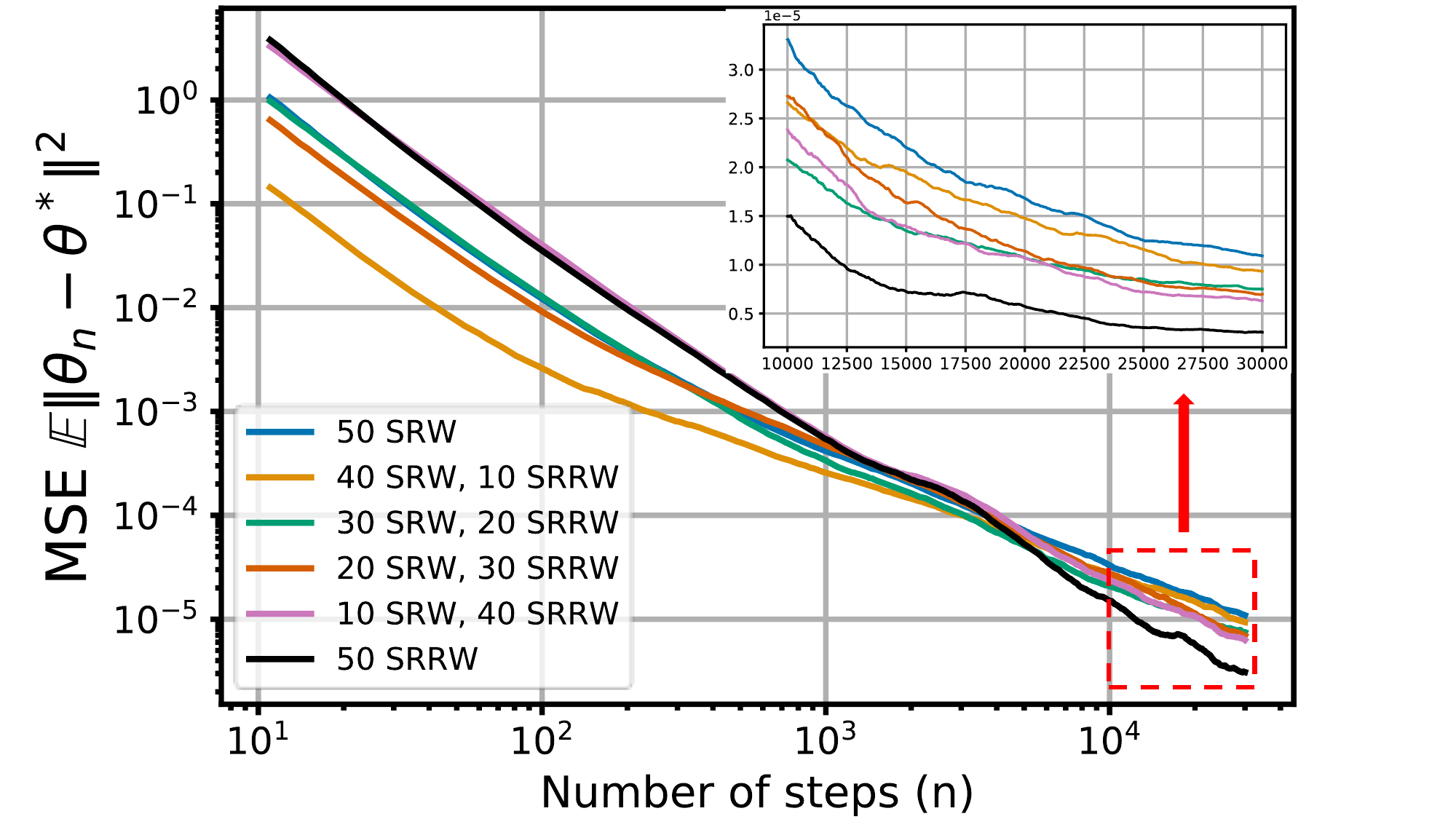} \label{fig:1b}}
	\subfigure{\includegraphics[width=0.32\textwidth]{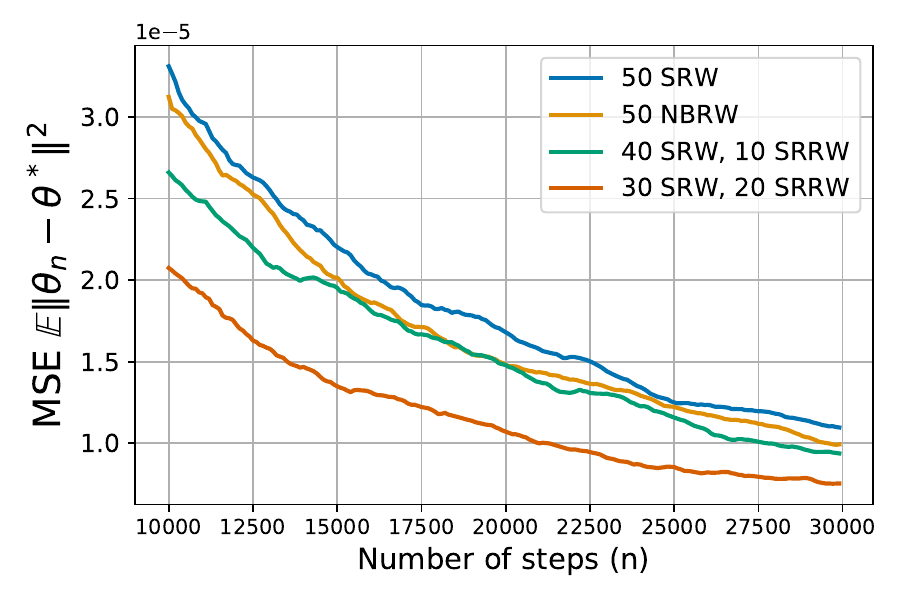} \label{fig:1c}}
	\subfigure{\includegraphics[width=0.32\textwidth]{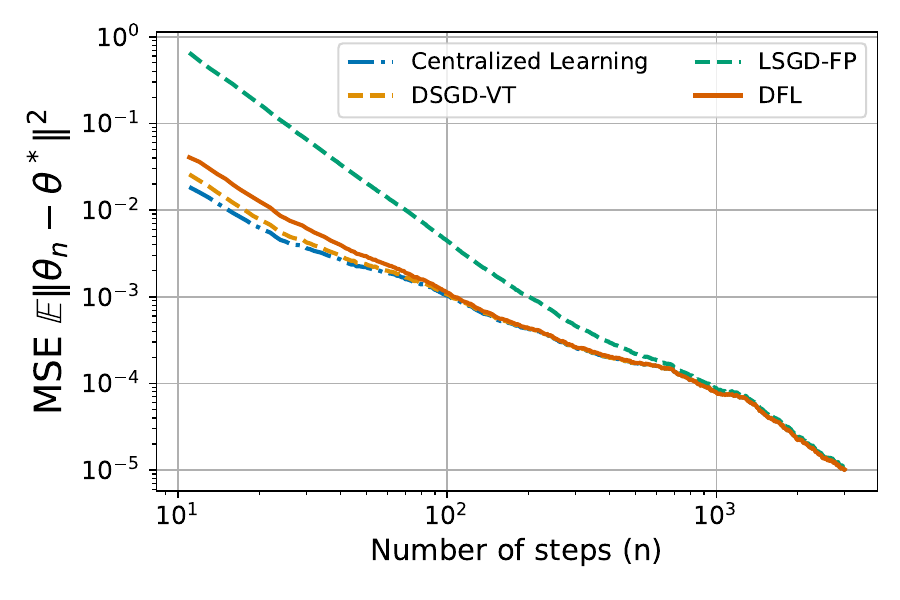} \label{fig:1d}}
	\subfigure{\includegraphics[width=0.32\textwidth]{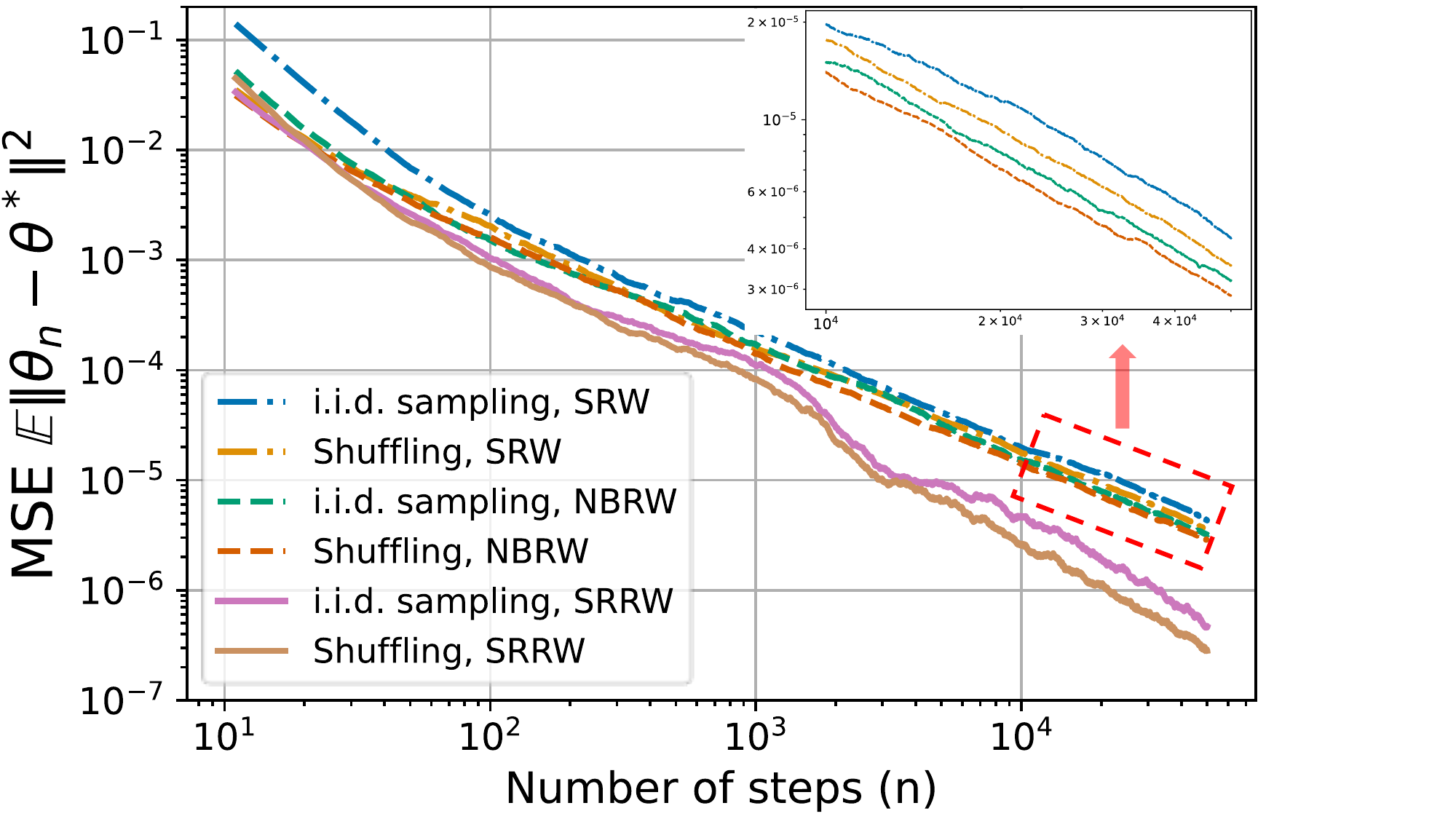} \label{fig:1e}}
	\subfigure{\includegraphics[width=0.32\textwidth]{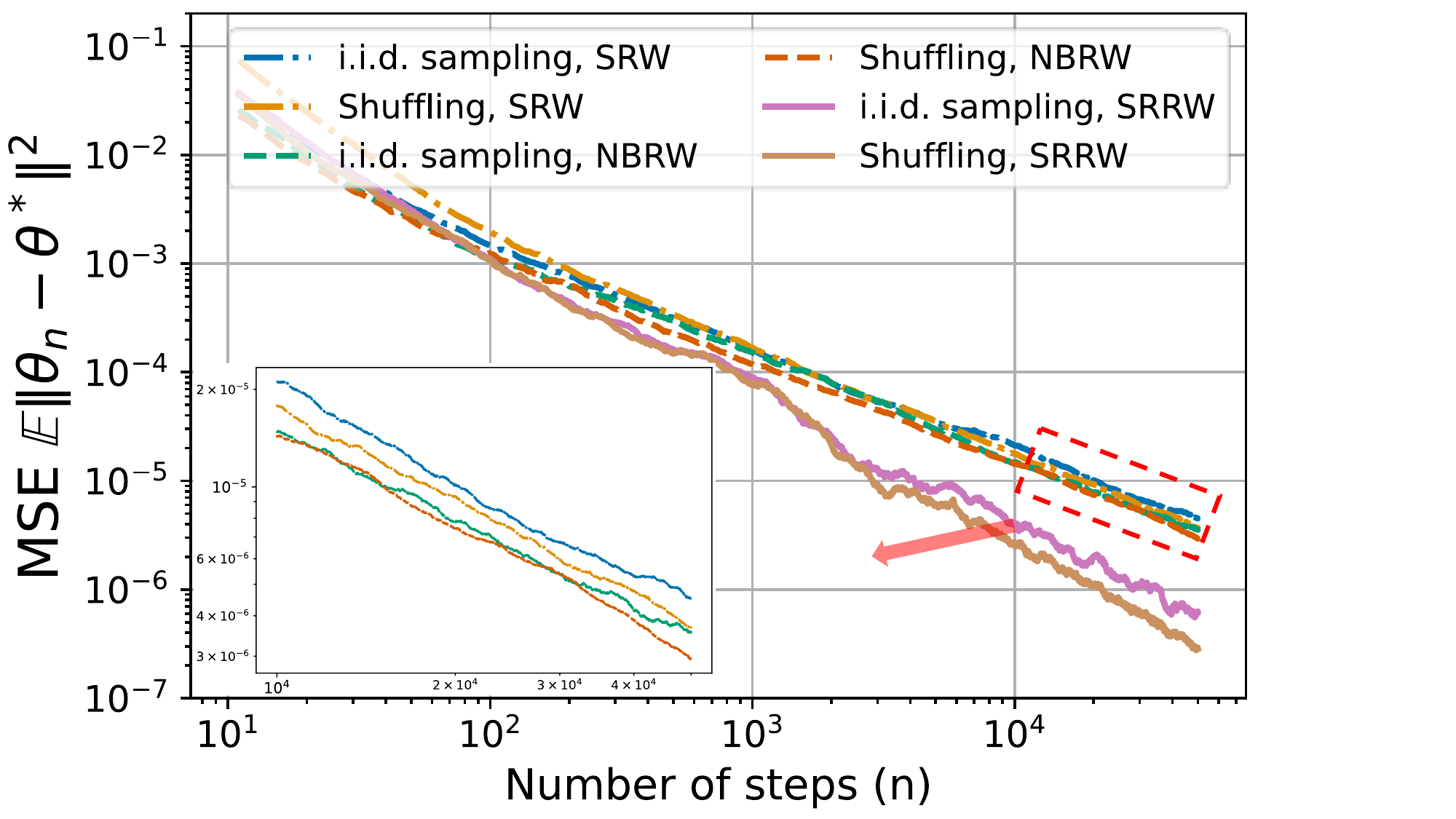} \label{fig:1f}}
	\vspace{-3mm}
	\caption{Binary classification problem. From left to right: (a) Impact of efficient sampling strategies on convergence. (b) Performance gains from partial adoption of efficient sampling. (c) Comparative advantage of SRRW over NBRW in a small subset of agents. (d) Asymptotic network independence of four algorithms under UD-SGD framework with fixed sampling strategy (shuffling, SRRW). (e) Different sampling strategies in the DSGD algorithm with time-varying topology (DSGD-VT). (f) Different sampling strategies in the DFL algorithm with increasing communication interval.}
	\vspace{-6mm}
	\label{fig:experiments}
\end{figure}

The simulation results are obtained through $120$ independent trials. In \cref{fig:1a}, we assume that the first group of agents perform either \textit{i.i.d.} sampling or shuffling method, while the other group of agents all change their sampling strategies from baseline SRW to NBRW and SRRW, as shown in the legend. This plot shows that improved sampling strategy leads to overall convergence speedup since NBRW and SRRW are more efficient than SRW \cite{hu2022efficiency,doshi2023self}. Furthermore, it illustrates that SRRW is significantly more efficient than NBRW in this simulation setup, i.e., \textbf{SRRW $\gg$ NBRW $>$ SRW} in terms of sampling efficiency. While keeping the second group of agents unchanged, we can see that shuffling method outperforms \textit{i.i.d.} sampling with smaller asymptotic MSE. However, shuffling method may not perform perfectly for small time $n$ due to slow mixing behavior in the initial period, which is also observed in the single-agent scenario in \cite{safran2020good,ahn2020sgdshuffling,hu2022efficiency}. The error bar therein also indicates that the random-walk sampling strategy has a significant impact on the overall system performance and SRRW has smaller variance than NBRW and SRW. 

In \cref{fig:1b}, we let the first group of agents perform \textit{i.i.d.} sampling while only changing a portion of agents in the second group to upgrade from SRW to SRRW, e.g., $30$ SRW $20$ SRRW in the legend means that there are 30 agents using SRW while the rest $20$ agents in the second group upgrade to SRRW. We observe that more agents willing to upgrade from SRW to SRRW lead to smaller asymptotic MSE, as predicted by \cref{theorem: CLT_D-SGD} and Remark \ref{remark:3}. This improvement in MSE reduction doesn't scale linearly with more agents adopting SRRW because each agent holds its own dataset that are not necessarily identical, resulting in different individual limiting covariance matrices $\mV_i\!\neq\! \mV_j$.

While maintaining \textit{i.i.d.} sampling for the first group of agents, we compare the performance when the second group of agents in \cref{fig:1c} employ NBRW or SRRW. Remarkably, the case with only $10$ agents out of $50$ agents in the second group adopting far more efficient sampling strategy ($40$ SRW, $10$ SRRW) through incentives or compliance already produces a smaller MSE than all $50$ agents using slightly better strategy ($50$ NBRW). The performance gap becomes even more pronounced when $20$ agents upgrade from SRW to SRRW ($30$ SRW, $20$ SRRW). We show that the performance of a distributed system can be improved significantly when a small proportion of agents adopt highly efficient sampling strategies. 

Figure \ref{fig:1d} empirically illustrates the asymptotic network independence property via four algorithms under our UD-SGD framework: Centralized SGD (communication interval $K = 1$, communication matrix $\mW = \vone\vone^T / N$); LSGD-FP (FL with full client participation, $K = 5$, $\mW = \vone\vone^T / N$); DSGD-VT (DSGD with time-varying topologies, randomly chosen from $5$ doubly stochastic matrices); DFL (decentralized FL with fixed MH-generated $\mW$ and increasing communication interval $K_l = \max\{1,\log(l)\}$ after $l$-th aggregation). We fix the sampling strategy (shuffling, SRRW) throughout this plot. All four algorithms overlap around $1000$ steps, implying that they have entered the asymptotic regime with similar performance where the CLT result dominates, implying the asymptotic network independence in the long run.

Figure \ref{fig:1e} and \ref{fig:1f} show the performance of different sampling strategies in DSGD-VT and DFL algorithms in terms of MSE. Both plots consistently demonstrate that improving agent’s sampling strategies (e.g., shuffling > iid sampling, and SRRW > NBRW > SRW) leads to faster convergence with smaller MSE, supporting our theory.

Furthermore, in \cref{appendix:simulation2}, we simulate an image classification task with CIFAR-10 dataset \cite{krizhevsky2009learning} by training a $5$-layer CNN and ResNet-18 model collaboratively through a $10$-agent network. The result is illustrated in \cref{fig:experiments2}, where SRRW outperforms NBRW and SRW as expected. In summary, we find that upgrading even a small portion of agents to efficient sampling strategies (e.g., shuffling method, NBRW, SRRW under different dataset structures) improves system performance in UD-SGD. These results are consistent in binary and image classification tasks, underscoring that \textit{every agent matters} in distributed learning.
\section{Conclusion}
In this work, we develop an UD-SGD framework that establishes the CLT of various distributed algorithms with Markovian sampling. We overcome technical challenges such as quantifying consensus error under very general communication patterns and decomposing Markovian noise through the Poisson equation, which extends the analysis beyond the single-agent scenario. We demonstrate that even if only a few agents optimize their sampling strategies, the entire distributed system will benefit with a smaller limiting covariance in the CLT, suggesting a reduced MSE. This finding challenges the current established upper bounds where the worst-performing agent leads the pack. Future studies could pivot towards developing fine-grained finite-time bounds to individually characterize each agent's behavior, and theoretically analyze the effect of SRRW in UD-SGD.

\section{Acknowledgments and Disclosure of Funding}
We thank the anonymous reviewers for their constructive comments. This work was supported in part by National Science Foundation under Grant Nos. CNS-2007423, IIS-1910749, and IIS-2421484.

\bibliographystyle{plainnat}
\bibliography{D_SGD_reference}

\newpage
\appendix

\section{Discussion of \cref{assumption:Decreasing step size and slowly increasing communication interval}-ii)}\label{appendix:discussion_of_Kl}

\subsection{Suitable choices of $K_l$}
When wet let $K_{l} \sim \log(l)$ (resp. $K_l \sim \log\log(l)$), as suggested by \cite{li2022stat}, it trivially satisfies $K_{\tau_n} = o(\gamma_n^{-1/2(L+1)}) = o(n^{1/2(L+1)})$ since by definition $K_{\tau_n} < K_{n} \sim \log(n)$ (resp. $\log\log(n)$), and $\log(n) = o(n^{\epsilon})$ (resp. $\log\log(n) = o(n^{\epsilon})$) for any $\epsilon > 0$. 
Besides, $\sum_n \eta_{n}^2 = \sum_n \gamma_n^2 K_{\tau_n}^{2(L+1)} \lesssim \sum_n n^{-2} n^{2(L+1)\epsilon} = \sum_n n^{2(L+1)\epsilon - 2}$. To ensure $\sum_n \eta_n^2 < \infty$, it is sufficient to have $2(L+1)\epsilon - 2 < -1$, or equivalently, $\epsilon < 1/2(L+1)$. Since $\epsilon$ can be arbitrarily small to satisfy the condition, $\sum_n \eta_n^2< \infty$ is satisfied. When $K_l \sim \log(l)$, we can rewrite the last condition as
\begin{equation}
\begin{split}
 \frac{\eta_{n_l+1}}{\eta_{n_{l+1}+1}} = \frac{\gamma_{n_l+1}}{\gamma_{n_{l+1}+1}}\frac{K_l^{L+1}}{K_{l+1}^{L+1}} &= \left(\frac{n_{l+1}+1}{n_l +1}\right) \left(\frac{\log(l+1)+1}{\log(l)+1}\right)^{L+1} \\
 &= \left(1+\frac{K_{l+1}}{n_l+1}\right)\left(\frac{\log(l+1)+1}{\log(l)+1}\right)^{L+1},
\end{split}
\end{equation}
where we have $n_l \sim \log(l!)$ such that $K_{l+1}/n_l = \log(l+1)/\log(l!) \to 0$ and $\log(l+1)/\log(l) \to 1$ as $l \to \infty$, which leads to $\lim_{n\to\infty}\eta_{n_l+1}/\eta_{n_{l+1}+1} = 1$. Similarly, for $K_l \sim \log\log(l)$, we have $n_l \sim \log(\prod_{s=1}^l \log(s))$ such that $K_{l+1}/n_l \sim \log\log(l+1)/\log\log(\prod_{s=1}^l \log(s)) \to 0$ and $\log\log(l+1) / \log\log(l) \to 1$ as $l \to \infty$,  which also leads to $\lim_{n\to\infty}\eta_{n_l+1}/\eta_{n_{l+1}+1} = 1$.

\subsection{Practical implications of \cref{assumption:Decreasing step size and slowly increasing communication interval}-ii)}
In this assumption, we allow the number of local iterations to go to infinity asymptotically. In distributed learning environments such as mobile, IoT, and wireless sensor networks, where nodes are often constrained by battery life, increasing communication interval in \cref{assumption:Decreasing step size and slowly increasing communication interval}-ii) plays a crucial role in balancing energy costs with communication effectiveness. It allows agents to communicate more frequently early on, leading to a faster initial convergence to the neighborhood of $\theta^*$. Then, we slow down the communication frequency between agents to conserve energy, leveraging the diminishing returns on accuracy improvements from additional communications.

Consider the scenario where devices across multiple clusters collaborate on a distributed optimization task, utilizing local datasets. Devices within each cluster form a communication network that allows a virtual agent to perform a heterogeneous Markov chain trajectory via random walk, or an i.i.d. sequence in a complete graph with self-loops, depending on the application context. Each cluster features an edge server that supports the exchange of model estimates with neighboring clusters. By performing $K$ local updates before uploading these to the cluster’s edge server, the model benefits from reduced communication overhead. As the frequency of updates between devices and edge servers decreases --- optimized by gradually increasing $K$ --- we effectively lower communication costs, particularly as the model estimation $\theta_n$ is close to $\theta^*$. 

\section{Proof of \cref{lemma:consensus}}\label{appendix:proof_of_consensus}
Let $\mJ_{\bot} \triangleq \mI_N - \mJ \in \R^{N \times N}$ and $\cJ_{\bot} \triangleq \mJ_{\bot} \otimes \mI_d \in \R^{Nd \times Nd}$, where $\otimes$ is the Kronecker product. Let $\Theta_n = [(\theta^1_n)^T, \cdots, (\theta^N_n)^T]^T \in \R^{Nd}$. Then, motivated by \cite{morral2017success}, we define a sequence $\phi_n \triangleq \eta_{n+1}^{-1} \cJ_{\bot} \Theta_n \in \R^{Nd}$ in the \textit{increasing communication interval case} (resp. $\phi_n \triangleq \gamma_{n+1}^{-1} \cJ_{\bot} \Theta_n$ in the \textit{bounded communication interval case}), where $\eta_{n+1}$ is defined in \cref{assumption:Decreasing step size and slowly increasing communication interval}-ii). $\cJ_{\bot} \Theta_n = \Theta_n - \frac{1}{N}(\vone\vone^T \otimes \mI_d)\Theta_n$ represents the consensus error of the model. 

We first give the following lemma that shows the pathwise boundedness of $\phi_n$.

\begin{lemma}\label{lemma:boundedness_phi}
    Let Assumptions \ref{assumption:Regularity of the gradient}, \ref{assumption:Decreasing step size and slowly increasing communication interval}, \ref{assumption:Boundedness on model parameter} and \ref{assumption:Contraction property of communication matrix} hold. For any compact set $\Omega \subset \R^{Nd}$, the sequence $\phi_{n}$ satisfies $\sup_n \E[\|\phi_n\|^2 \Char_{\cap_{j\leq n-1} \{\Theta_j \in \Omega\} }] < \infty$.
\end{lemma}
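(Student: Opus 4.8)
The plan is to derive a closed linear recursion for $\phi_n$ and then control it in $L^2$ by playing the contraction available at aggregation times against the error accumulated inside each communication interval. First I would stack the per-agent updates \eqref{eqn:GD-SGD_each_node} into $\Theta_{n+1} = \cW_n(\Theta_n - \gamma_{n+1}\vF_n)$, where $\cW_n \triangleq \mW_n \otimes \mI_d$ and $\vF_n$ collects the local gradients $\nabla F_i(\theta_n^i, X_n^i)$. Since $\mW_n$ is doubly stochastic, $\mJ_{\bot}$ commutes with $\mW_n$, hence $\cJ_{\bot}$ commutes with $\cW_n$, giving the consensus-error recursion $\cJ_{\bot}\Theta_{n+1} = \cW_n\cJ_{\bot}\Theta_n - \gamma_{n+1}\cW_n\cJ_{\bot}\vF_n$. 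Dividing by the normalizing scale (writing $\beta_n$ for $\eta_n$ in the increasing-interval case and for $\gamma_n$ in the bounded case) yields
\[
\phi_{n+1} = \frac{\beta_{n+1}}{\beta_{n+2}}\,\cW_n\phi_n - \frac{\gamma_{n+1}}{\beta_{n+2}}\,\cW_n\cJ_{\bot}\vF_n.
\]

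Next I would bound the forcing term on the event $E_{n-1} \triangleq \cap_{j\le n-1}\{\Theta_j\in\Omega\}$. Splitting $\nabla F_i(\theta_n^i,X_n^i) = \nabla F_i(\theta_n, X_n^i) + [\nabla F_i(\theta_n^i, X_n^i) - \nabla F_i(\theta_n, X_n^i)]$ and invoking the per-sample $L$-smoothness of \cref{assumption:Regularity of the gradient}, the first piece is uniformly bounded on the compact set $\Omega$ (the chains are finite), while the second is at most $L\|\theta_n^i-\theta_n\|$; hence $\|\cJ_{\bot}\vF_n\|\Char_{E_{n-1}} \le C_\Omega + L\|\cJ_{\bot}\Theta_n\| = C_\Omega + L\beta_{n+1}\|\phi_n\|$ on $E_{n-1}$. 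This exhibits the recursion as a contraction-plus-forcing system in which the forcing proportional to $\|\phi_n\|$ carries a coefficient $L\beta_{n+1}$ that vanishes as $\beta_{n+1}\to0$.

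The core of the argument is the contraction at aggregation steps. I use that $\mW_n = \mI_N$ off the aggregation grid, so on an interval the error merely accumulates, amplified only by $\prod(1+L\gamma_k)$, which stays $O(1)$ since $\gamma_n K_{\tau_n}\to0$ under \cref{assumption:Decreasing step size and slowly increasing communication interval}; whereas at each $n_l$ the independence of $\{\mW_n\}$ from the samples (\cref{assumption:Contraction property of communication matrix}-i) lets me condition on the natural filtration $\cF_{n_l-1}$ and average over $\mW_{n_l}$ to obtain $\E[\|\cW_{n_l}\cJ_{\bot}v\|^2]=(\cJ_{\bot}v)^T(\E[\mW^T\mW]\otimes\mI_d)(\cJ_{\bot}v)\le C_1\|\cJ_{\bot}v\|^2$, because $(\mJ\otimes\mI_d)\cJ_{\bot}v=0$ and $\|\E[\mW^T\mW]-\mJ\|\le C_1<1$ (\cref{assumption:Contraction property of communication matrix}-ii). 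I would then telescope the squared-norm recursion over one full interval $[n_l+1,\,n_{l+1}+1]$, producing $V_{n_{l+1}+1} \le C_1(1+o(1))\,V_{n_l+1} + d_l$ with $V_m \triangleq \E[\|\phi_m\|^2\Char_{E_{m-1}}]$ and a bounded, summable residual $d_l$. Here the ratio condition $\lim_l \eta_{n_l+1}/\eta_{n_{l+1}+1}=1$ supplies the $(1+o(1))$ factor, while $\sum_n\eta_n^2<\infty$ and $K_{\tau_n}=o(\gamma_n^{-1/2(L+1)})$ ensure that the within-interval accumulation, divided by $\eta_{n+1}^2\sim\gamma_{n+1}^2K_{\tau_n}^{2(L+1)}$, stays bounded. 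Since $C_1<1$, this recursion across aggregation indices is uniformly bounded; bounding the intermediate non-aggregation values by their interval endpoints then gives $\sup_n V_n<\infty$, as claimed.

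The main obstacle is the mismatch between the intervals where no mixing occurs --- during which the normalized consensus error can grow and the map is expansive rather than contractive --- and the sparse aggregation instants that provide the only contraction. Making the bookkeeping uniform in $n$ forces the normalization $\eta_n=\gamma_n K_{\tau_n}^{L+1}$ to absorb exactly the worst-case $K_{\tau_n}$-dependent accumulation, which is where the exponent $L+1$ and all three conditions of \cref{assumption:Decreasing step size and slowly increasing communication interval}-ii) enter, and it requires the slowly varying nature of $\eta_n$ to keep the coefficient $\beta_{n+1}/\beta_{n+2}$ from destabilizing the telescoped recursion. Carrying the nested indicator $\Char_{E_{n-1}}$ consistently through the conditioning step --- it is $\cF_{n-1}$-measurable and non-increasing in $n$, so it survives both the gradient bound and the aggregation average --- is the remaining technical point.
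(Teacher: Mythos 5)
Your proposal is correct and follows essentially the same architecture as the paper's proof: the identical recursion for $\phi_n$ obtained by commuting $\cJ_{\bot}$ past $\cW_n$, the $C_1$-contraction at aggregation instants via independence of $\mW_{n_l}$ together with $\|\E[\mW^T\mW]-\mJ\|\le C_1$ and $(\mJ\otimes\mI_d)\cJ_{\bot}v=\vzero$, the ratio condition $\eta_{n_l+1}/\eta_{n_{l+1}+1}\to1$ to keep the effective per-epoch factor strictly below one, and telescoping across aggregation epochs followed by control of interior times. The one genuine divergence is how the gradient sum inside a communication interval is tamed. The paper conditions once on $\cF_{n_l}$; the interior iterates $\Theta_k$, $k\in(n_l,n_{l+1})$, are then not covered by the indicator and may leave $\Omega$, so it propagates the per-sample $L$-smoothness from $\Theta_{n_l}\in\Omega$ to obtain $\|\nabla\mF(\Theta_{n_l+m},\cdot)\|\le (em)^L\sqrt{N}C_\Omega$, and the resulting $\sum_{m<K_{l+1}}m^L\le K_{l+1}^{L+1}$ is exactly what the normalization $\eta_n=\gamma_nK_{\tau_n}^{L+1}$ absorbs --- this is the real reason for the exponent $L+1$. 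You instead carry the nested indicator step by step, so gradients are bounded by compactness at every interior time and the interval contributes only bounded forcing with a harmless $\prod_k(1+L\gamma_k)=O(1)$ amplification; in your version the exponent $L+1$ is slack (a normalization by $\gamma_n K_{\tau_n}$ would already suffice), whereas in the paper's argument it is essential. Both routes prove the lemma as stated. Two small bookkeeping points to repair in your sketch: you pair $\vF_n$ with $\Char_{E_{n-1}}$, but on $E_{n-1}$ only $\Theta_{n-1}\in\Omega$ is known, so either observe that bounding $V_{n+1}=\E[\|\phi_{n+1}\|^2\Char_{E_n}]$ evaluates $\vF_n$ on $E_n$ (where $\Theta_n\in\Omega$), or enlarge the compact set by one step via $\|\Theta_n\|\le\|\Theta_{n-1}\|+\gamma_n\sqrt{N}C_\Omega$; and the conditioning at an aggregation step should be on $\cF_{n_l}$ (which contains $\Theta_{n_l}$ and $\mX_{n_l}$ but not $\mW_{n_l}$), not on $\cF_{n_l-1}$. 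Finally, boundedness of your residual $d_l$ suffices for the contraction recursion; summability is not needed.
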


\cref{lemma:boundedness_phi} and \cref{assumption:Decreasing step size and slowly increasing communication interval}-ii) imply that for any $n\geq 0$, $\E[\|\cJ_{\bot} \Theta_n \|^2 \Char_{\cap_{j\leq n-1} \{\Theta_j \in \Omega\}}] = \eta_{n+1}^2 \E[\|\phi_n\|^2 \Char_{\cap_{j\leq n-1} \{\Theta_j \in \Omega\} }] \leq C \eta_{n+1}^2$ for some constant $C$ that depends on $C_1$ and $\Omega$. Along with Assumption \ref{assumption:Boundedness on model parameter} such that $\|\cJ_{\bot}\Theta_n\|$ is always bounded per each trajectory, it means 
$$\|\cJ_{\bot}\Theta_n\|  \Char_{\cap_{j\leq n-1} \{\Theta_j \in \Omega\} }= O(\eta_n) \quad a.s.$$
Let $\{\Omega_m\}_{m\geq 0}$ be a sequence of increasing compact subset of $\R^{Nd}$ such that $\bigcup_{m} \Omega_m = \R^{Nd}$. Then, we know that for any $m\geq 0$,  
\begin{equation}\label{eqn:jtheta_n}
    \|\cJ_{\bot}\Theta_n\|  \Char_{\cap_{j\leq n-1} \{\Theta_j \in \Omega_m\} }= O(\eta_n) \quad a.s.
\end{equation}
\eqref{eqn:jtheta_n} indicates either one of the following two cases: 
\begin{itemize}
    \item there exists some trajectory-dependent index $m'$ such that each trajectory $\{\Theta_n\}_{n\geq 0}$ is always within the compact set $\Omega_{m'}$, i.e., $\Char_{\cap_{j\leq n} \{\Theta_j \in \Omega_{m'}\}} = 1$ (satisfied by the construction of increasing compact sets $\{\Omega_m\}_{m\geq 0}$ and \cref{assumption:Boundedness on model parameter}), and we have $\|\cJ_{\bot}\Theta_n\| = O(\eta_n)$ such that $\lim_{n\to\infty} \cJ_{\bot}\Theta_n = \vzero$;
    \item $\Theta_n$ will escape the compact set $\Omega_m$ eventually for any $m\geq 0$ in finite time such that $\Char_{\cap_{j\leq n-1} \{\Theta_j \in \Omega_m\} } = 0$ when $n$ is large enough. 
\end{itemize}
We can see the second case contradicts \cref{assumption:Boundedness on model parameter} because we assume every trajectory $\{\Theta_n\}_{n\geq 0}$ is within some compact set. Therefore, \eqref{eqn:jtheta_n} for any $m\geq 0$ is equivalent to showing $\|\cJ_{\bot}\Theta_n\| = O(\eta_n)$ and $\lim_{n\to\infty} \cJ_{\bot}\Theta_n = \vzero$. Under Assumption \ref{assumption:Decreasing step size and slowly increasing communication interval}-i) we can obtain similar result $\|\cJ_{\bot}\Theta_n\| = O(\gamma_n)$ by following the same steps as above, which completes the proof of \cref{lemma:consensus}.

\begin{proof}[Proof of \cref{lemma:boundedness_phi}]
We begin by rewriting \eqref{eqn:GD-SGD_each_node} in the matrix form,
\begin{equation}\label{eqn:update_matrix_form}
    \Theta_{n+1} = \cW_n \left(\Theta_n - \gamma_{n+1} \nabla \mF(\Theta_n, \mX_n) \right),
\end{equation}
where $\mX_n \triangleq (X_n^1, X^2_n, \cdots, X^N_n)$ and $\nabla \mF(\Theta_n, \mX_n) \triangleq [\nabla F_1(\theta^1_n, X^1_n)^T, \cdots, \nabla F_N(\theta^N_n, X^N_n)^T]^T \in \R^{Nd}$.
Recall $\theta_n \triangleq \frac{1}{N}\sum_{i=1}^N \theta_n^i \in \R^d$ and we have $[\theta_n^T, \cdots, \theta_n^T]^T = \frac{1}{N}(\vone\vone^T \otimes \mI_d) \Theta_n \in \R^{Nd}$. 

\textbf{Case 1 (Increasing communication interval $K_{\tau_n}$):}
By left multiplying \eqref{eqn:update_matrix_form} with $\frac{1}{N}(\vone\vone^T \otimes \mI_d)$, along with $\gamma_{n+1} = \eta_{n+1}/ K^{L+1}_{\tau_{n+1}}$ in \cref{assumption:Decreasing step size and slowly increasing communication interval}-ii), we have the following iteration
\begin{equation}\label{eqn:intermediate_consensus_update2}
    \frac{1}{N}(\vone\vone^T \otimes \mI_d) \Theta_{n+1} = \frac{1}{N}(\vone\vone^T \otimes \mI_d) \Theta_{n} - \eta_{n+1} \frac{1}{N}(\vone\vone^T \otimes \mI_d) \frac{\nabla \mF(\Theta_n, \mX_n)}{K^{L+1}_{\tau_{n+1}}},
\end{equation}
where the equality comes from $\frac{1}{N}(\vone\vone^T \otimes \mI_d)\cW_n = \frac{1}{N}(\vone\vone^T\mW_n \otimes \mI_d) = \frac{1}{N}(\vone\vone^T \otimes \mI_d)$.
With \eqref{eqn:update_matrix_form} and \eqref{eqn:intermediate_consensus_update2}, we have
\begin{equation}\label{eqn:intermediate_consensus_update3}
\begin{split}
        &\Theta_{n+1} - \frac{1}{N}(\vone\vone^T \otimes \mI_d) \Theta_{n+1} \\ =& \left(\cW_{n} - \frac{1}{N}(\vone\vone^T \otimes \mI_d) \right)\Theta_n - \eta_{n+1} \left(\cW_{n} - \frac{1}{N}(\vone\vone^T \otimes \mI_d)\right) \frac{\nabla \mF(\Theta_n, \mX_n)}{K^{L+1}_{\tau_{n+1}}}  \\
        =&  (\mJ_{\bot}\mW_{n} \otimes \mI_d) \cJ_{\bot} \Theta_{n} - \eta_{n+1} (\mJ_{\bot}\mW_{n} \otimes \mI_d) \frac{\nabla \mF(\Theta_n, X_n)}{K^{L+1}_{\tau_{n+1}}} \\
        =& \eta_{n+1}(\mJ_{\bot}\mW_{n} \otimes \mI_d)\left(\eta_{n+1}^{-1}\cJ_{\bot}\Theta_n - \frac{\nabla \mF(\Theta_n, \mX_n)}{K^{L+1}_{\tau_{n+1}}}  \right),
\end{split}
\end{equation}
where the second equality comes from $\cW_n - \frac{1}{N}(\vone\vone^T\otimes \mI_d) = (\mW_n - \frac{1}{N}\vone\vone^T) \otimes \mI_d = \mJ_{\bot}\mW_n \otimes \mI_d$ and $(\mJ_{\bot}\mW_n \otimes \mI_d) \cJ_{\bot} = \mJ_{\bot}\mW_n\mJ_{\bot} \otimes \mI_d = \mJ_{\bot}\mW_n \otimes \mI_d$. Let $a_n \triangleq \eta_{n}/\eta_{n+1}$, dividing both sides of \eqref{eqn:intermediate_consensus_update3} by $\eta_{n+2}$ gives
\begin{equation}\label{eqn:iteration_phi}
    \phi_{n+1} = a_{n+1} (\mJ_{\bot}\mW_{n} \otimes \mI_d)\left( \phi_{n} - \frac{\nabla \mF(\Theta_n, \mX_n)}{K^{L+1}_{\tau_{n+1}}} \right).
\end{equation}

Define the filtration $\{\cF_{n}\}_{n\geq 0}$ as $\cF_n \triangleq \sigma\{\Theta_0, \mX_0, \mW_0, \Theta_1, \mX_1, \mW_1, \cdots, \mX_{n-1}, \mW_{n-1}, \Theta_n, \mX_n\}$. Recursively computing \eqref{eqn:iteration_phi} w.r.t the time interval $[n_{l}, n_{l+1}]$ gives
\begin{equation}\label{eqn:recursive_equation}
\begin{split}
        \phi_{n_{l+1}} &= \left[\prod_{k=n_{l}+1}^{n_{l+1}} a_{k}\right] \left(\left[\mJ_{\bot}\prod_{k=n_l}^{n_{l+1}-1} \mW_{k}\right] \otimes  \mI_d\right) \phi_{n_{l}} \\
        &~~~~~- \sum_{k={n_l}}^{n_{l+1}-1} \left[\prod_{i=k+1}^{n_{l+1}} a_i\right] \left(\left[\mJ_{\bot}\prod_{i=k}^{n_{l+1}-1} \mW_i\right]\otimes \mI_d\right) \frac{\nabla \mF(\Theta_k, \mX_{k})}{K^{L+1}_{l+1}} \\
        &= \frac{\eta_{n_l+1}}{\eta_{n_{l+1}+1}} \left(\mJ_{\bot} \mW_{n_{l}} \otimes \mI_d\right) \phi_{n_l} - \sum_{k={n_l}}^{n_{l+1}-1} \frac{\eta_{n_l+1}}{\eta_{k+2}} \left(\mJ_{\bot} \mW_{n_{l}} \otimes \mI_d\right) \frac{\nabla \mF(\Theta_k, \mX_{k})}{K^{L+1}_{l+1}},
\end{split}
\end{equation}
where $\prod$ is the backward multiplier, the second equality comes from $\mJ_{\bot}\mW_n\mJ_{\bot} = \mJ_{\bot}\mW_n$ and $\mW_k = \mI_N$ for $k\notin \{n_l\}$. In \cref{assumption:Contraction property of communication matrix}, we have $\|\E_{\mW \sim \cP_{n_l}}[\mW^T\mJ_{\bot}\mW]\| = \|\E_{\mW \sim \cP_{n_l}}[\mW^T\mW - \mJ]\| \leq C_1 <1$. Then,
\begin{equation}\label{eqn:iteration_phi_nl}
\begin{split}
        &\E[\|\phi_{n_{l+1}}\|^2| \cF_{n_l}] \\
        =& \left(\frac{\eta_{n_l+1}} {\eta_{n_{l+1}+1}}\right)^2 \phi_{n_l}^T \E_{\mW_{n_{l} \sim \cP_{n_{l}}}}\left[\left(\mJ_{\bot} \mW_{n_{l}} \otimes \mI_d\right)^T\left(\mJ_{\bot} \mW_{n_{l}} \otimes \mI_d\right)\right] \phi_{n_l} \\
        &- 2\E\left[\left.\sum_{k=n_l}^{n_{l+1}-1} \frac{\eta^2_{n_l+1}}{\eta_{n_{l+1}+1}\eta_{k+2}} \phi_{n_l}^T \left(\mJ_{\bot} \mW_{n_{l}} \otimes \mI_d\right)^T\left(\mJ_{\bot} \mW_{n_{l}} \otimes \mI_d\right) \frac{\nabla \mF(\Theta_k, \mX_{k})}{K^{L+1}_{l+1}} \right| \cF_{n_l}\right] \\
        &+ \E\left[\left. \left\|\sum_{k={n_l}}^{n_{l+1}-1} \frac{\eta_{n_l+1}}{\eta_{k+2}} \left(\mJ_{\bot} \mW_{n_{l}} \otimes \mI_d\right) \frac{\nabla \mF(\Theta_k, \mX_{k})}{K^{L+1}_{l+1}}\right\|^2\right|\cF_{n_l}\right] \\
        \leq& \left(\frac{\eta_{n_l+1}} {\eta_{n_{l+1}+1}}\right)^2 \phi_{n_l}^T \E_{\mW_{n_{l} \sim \cP_{n_{l}}}}\left[\left(\mW_{n_{l}}^T\mJ_{\bot}\mW_{n_{l}} \otimes \mI_d\right)\right] \phi_{n_l} \\
        &- 2\left(\frac{\eta_{n_l+1}} {\eta_{n_{l+1}+1}}\right)^2\E\left[\left.\sum_{k=n_l}^{n_{l+1}-1} \phi_{n_l}^T \left(\mW_{n_{l}}^T\mJ_{\bot}\mW_{n_{l}} \otimes \mI_d\right) \frac{\nabla \mF(\Theta_k, \mX_{k})}{K^{L+1}_{l+1}} \right| \cF_{n_l}\right] \\
        &+ \left(\frac{\eta_{n_l+1}} {\eta_{n_{l+1}+1}}\right)^2 \E\left[\left.\left\|\left(\mJ_{\bot} \mW_{n_{l}} \otimes \mI_d\right) \sum_{k={n_l}}^{n_{l+1}-1} \frac{\nabla \mF(\Theta_k, \mX_{k})}{K^{L+1}_{l+1}}\right\|^2\right|\cF_{n_l}\right] \\ 
        \leq& \left(\frac{\eta_{n_l+1}} {\eta_{n_{l+1}+1}}\right)^2 C_1 \|\phi_{n_l}\|^2 + 2 \left(\frac{\eta_{n_l+1}} {\eta_{n_{l+1}+1}}\right)^2 C_1 \|\phi_{n_l}\| \E\left[\left.\left\|\sum_{k=n_l}^{n_{l+1}-1} \frac{\nabla \mF(\Theta_k, \mX_{k})}{K^{L+1}_{l+1}}\right\| \right| \cF_{n_l}\right] \\
        &+ \left(\frac{\eta_{n_l+1}} {\eta_{n_{l+1}+1}}\right)^2 C_1 \E\left[\left. \left\|\sum_{k=n_l}^{n_{l+1}-1} \frac{\nabla \mF(\Theta_k, \mX_{k})}{K^{L+1}_{l+1}} \right\|^2 \right| \cF_{n_l}\right],
\end{split}
\end{equation}
where the first inequality comes from $\mJ_{\bot}^T \mJ_{\bot} = \mJ_{\bot}$ and $\eta_{k+2} \geq \eta_{n_{l+1}+1}$ for $k\in[n_l, n_{l+1}-1]$. Then, we analyze the norm of the gradient $\|\nabla \mF(\Theta_k,\mX_k)\|$ in the second term on the RHS of \eqref{eqn:iteration_phi_nl} conditioned on $\cF_{n_l}$. By \cref{assumption:Boundedness on model parameter}, we assume $\Theta_{n_l}$ is within some compact set $\Omega$ at time $n_l$ such that $\sup_{i\in[N],X^i \in \cX_i} \nabla F_i(\theta_{n_l}^i, X^i) \leq C_{\Omega}$ for some constant $C_{\Omega}$. For $n=n_l +1$ and any $\mX \in \cX_1 \times \cX_2 \times \cdots \times \cX_N$, we have 
$$\|\nabla \mF(\Theta_{n_l + 1}, \mX)\| \leq \|\nabla \mF(\Theta_{n_l + 1}, \mX) - \nabla \mF(\Theta_{n_l}, \mX) \| + \|\nabla \mF(\Theta_{n_l}, \mX)\|.$$
Considering $\|\nabla \mF(\Theta_{n_l}, \mX)\|$, we have $\sup_{\mX}\|\nabla \mF(\Theta_{n_l}, \mX)\|^2 \leq \sum_{i=1}^N \sup_{X^i \in \cX_i} \| \nabla F_i(\theta^i_{n_l}, X^i) \|^2 \leq N C_{\Omega}^2$ such that $\|\nabla \mF(\Theta_{n_l}, \mX)\| \leq \sqrt{N}C_{\Omega}$. In addition, we have
\begin{equation}\label{eqn:nabla_f_one_step}
\begin{split}
    \|\nabla \mF(\Theta_{n_l + 1}, \mX) - \nabla \mF(\Theta_{n_l}, \mX) \|^2 =& \sum_{i=1}^N \|\nabla F_i(\theta_{n_l+1}^i, X^i) - \nabla F_i(\theta_{n_l}^i, X^i) \|^2 \\
    \leq & \sum_{i=1}^N L^2 \|\theta^i_{n_l+1} - \theta^i_{n_l} \|^2 \\
    \leq & \sum_{i=1}^N \gamma_{n_l+1}^2 L^2 \|\nabla F_i(\theta^i_{n_l},X^i_{n_l})\|^2 \\
    \leq & \gamma_{n_l+1}^2 C_{\Omega}^2 N L^2
\end{split}
\end{equation}
such that $\|\nabla \mF(\Theta_{n_l + 1}, \mX) - \nabla \mF(\Theta_{n_l}, \mX) \| \leq \gamma_{n_{l}+1}C_{\Omega} \sqrt{N} L$. Thus, for any $\mX$,
\begin{equation}\label{eqn:bound_on_nabla_f}
    \|\nabla \mF(\Theta_{n_l + 1}, \mX)\| \leq \left(1+ \gamma_{n_l+1}L\right) \sqrt{N} C_{\Omega}.
\end{equation}
For $n = n_l+2$ and any $\mX$, we have 
$$\|\nabla \mF(\Theta_{n_l + 2}, \mX)\| \leq \|\nabla \mF(\Theta_{n_l + 2}, \mX) - \nabla \mF(\Theta_{n_l+1}, \mX) \| + \|\nabla \mF(\Theta_{n_l+1}, \mX)\|.$$
Similar to the steps in \eqref{eqn:nabla_f_one_step}, we have
\begin{equation}
\begin{split}
    \|\nabla \mF(\Theta_{n_l + 2}, \mX) - \nabla \mF(\Theta_{n_l+1}, \mX) \|^2 \leq& \sum_{i=1}^N \gamma_{n_l+2}^2 L^2 \|\nabla F_i(\theta^i_{n_l+1}, X^i_{n_l+1}) \|^2 \\
    =& \gamma_{n_l+2}^2 L^2 \|\nabla \mF(\Theta_{n_l+1}, \mX_{n_l+1})\|^2.
\end{split}
\end{equation}
Then, $\|\nabla \mF(\Theta_{n_l + 2}, \mX)\| \leq (1+\gamma_{n_l+2}L) \sup_{\mX}  \|\nabla \mF(\Theta_{n_l + 1}, \mX)\|$ and, together with \eqref{eqn:bound_on_nabla_f}, we have
\begin{equation}\label{eqn:bound_on_nabla_F_nl_2}
    \|\nabla \mF(\Theta_{n_l + 2}, \mX)\| \leq (1+\gamma_{n_l+2}L)(1+\gamma_{n_l+1}L) \sqrt{N}C_{\Omega}.
\end{equation}
By induction, $\|\nabla \mF(\Theta_{n_l + m}, \mX)\| \leq \prod_{s=1}^m(1+\gamma_{n_l+s}L) \sqrt{N}C_{\Omega}$ for $m\in [1,K_{l+1}-1]$.

The next step is to analyze the growth rate of $\prod_{s=1}^m(1+\gamma_{n_l+s}L)$. By $1+x\leq e^x$ for $x\geq 0$, we have 
$$\prod_{s=1}^m(1+\gamma_{n_l+s}L) \leq e^{L\sum_{s=1}^m \gamma_{n_l+s}}.$$ 
For step size $\gamma_n = 1/n$, we have $L\sum_{s=1}^m \gamma_{n_l+s} = L\sum_{s=1}^m 1/(n_l+s) < L\sum_{s=1}^m 1/s < L(\log(m)+1)$ such that $\prod_{s=1}^m(1+\gamma_{n_l+s}L) < (em)^L$. Then,
\begin{equation}\label{eqn:bound_on_sum_of_gradient}
\begin{split}
     \left\|\sum_{k=n_l}^{n_{l+1}-1} \frac{\nabla \mF(\Theta_{k}, \mX_k)}{K^{L+1}_{l+1}}\right\| &\leq \frac{1}{K^{L+1}_{l+1}}\sum_{k=n_l}^{n_{l+1}-1}\|\nabla \mF(\Theta_{k}, \mX_k)\| \leq \frac{1}{K^{L+1}_{l+1}}\sqrt{N} e^L C_{\Omega} \sum_{m=0}^{K_{l+1}-1} m^L \\
     &\leq \sqrt{N}e^L C_{\Omega},
\end{split}
\end{equation}
where the last inequality comes from $\sum_{m=0}^{K_{l+1}-1} m^L < K_{l+1} (K_{l+1}-1)^L < K_{l+1}^{L+1}$. We can see the sum of the norm of the gradients are bounded by $\sqrt{N}e^L C_{\Omega}$, which only depends on the compact set $\Omega$ at time $n = n_l$.

Let $\delta_1 \in (C_1, 1)$. Since from \cref{assumption:Decreasing step size and slowly increasing communication interval}-ii), $\lim_{l\to\infty} \eta_{n_l+1}/\eta_{n_{l+1}+1} = 1$, there exists some large enough $l_0$ such that $(\frac{\eta_{n_l+1}} {\eta_{n_{l+1}+1}})^2 C_1 < \delta_1 < \delta_2 := (\delta_1 + 1)/2 < 1$ for any $l > l_0$. Note that $\delta_1$ depends only on $C_1$ and is independent of $\cF_{n}$. Then, let $\tilde{C}_{\Omega} := \sqrt{N}e^L C_{\Omega}$, we can rewrite \eqref{eqn:iteration_phi_nl} as
\begin{equation}\label{eqn:upperbound_phi_recursion}
\begin{split}
        \E[\|\phi_{n_{l+1}}\|^2|\cF_{n_l}] \leq& \delta_1 \|\phi_{n_l}\|^2 + 2 \delta_1 \Tilde{C}_{\Omega}\|\phi_{n_l}\|  + \delta_1 \tilde{C}_{\Omega}^2 \\
        \leq& \delta_2 \|\phi_{n_l}\|^2 + M_{\Omega},
\end{split}
\end{equation}
where $M_{\Omega}$ satisfies  $M_{\Omega} > 8\tilde{C}_{\Omega}^2/(1-\delta_1) + \delta_1 \tilde{C}_{\Omega}^2$, which is derived from rearranging \eqref{eqn:upperbound_phi_recursion} as $M_{\Omega} \geq (\delta_1 - \delta_2) \|\phi_{n_l}\|^2  + 2 \delta_1 \Tilde{C}_{\Omega}\|\phi_{n_l}\|  + \delta_1 \tilde{C}_{\Omega}^2$ and upper bounding the RHS. Upon noting that $\Char_{\cap_{j \leq n_{l}} \{\Theta_j \in \Omega\}} \leq \Char_{\cap_{j \leq n_{l-1}} \{\Theta_j \in \Omega\}}$, we obtain
\begin{equation}
    \E\left[\|\phi_{n_{l+1}}\|^2\Char_{\cap_{j \leq n_{l}} \{\Theta_j \in \Omega\}}\right] \leq \delta_2 \E\left[\|\phi_{n_{l}}\|^2\Char_{\cap_{j \leq n_{l-1}} \{\Theta_j \in \Omega\}}\right] + M_{\Omega}.
\end{equation}
The induction leads to $\E[\|\phi_{n_{l+1}}\|^2\Char_{\cap_{j \leq n_{l}} \{\Theta_j \in \Omega\}}] \leq \delta_2^{n_{l+1} - n_{l_0}}\E[\|\phi_{n_{l_0}}\|^2\Char_{\cap_{j \leq n_{l_0-1}} \{\Theta_j \in \Omega\}}] + M/(1-\delta_2) < \infty$ for any $l \geq l_0$. Besides, for $m \in (n_{l}, n_{l+1})$, by following the above steps \eqref{eqn:iteration_phi_nl} applied to \eqref{eqn:iteration_phi}, we have
\begin{equation}\label{eqn:recursion_middle_term}
\begin{split}
        \E[\|\phi_{m}\|^2 | \cF_{n_l}] \leq& \left(\frac{\eta_{n_l+1}}{\eta_{m+1}}\right)^2 \|\phi_{n_l}\|^2 + 2 \left(\frac{\eta_{n_l+1}}{\eta_{m+1}}\right)^2 \|\phi_{n_l} \| \E\left[\left.\left\|\sum_{k=n_l}^{m-1} \frac{\nabla \mF(\Theta_k, \mX_{k})}{K^{L+1}_{l+1}}\right\| \right| \cF_{n_l}\right] \\
    +& \left(\frac{\eta_{n_l+1}}{\eta_{m+1}}\right)^2 \E\left[\left.\left\|\sum_{k=n_l}^{m-1} \frac{\nabla \mF(\Theta_k, \mX_{k})}{K^{L+1}_{l+1}}\right\|^2 \right| \cF_{n_l}\right].
\end{split}
\end{equation}
By \eqref{eqn:bound_on_sum_of_gradient} we already show that $\|\sum_{k=n_l}^{n_{l+1}-1} \frac{\nabla \mF(\Theta_k, \mX_{k})}{K^{L+1}_{l+1}}\| < \infty$ conditioned on $\cF_{n_l}$. Therefore, $\E[\|\phi_m\|^2 \Char_{\cap_{j \leq n_{l}} \{\Theta_j \in \Omega\}}] < \infty$ for $m \in (n_l, n_{l+1})$. This completes the boundedness analysis of $\E[\|\phi_n\|^2\Char_{\cap_{j \leq n-1} \{\Theta_j \in \Omega\}}]$.

\textbf{Case 2 (Bounded communication interval $K_{\tau_n}\leq K$): } In this case, we do not need the auxiliary step size $\eta_n$ and can directly work on $\gamma_n = 1/n^a$ for $a \in (0.5,1]$. Similar to \eqref{eqn:intermediate_consensus_update3}, we have
\begin{equation}
    \Theta_{n+1} - \frac{1}{N}(\vone\vone^T \otimes \mI_d) \Theta_{n+1} = \gamma_{n+1}(\mJ_{\bot}\mW_{n} \otimes \mI_d)\left(\gamma_{n+1}^{-1}\cJ_{\bot}\Theta_n - \nabla \mF(\Theta_n, \mX_n) \right),
\end{equation}
and let $b_n \triangleq \gamma_{n}/\gamma_{n+1}$, dividing both sides of above equation by $\gamma_{n+2}$ gives
\begin{equation}\label{eqn:iteration_phi_v2}
    \phi_{n+1} = b_{n+1} (\mJ_{\bot}\mW_{n} \otimes \mI_d)\left( \phi_{n} - \nabla \mF(\Theta_n, \mX_n)\right).
\end{equation}
Then, by following the similar steps in \eqref{eqn:recursive_equation} and \eqref{eqn:iteration_phi_nl}, we obtain
\begin{equation}
\begin{split}
        \E[\|\phi_{n_{l+1}}\|^2| \cF_{n_l}] \leq& \left(\frac{\gamma_{n_l+1}} {\gamma_{n_{l+1}+1}}\right)^2 C_1 \Bigg(\|\phi_{n_l}\|^2 + 2 \|\phi_{n_l}\| \E\left[\left.\left\|\sum_{k=n_l}^{n_{l+1}-1} \nabla \mF(\Theta_k, \mX_{k})\right\| \right| \cF_{n_l}\right] \\
        &+ \E\left[\left. \left\|\sum_{k=n_l}^{n_{l+1}-1} \nabla \mF(\Theta_k, \mX_{k}) \right\|^2 \right| \cF_{n_l}\right]\Bigg).
\end{split}
\end{equation}
Also similar to \eqref{eqn:bound_on_nabla_f} - \eqref{eqn:bound_on_sum_of_gradient}, we can bound the sum of the norm of the gradients as
\begin{equation}\label{eqn:intermediate_bound_gradient_sum}
    \left\|\sum_{k=n_l}^{n_{l+1}-1} \nabla \mF(\Theta_k, \mX_{k})\right\| \leq \sum_{k=n_l}^{n_{l+1}-1} \left[\prod_{s=n_{l}}^k(1+\gamma_{s+1}L)\right] \sqrt{N}C_{\Omega}.
\end{equation}
Now that $K_l$ is bounded above by $K$, $\prod_{s=n_{l}}^k(1+\gamma_{s+1}L) \leq e^{L\sum_{s=n_l}^{k}\gamma_s+1} < e^{L\sum_{s=0}^{K-1}\gamma_{s+1}}:= C_{K}$. Then, we further bound \eqref{eqn:intermediate_bound_gradient_sum} as
\begin{equation}
    \left\|\sum_{k=n_l}^{n_{l+1}-1} \nabla \mF(\Theta_k, \mX_{k})\right\| \leq \sqrt{N} K C_K C_{\Omega}.
\end{equation}
The subsequent proof is basically a replication of \eqref{eqn:upperbound_phi_recursion} - \eqref{eqn:recursion_middle_term} and is therefore omitted.
\end{proof}

\section{Proof of \cref{theorem: almost_sure_convergence}}\label{appendix:proof_of_a.s.convergence}
We focus on analyzing the convergence property of $\theta$, which is obtained by left multiplying \eqref{eqn:update_matrix_form} with $\frac{1}{N}(\vone^T \otimes \mI_d)$, i.e.,
\begin{equation}\label{eqn:intermediate_consensus_update}
\begin{split}
   \theta_{n+1} &= \frac{1}{N}(\vone^T \otimes \mI_d) \theta_{n+1} \\
   &= \theta_{n} - \gamma_{n+1}\frac{1}{N}(\vone^T \otimes \mI_d) \nabla \mF(\Theta_n, \mX_n).
\end{split}
\end{equation}
where the second equality comes from $\mW_n$ being doubly stochastic and $\frac{1}{N}(\vone^T \otimes \mI_d)\cW_n = \frac{1}{N}(\vone^T\mW_n \otimes \mI_d) = \frac{1}{N}(\vone^T \otimes \mI_d)$.

For self-contained purpose, we first give the almost sure convergence result for the stochastic approximation that will be used in our proof.
\begin{theorem}[Theorem 2 \cite{delyon1999convergence}]\label{theorem:a.s.convergence_SA}
    Consider the stochastic approximation in the form of 
    \begin{equation}\label{eqn:update_SA}
        \theta_{n+1} = \theta_n + \gamma_{n+1}h(\theta_n) + \gamma_{n+1} e_{n+1} + \gamma_{n+1} r_{n+1}.
    \end{equation}
    Assume that 
    \begin{enumerate}
        \item[C1.] w.p.1, the closure of $\{\theta_n\}_{n\geq 0}$ is a compact subset of $\R^d$;
        \item[C2.] $\{\gamma_n\}$ is a decreasing sequence of positive number such that $\sum_{n} \gamma_n = \infty$;
        \item[C3.] w.p.1, $\lim_{p\to\infty} \sum_{n=1}^p \gamma_n e_{n}$ exists and is finite. Moreover, $\lim_{n\to\infty} r_n = 0$.
        \item[C4.] vector-valued function $h$ is continuous on $R^d$ and there exists a continuously differentiable function $V:\R^d \to \R$ such that $\langle \nabla V(\theta), h(\theta) \rangle \leq 0$ for all $\theta\in\R^d$. Besides, the interior of $V(\cL)$ is empty where $\cL \triangleq \{\theta \in \R^d: \langle \nabla V(\theta), h(\theta) \rangle = 0\}$.
    \end{enumerate}
    Then, w.p.1, $\limsup_{n} d(\theta_n, \cL) = 0$. \qed
\end{theorem}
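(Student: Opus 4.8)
The plan is to establish this classical result via the ODE method (see, e.g., \cite{borkar2009stochastic,kushner2003stochastic}): I would realize the iteration \eqref{eqn:update_SA} as an asymptotic pseudotrajectory of the mean flow $\dot{\theta} = h(\theta)$, and then use C4 through a LaSalle-type invariance argument to pin every limit point of $\{\theta_n\}$ inside $\cL$. Concretely, set $t_0 = 0$ and $t_n \triangleq \sum_{k=1}^n \gamma_k$, so that $t_n \to \infty$ by C2, and let $\bar\theta(\cdot)$ be the piecewise-linear interpolation determined by $\bar\theta(t_n) = \theta_n$. By C1 the closure of $\{\theta_n\}$ lies in a (path-dependent) compact set $\mathcal{K} \subset \R^d$; since $h$ is continuous by C4, it is bounded and uniformly continuous on $\mathcal{K}$, so the flow $\Phi_s$ of $\dot\theta = h(\theta)$ is well defined there. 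The objective becomes to prove the asymptotic pseudotrajectory property, namely $\lim_{t\to\infty} \sup_{0 \le s \le T} \|\bar\theta(t+s) - \Phi_s(\bar\theta(t))\| = 0$ for every $T > 0$.

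\textbf{Tracking the flow.} Fix $T > 0$ and any window with $t_m - t_n \le T$. Summing \eqref{eqn:update_SA} over the window gives
\[
\theta_m - \theta_n = \sum_{k=n}^{m-1}\gamma_{k+1} h(\theta_k) + \sum_{k=n}^{m-1}\gamma_{k+1} e_{k+1} + \sum_{k=n}^{m-1}\gamma_{k+1} r_{k+1}.
\]
I would bound the three pieces separately. The middle sum is a tail of the series $\sum_k \gamma_k e_k$, which converges by C3; hence by the Cauchy criterion it vanishes uniformly over all such windows as $n \to \infty$. The last sum is bounded by $(\sup_{k \ge n}\|r_{k+1}\|)(t_m - t_n) \le T \sup_{k \ge n}\|r_k\|$, which tends to $0$ by C3. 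The drift sum is a Riemann approximation of $\int_0^{t_m - t_n} h(\bar\theta(t_n + u))\,du$; comparing it with $\int_0^{s} h(\Phi_u(\bar\theta(t_n)))\,du$ via the uniform continuity of $h$ on $\mathcal{K}$ and Gr\"onwall's inequality yields the asymptotic pseudotrajectory property. Here C2 (together with $\gamma_n \to 0$, the regime of interest) guarantees the interpolation is asymptotically fine-grained so the Riemann error is negligible.

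\textbf{Invariance principle.} Once $\bar\theta$ is an asymptotic pseudotrajectory, its limit set $L \triangleq \bigcap_{t \ge 0}\overline{\{\bar\theta(s): s \ge t\}}$ is nonempty, compact, connected, invariant under $\Phi$, and internally chain transitive — standard consequences for asymptotic pseudotrajectories \cite{borkar2009stochastic}. Condition C4 makes $V$ a Lyapunov function for $\cL$, since along the flow $\tfrac{d}{dt} V(\Phi_t(x)) = \langle \nabla V(\Phi_t(x)), h(\Phi_t(x))\rangle \le 0$, with equality exactly when $\Phi_t(x) \in \cL$. The invariance principle for chain transitive sets then gives $V(L) \subseteq V(\cL)$; as $L$ is connected, $V(L)$ is an interval contained in $V(\cL)$, and the empty-interior hypothesis on $V(\cL)$ forces $V(L)$ to be a single point, i.e. $V$ is constant on $L$. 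Invariance of $L$ combined with the strict decrease of $V$ off $\cL$ then yields $\langle \nabla V, h\rangle \equiv 0$ on $L$, that is $L \subseteq \cL$, which is exactly $\limsup_n d(\theta_n, \cL) = 0$.

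\textbf{Main obstacle.} The crux is the asymptotic pseudotrajectory step, and the key insight is that C3 supplies precisely the right control: the convergence of $\sum_k \gamma_k e_k$ (not a square-summability condition $\sum_k \gamma_k^2 < \infty$, which is deliberately not assumed here) is what annihilates the accumulated noise uniformly over time windows, while $r_n \to 0$ handles the vanishing bias. The Gr\"onwall comparison must be carried out with $h$ only continuous, so the uniform continuity on the compact set $\mathcal{K}$ furnished by C1 is essential. The remaining subtlety is the correct use of the empty-interior condition to exclude spurious limit points off $\cL$; rather than reprove the chain-transitivity invariance principle, I would invoke it from the ODE-method literature \cite{borkar2009stochastic,delyon1999convergence}.
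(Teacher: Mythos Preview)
The paper does not prove this statement at all: it is quoted verbatim as Theorem~2 of \cite{delyon1999convergence} and marked with a \qed, serving purely as a black-box tool that the authors later invoke to establish their Theorem~\ref{theorem: almost_sure_convergence}. There is therefore no ``paper's own proof'' to compare against.

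That said, your sketch is a sound outline of the standard ODE-method proof (Bena\"im/Borkar style): interpolate, show the interpolated path is an asymptotic pseudotrajectory of $\dot\theta = h(\theta)$ by controlling the noise via the Cauchy criterion on $\sum_k \gamma_k e_k$ and the bias via $r_n\to 0$, then apply the chain-transitivity/LaSalle invariance principle together with the empty-interior condition on $V(\cL)$ to force the limit set into $\cL$. One technical caveat: the Gr\"onwall step as you phrase it typically wants $h$ locally Lipschitz, not merely continuous; with only continuity on the compact $\mathcal{K}$ you would instead argue by precompactness of time-shifts and Arzel\`a--Ascoli (the route taken in \cite{kushner2003stochastic} and Bena\"im's original work), or else note that the flow comparison can be done without uniqueness of ODE solutions since every subsequential limit is \emph{a} solution of the ODE. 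This is a minor gap in the write-up rather than a flaw in the overall strategy.
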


We can rewrite \eqref{eqn:intermediate_consensus_update} as
\begin{equation}\label{eqn:consensus_decomposition}
\begin{split}
        \theta_{n+1} =& \theta_{n} - \gamma_{n+1}\frac{1}{N}(\vone^T \otimes \mI_d) \nabla \mF(\Theta_n, \mX_n) \\
        =& \theta_{n} - \gamma_{n+1}\nabla f(\theta_n) - \gamma_{n+1}\left(\frac{1}{N}\sum_{i=1}^N \nabla f_i(\theta_n^i) - \nabla f(\theta_n)\right) \\
        &- \gamma_{n+1}\left(\frac{1}{N}\sum_{i=1}^N \nabla F_i(\theta_n^i, X_n^i) - \frac{1}{N}\sum_{i=1}^N \nabla f_i(\theta_n^i)\right),
\end{split}
\end{equation}
and work on the converging behavior of the third and fourth term. By definition of function $\nabla f(\cdot)$, we have
\begin{equation}
    r_n \triangleq \frac{1}{N}\sum_{i=1}^N \nabla f_i(\theta_n^i) - \nabla f(\theta_n) = \frac{1}{N}\sum_{i=1}^N\left[\nabla f_i(\theta_n^i) - \nabla f_i(\theta_n)\right].
\end{equation}
By the Lipschitz continuity of function $\nabla F_i(\cdot,X)$ in \eqref{eqn:lipschitz_continuity}, we have
\begin{equation}\label{eqn:r1n}
    \left\|r_n\right\| \leq \frac{1}{N}\sum_{i=1}^N L \|\theta_n^i - \theta_n\| \leq \frac{L}{\sqrt{N}} \left\|\Theta_n - \frac{1}{N}(\vone\vone^T \otimes \mI_d) \Theta_n\right\| = \frac{L}{\sqrt{N}} \|\cJ_{\bot}\Theta_n\|,
\end{equation}
where the second inequality comes from the Cauchy-Schwartz inequality. In \cref{appendix:proof_of_consensus}, we have shown $\lim_n \cJ_{\bot}\Theta_n = \vzero$ almost surely such that $\lim_{n\to\infty} r_n = 0$ almost surely.

Next, we further decompose the fourth term in \eqref{eqn:consensus_decomposition}. For an ergodic transition matrix $\mP$ and a function $v$ associated with the same state space $\cX$, define the operator $\mP^kv(x) \triangleq \sum_{y\in\cX} \mP^k(x,y)v(y)$ for the $k$-step transition probability $\mP^k(x,y)$. Denote by $\mP_1, \cdots, \mP_N$ the underlying transition matrices of all $N$ agents with corresponding stationary distribution $\vpi_1, \cdots, \vpi_N$. Then, for every function $\nabla F_i(\theta^i, \cdot) : \cX_i \to \R^d$, there exists a corresponding function $m_{\theta^i}(\cdot): \cX_i \to \R^d$ such that
\begin{equation}\label{eqn:poisson_equation}
    m_{\theta^i}(x) - \mP_i m_{\theta^i}(x) = \nabla F_i(\theta^i, x) - \nabla f_i(\theta^i).
\end{equation}
The solution of the Poisson equation \eqref{eqn:poisson_equation} has been studied in the literature, e.g., \cite{chen2020explicit,hu2022efficiency}. For self-contained purpose, we derive the closed-form $m_{\theta^i}(x)$ from scratch. First of all, we can obtain function $m_{\theta^i}(x)$ in the recursive form as follows,
\begin{equation}\label{eqn:recursive_sol_poisson_eq}
        m_{\theta^i}(x) = \nabla F_i(\theta^i, x) - \nabla f_i(\theta^i) + \mP_i[\nabla F_i(\theta^i, \cdot) - \nabla f_i(\theta^i)](x) + \mP_i^2[\nabla F_i(\theta^i, \cdot) - \nabla f_i(\theta^i)](x) + \cdots.
\end{equation}
It is not hard to check that \eqref{eqn:recursive_sol_poisson_eq} satisfies \eqref{eqn:poisson_equation}. Note that by induction we get 
\begin{equation}\label{eqn:k-th_momoent}
    \mP^k_i - \vone(\vpi_i)^T = \left(\mP_i - \vone (\vpi_i)^T\right)^k, \forall k \in \mathbb{N}, k \geq 1.
\end{equation}
Then, we can further simplify \eqref{eqn:recursive_sol_poisson_eq}, and the closed-form expression of $m_{\theta^i}(x)$ is given as
\begin{equation}\label{eqn:closed_form_sol_poisson_eq}
\begin{split}
        m_{\theta^i}(x) &= \sum_{y \in \cX_i} \left[\mP_i - \vone(\vpi_i)^T\right]^0(x,y) (\nabla F_i(\theta^i, y)-\nabla f_i(\theta^i)) \\
        &\quad + \sum_{y \in \cX^i} \left[\mP_i^1 - \vone(\vpi_i)^T\right](x,y) (\nabla F_i(\theta^i, y)-\nabla f_i(\theta^i)) + \cdots \\
        &= \sum_{y \in \cX_i} \left[\sum_{k=0}^{\infty} \left[\mP_i - \vone(\vpi_i)^T\right]^k\right](x,y) (\nabla F_i(\theta^i, y)-\nabla f_i(\theta^i)) \\
        &= \sum_{y \in \cX_i} \left(\mI - \mP_i + \vone(\vpi_i)^T\right)^{-1}(x,y) (\nabla F_i(\theta^i, y)-\nabla f_i(\theta^i)), \\
\end{split}
\end{equation}
where the fourth equality comes from \eqref{eqn:k-th_momoent}.
Note that the so-called `fundamental matrix' $(\mI - \mP_i + \vone(\vpi_i)^T)^{-1}$ exists for every ergodic Markov chain $X^i$ from \cref{assumption:Ergodicity of Markovian sampling}. Since function $\nabla F_i$ is Lipschitz continuous, we have the following lemma.
\begin{lemma}\label{lemma:continuity_sol}
    Under assumption (A1), functions $m_{\theta^i}(x)$ and $\mP_i m_{\theta^i}(x)$ are both Lipschitz continuous in $\theta^i$ for any $x\in\cX_i$.
\end{lemma}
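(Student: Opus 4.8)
The plan is to exploit the closed-form expression \eqref{eqn:closed_form_sol_poisson_eq}, in which the fundamental matrix $\mZ_i \triangleq (\mI - \mP_i + \vone(\vpi_i)^T)^{-1}$ depends only on the fixed transition matrix $\mP_i$ and stationary distribution $\vpi_i$ of agent $i$, and therefore carries no dependence on $\theta^i$. This reduces the claim to a statement about a finite linear combination, with constant coefficients, of the centered gradients $\nabla F_i(\theta^i, y) - \nabla f_i(\theta^i)$. Throughout, let $\theta_1,\theta_2 \in \R^d$ denote two candidate values of the parameter $\theta^i$.

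First I would show that for each fixed $y \in \cX_i$ the map $\theta \mapsto \nabla F_i(\theta, y) - \nabla f_i(\theta)$ is Lipschitz. The per-sample $L$-smoothness \eqref{eqn:lipschitz_continuity} gives $\|\nabla F_i(\theta_1, y) - \nabla F_i(\theta_2, y)\| \leq L\|\theta_1 - \theta_2\|$ directly. Since $\cX_i$ is finite, $\nabla f_i(\theta) = \sum_{x \in \cX_i}\vpi_i(x)\nabla F_i(\theta, x) = \E_{X \sim \vpi_i}[\nabla F_i(\theta, X)]$, so passing the same bound through the (finite) expectation yields $\|\nabla f_i(\theta_1) - \nabla f_i(\theta_2)\| \leq L\|\theta_1 - \theta_2\|$. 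By the triangle inequality the centered gradient is then $2L$-Lipschitz.

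Next, subtracting \eqref{eqn:closed_form_sol_poisson_eq} evaluated at $\theta_1$ and $\theta_2$ and applying the triangle inequality termwise gives
\begin{equation}
\|m_{\theta_1}(x) - m_{\theta_2}(x)\| \leq \left(\sum_{y \in \cX_i} |\mZ_i(x,y)|\right) 2L \, \|\theta_1 - \theta_2\|.
\end{equation}
Because the chain is finite and ergodic (Assumption \ref{assumption:Ergodicity of Markovian sampling}), the row sum $\sum_{y \in \cX_i} |\mZ_i(x,y)|$ is a finite constant, which establishes the Lipschitz continuity of $m_{\theta^i}(x)$. For $\mP_i m_{\theta^i}(x)$, the cleanest route is to rearrange the Poisson equation \eqref{eqn:poisson_equation} as $\mP_i m_{\theta^i}(x) = m_{\theta^i}(x) - \left(\nabla F_i(\theta^i, x) - \nabla f_i(\theta^i)\right)$; both terms on the right-hand side are now known to be Lipschitz in $\theta^i$, so their difference is as well. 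Equivalently, $\mP_i m_{\theta^i}(x) = \sum_{z \in \cX_i} \mP_i(x,z) m_{\theta^i}(z)$ is a convex combination of Lipschitz functions and inherits the same constant.

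I do not anticipate a genuine obstacle here: the statement is essentially a consequence of the finiteness of the state space together with Lipschitzness of the centered gradient. The only point requiring care is verifying that $\mZ_i$ is independent of $\theta^i$ — once this is observed, the bound is a routine triangle-inequality estimate, and the agent's own parameter $\theta^i$ enters only through the $2L$-Lipschitz centered gradients.
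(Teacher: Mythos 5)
Your proposal is correct and follows essentially the same route as the paper: both start from the closed-form solution \eqref{eqn:closed_form_sol_poisson_eq}, observe that the fundamental matrix $(\mI - \mP_i + \vone(\vpi_i)^T)^{-1}$ is independent of $\theta^i$, and reduce the claim to the $L$-Lipschitzness of $\nabla F_i(\cdot,y)$ (hence of $\nabla f_i$, via the finite stationary average) together with finiteness of $\cX_i$. Your handling of $\mP_i m_{\theta^i}(x)$ as a convex combination (or via rearranging the Poisson equation) is a mildly cleaner version of the paper's direct bound $\sum_y \mP_i(x,y)\|m_{\theta^i_1}(y) - m_{\theta^i_2}(y)\|$, but it is the same argument in substance.
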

\begin{proof}
    By \eqref{eqn:closed_form_sol_poisson_eq}, for any $\theta^i_1, \theta^i_2 \in \R^d$ and $x\in \cX_i$, we have
\begin{equation}
\begin{split}
    \left\| m_{\theta^i_1}(x) -  m_{\theta^i_2}(x) \right\| \leq& \left\|\sum_{y \in \cX_i} \left(\mI - \mP_i + \vone(\vpi_i)^T\right)^{-1}(x,y)\left[\nabla F_i(\theta^i_1, y) - \nabla F_i(\theta^i_2, y)\right]  \right\| \\
    & + \left\|\nabla f_i(\theta^i_1) - \nabla f_i(\theta^i_2)\right\| \\ 
    \leq& C_i \max_{y\in\cX_i}\left\| \nabla F_i(\theta^i_1, y) - \nabla F_i(\theta^i_2, y)\right\| + \left\|\nabla f_i(\theta^i_1) - \nabla f_i(\theta^i_2)\right\| \\
    \leq& (C_i L+1) \|\theta^i_1 - \theta^i_2\|,
\end{split}
\end{equation}
where the second inequality holds for a constant $C_i$ that is the largest absolute value of the entry in the matrix $(\mI - \mP_i + \vone(\vpi_i)^T)^{-1}$. Therefore, $m_{\theta^i}(x)$ is Lipschitz continuous in $\theta^i$. Moreover, following the similar steps as above, we have
\begin{equation}
\begin{split}
        \left\|\mP_i m_{\theta^i_1}(x) - \mP_i m_{\theta^i_2}(x) \right\| &= \left\| \sum_{y \in \cX_i} \mP_i(x,y) m_{\theta^i_1}(y) - \sum_{y \in \cX_i} \mP_i(x,y) m_{\theta^i_2}(y) \right\| \\
        &= \left\| \sum_{y \in \cX_i} \mP_i(x,y) \left( m_{\theta^i_1}(y) -  m_{\theta^i_2}(y)\right) \right\| \\
        &\leq \sum_{y \in \cX^i} \mP_i(x,y) \left\| m_{\theta^i_1}(y) -  m_{\theta^i_2}(y) \right\| \\
        &\leq |\cX_i| \left\| m_{\theta^i_1}(y) -  m_{\theta^i_2}(y) \right\| \\
        &\leq |\cX_i|(C_i L+1) \|\theta^i_1 - \theta^i_2\|
\end{split}
\end{equation}
such that $\mP_i m_{\theta^i}(x)$ is also Lipschitz continuous in $\theta^i$, which comletes the proof.
\end{proof}

Now with \eqref{eqn:poisson_equation} we can decompose $\nabla F_i(\theta_n^i, X_n^i) - \nabla f_i(\theta_n^i)$ as
\begin{equation}\label{eqn:48}
\begin{split}
    \nabla F_i(\theta_n^i, X_n^i) - \nabla f_i(\theta_n^i) =& m_{\theta_n^i}(X_n^i) - \mP_i m_{\theta_n^i}(X_n^i)\\
    =& \underbrace{m_{\theta_n^i}(X_n^i) - \mP_i m_{\theta_n^i}(X_{n-1}^i)}_{e^i_{n+1}} \\
    &+ \underbrace{\mP_i m_{\theta_n^i}(X_{n-1}^i)}_{\nu_n^i} - \underbrace{\mP_i m_{\theta_{n+1}^i}(X_n^i)}_{\nu_{n+1}^i} \\
    &+ \underbrace{\mP_i m_{\theta_{n+1}^i}(X_n^i) - \mP_i m_{\theta_n^i}(X_n^i)}_{\xi^{i}_{n+1}}.
\end{split}
\end{equation}
Here $\{\gamma_n e^i_n\}$ is a Martingale difference sequence and we need the martingale convergence theorem in Theorem \ref{theorem:martingale_convergence_theorem}.
\begin{theorem}[Theorem $6.4.6$ \cite{ross1996stochastic}]\label{theorem:martingale_convergence_theorem}
    For an $\cF_n$-Martingale $S_n$, set $X_{n-1} = S_n - S_{n-1}$. If for some $1\leq p \leq 2$, 
    \begin{equation}
        \sum_{n=1}^{\infty} \E[\|X_{n-1}\|^p|\cF_{n-1}] < \infty \quad a.s.
    \end{equation}
    then $S_n$ converges almost surely. \qed
\end{theorem}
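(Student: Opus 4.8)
The plan is to reduce the almost-sure summability hypothesis to a uniform bound through a predictable stopping-time (localization) argument, and then invoke a standard martingale convergence theorem on each stopped process. First I would reduce to the scalar case: writing $S_n = (S_n^{(1)}, \dots, S_n^{(d)})$, each coordinate $S_n^{(j)}$ is itself an $\cF_n$-martingale, and since $|X_{n-1}^{(j)}| \le \|X_{n-1}\|$ we have $\sum_n \E[|X_{n-1}^{(j)}|^p \mid \cF_{n-1}] \le \sum_n \E[\|X_{n-1}\|^p \mid \cF_{n-1}] < \infty$ a.s.; if every coordinate converges a.s.\ then so does $S_n$. Hence it suffices to treat a real-valued martingale, whose increments $X_{n-1} = S_n - S_{n-1}$ satisfy the martingale-difference property $\E[X_{n-1} \mid \cF_{n-1}] = 0$.

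Next I would introduce the predictable increasing process $A_n \triangleq \sum_{k=1}^n \E[|X_{k-1}|^p \mid \cF_{k-1}]$. Because each summand is $\cF_{k-1}$-measurable, $A_{n+1}$ is $\cF_n$-measurable, so $\tau_M \triangleq \inf\{n \ge 1 : A_{n+1} > M\}$ is a genuine $\cF_n$-stopping time for every $M \in \N$. The hypothesis states exactly that $A_\infty \triangleq \lim_n A_n < \infty$ a.s. The stopped martingale $S_n^{\tau_M} = S_{n \wedge \tau_M}$ has increments $X_{k-1}\Char_{\{k \le \tau_M\}}$, and since the indicator $\Char_{\{k \le \tau_M\}}$ is $\cF_{k-1}$-measurable, its conditional $p$-th moments sum to $A_{n \wedge \tau_M} \le M$ by the very definition of $\tau_M$.

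The core estimate is to turn this uniform control of increment moments into an $L^p$ bound on the stopped martingale itself. For $p = 2$ this is immediate from orthogonality of martingale differences: $\E|S_n^{\tau_M} - S_0|^2 = \sum_{k=1}^n \E[|X_{k-1}|^2 \Char_{\{k \le \tau_M\}}] = \E[A_{n \wedge \tau_M}] \le M$. For general $1 \le p \le 2$ I would invoke the von Bahr--Esseen / Burkholder moment inequality for martingale differences, giving $\E|S_n^{\tau_M} - S_0|^p \le C_p \sum_{k=1}^n \E[|X_{k-1}|^p \Char_{\{k \le \tau_M\}}] = C_p\,\E[A_{n \wedge \tau_M}] \le C_p M$. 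Either way $\sup_n \E|S_n^{\tau_M}|^p < \infty$, so $S^{\tau_M}$ is bounded in $L^p$ and hence (as $p \ge 1$) in $L^1$; by Doob's forward martingale convergence theorem it converges almost surely. Finally I would patch the pieces together: on the event $\{A_\infty \le M\}$ one has $A_{n+1} \le M$ for all $n$, so $\tau_M = \infty$ and $S_n = S_n^{\tau_M}$ converges a.s.\ there; taking the union over $M \in \N$ yields convergence on $\{A_\infty < \infty\}$, which has probability one by hypothesis.

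The main obstacle is precisely the mismatch between the \emph{almost-sure} summability assumption and the \emph{in-expectation} control required by $L^p$ martingale theory: one cannot simply take expectations, since $\E[A_\infty]$ may be infinite even when $A_\infty < \infty$ a.s. The predictable stopping times $\tau_M$ are what resolve this, trading the a.s.\ bound for a family of processes each of which is genuinely $L^p$-bounded, at the cost of a routine union-over-$M$ patching. The only other genuinely nontrivial input is the moment inequality needed for $1 \le p < 2$ (the $p = 2$ case being elementary via orthogonality); this is also where the restriction $p \le 2$ enters, as the bound $\E|\sum_k d_k|^p \le C_p \sum_k \E|d_k|^p$ fails for $p > 2$.
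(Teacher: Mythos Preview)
The paper does not actually prove this statement: it is quoted verbatim as Theorem~6.4.6 of \cite{ross1996stochastic} and used as a black-box tool (note the \qed\ immediately after the statement, with no intervening argument). So there is no ``paper's own proof'' to compare against.

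That said, your proposal is a correct and standard proof of the result. The reduction to the scalar case, the localization via the predictable stopping times $\tau_M$ built from the increasing process $A_n$, the $L^p$ bound on the stopped martingale via orthogonality ($p=2$) or the von Bahr--Esseen/Burkholder inequality ($1\le p<2$), and the final patching over $M$ are exactly the right ingredients. Your diagnosis of the main obstacle---bridging the gap between an almost-sure bound on $A_\infty$ and the in-expectation control needed for Doob's theorem---and of where the constraint $p\le 2$ enters is also accurate. One minor remark: for $L^1$-boundedness of $S^{\tau_M}$ you only need $\sup_n \E|S_n^{\tau_M}| < \infty$, which follows from your $L^p$ bound by Jensen, so the appeal to Doob's forward convergence theorem is clean.
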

We want to show that $\sum_{n}\gamma_{n+1}^2\E[\|e^i_{n+1}\|^2 | \cF_n] < \infty$ such that $\sum_n \gamma_n e^i_{n}$ converges almost surely by Theorem \ref{theorem:martingale_convergence_theorem}. As we can see in \eqref{eqn:closed_form_sol_poisson_eq}, with Lemma \ref{lemma:continuity_sol} and \cref{assumption:Boundedness on model parameter}, for a sample path ($\Theta_n$ within a compact set $\Omega$), $\sup_n \|m_{\theta^i_n}(x)\| < \infty$ and $\sup_n \|\mP_im_{\theta^i_n}(x)\| < \infty$ almost surely for all $x \in \cX_i$. This ensures that $e^i_{n+1}$ is an $L_2$-bounded martingale difference sequence, i.e., $\sup_n \|e^i_{n+1}\| \leq \sup_n (\|m_{\theta^i_n}(X^i_{n+1})\| +  \|\mP_im_{\theta^i_n}(X^i_n)\|) \leq D_{\Omega} < \infty$. Together with \cref{assumption:Decreasing step size and slowly increasing communication interval}, we get
\begin{equation}
    \sum_{n}\gamma_{n+1}^2\E[\|e^i_{n+1}\|^2 | \cF_n] \leq D_{\Omega}\sum_{n}\gamma_{n+1}^2 < \infty \quad a.s.
\end{equation}
and thus $\sum_n \gamma_n e^i_{n}$ converges almost surely.

Next, for the term $\nu^i_n$ we have
\begin{equation}\label{eqn:nu_n}
    \sum_{k=0}^p \gamma_{k+1}(\nu^i_k - \nu_{k+1}^i) = \sum_{k=0}^p (\gamma_{k+1} - \gamma_k) \nu_{k}^i + \gamma_0\nu_0^i - \gamma_{p+1}\nu^i_{p+1}.
\end{equation}
As is shown before, for a given sample path, $\|\mP_im_{\theta^i_n}(x)\|$ is bounded almost surely for all $n$ and $x \in \cX^i$ such that $\sup_n \|\nu_n^i\| < \infty$ almost surely. 
Since $\lim_{n\to\infty} (\gamma_{n+1} - \gamma_n) = 0$, we have $\lim_{n\to\infty} (\gamma_{n+1} - \gamma_n) \nu_{n}^i = 0$. Note that there exists a path-dependent constant $C$ (that bounds $\|\nu_n^i\|$) such that for any $n \geq m$,
\begin{equation}
    \left\|\sum_{k=m}^n (\gamma_{k+1} - \gamma_k) \nu_{k}^i \right\| \leq C \sum_{k=m}^n (\gamma_k - \gamma_{k+1}) = C(\gamma_m - \gamma_{n+1}) < C\gamma_m.
\end{equation}
Since $\lim_{n\to\infty} \gamma_n = 0$, there exists a positive integer $M$ such that for all $n\geq m \geq M$, $\gamma_m < \epsilon/C$ and $\|\sum_{k=m}^n (\gamma_{k+1} - \gamma_k) \nu_{k}^i \| < \epsilon$ for every $\epsilon > 0$. Therefore, $\{\sum_{k=0}^{p} (\gamma_{k+1} - \gamma_k) \nu_{k}^i\}_{p\geq 0}$ is a Cauchy sequence and $\sum_{k=0}^{\infty} (\gamma_{k+1} - \gamma_k) \nu_{k}^i$ converges by Cauchy convergence criterion. The last term of \eqref{eqn:nu_n} tends to zero. Therefore,  $\sum_{k=0}^{\infty} \gamma_{k+1}(\nu^i_k - \nu_{k+1}^i)$ converges and is finite.

For the last term $\xi^{i}_n$, Lemma \ref{lemma:continuity_sol} leads to
\begin{equation}
    \frac{1}{N}\sum_{i=1}^N\left\|\xi^{i}_{n+1} \right\| \leq \frac{C'}{N} \sum_{i=1}^N \|\theta^i_{n+1} - \theta^i_n \| \leq \frac{C'}{\sqrt{N}} \|\Theta_{n+1} - \Theta_n\|.
\end{equation}
for the Lipschitz constant $C'$ of $\mP_i m_{\theta^i}(x)$. However, the relationship between $\theta_{n}$ and $\theta_{n+1}$ is not obvious in the D-SGD and FL setting due to the update rule \eqref{eqn:update_matrix_form} with communication matrix $\cW_n$, unlike the classical stochastic approximation shown in \eqref{eqn:update_SA}. We come up with the novel decomposition of $\xi^{i}_n$, which takes the consensus error into account, to solve this issue, i.e.,
\begin{equation}
\begin{split}
    \xi^{i}_{n+1} = & \left[\mP_i m_{\theta_{n+1}^i}(X_n^i) - \mP_i m_{\theta_{n+1}}(X_n^i)\right] + \left[\mP_i m_{\theta_{n}}(X_n^i) - \mP_i m_{\theta_n^i}(X_n^i)\right] \\
    & + \left[\mP_i m_{\theta_{n+1}}(X^i_{n}) - \mP_i m_{\theta_{n}}(X^i_{n})\right].
\end{split}
\end{equation}
Using the Lipschitzness property of $\mP_i m_{\theta}(X)$ in Lemma \ref{lemma:continuity_sol}, we have
\begin{equation}
\begin{split}
    \frac{1}{N}\sum_{i=1}^N\left\|\xi^{i}_{n+1} \right\| \leq& \frac{C'}{N} \sum_{i=1}^N \left(\left\|\theta^i_{n+1} - \theta_{n+1}\right\| + \left\|\theta_{n+1} - \theta_{n}\right\|+ \left\|\theta_{n} - \theta_{n}^i\right\| \right)\\
    \leq&  \frac{C'}{\sqrt{N}}\left\|\Theta_{n+1} - \frac{1}{N}\left(\vone\vone^T \otimes \mI_d\right)\Theta_{n+1} \right\| + \frac{C'}{\sqrt{N}}\left\|\Theta_{n} - \frac{1}{N}\left(\vone\vone^T \otimes \mI_d\right)\Theta_{n} \right\| \\
    & +  C' \left\|\theta_{n+1} - \theta_{n}\right\| \\
    =& \frac{C'}{\sqrt{N}}\left(\left\|\cJ_{\bot}\Theta_{n+1}\right\| + \left\|\cJ_{\bot}\Theta_{n}\right\|\right) + C' \left\|\theta_{n+1} - \theta_{n}\right\| \\
    =& \frac{C'}{\sqrt{N}}\left(\left\|\cJ_{\bot}\Theta_{n+1}\right\| + \left\|\cJ_{\bot}\Theta_{n}\right\|\right) + C' \gamma_{n+1}\left\|\frac{1}{N}(\vone^T\otimes \mI_d) \nabla \mF(\Theta_n, \mX_n)\right\|.
\end{split}
\end{equation}
In \cref{appendix:proof_of_consensus} we have shown $\lim_{n\to\infty} \cJ_{\bot}\Theta_n = \vzero$ almost surely. Moreover, $\|\frac{1}{N}(\vone^T\otimes \mI_d) \nabla \mF(\Theta_n, \mX_n)\| $ is bounded per sample path. Therefore, $\lim_{n\to\infty} \frac{1}{N} \sum_{i=1}^N \|\xi^i_{n+1}\| = 0$ such that $\lim_{n\to\infty} \frac{1}{N} \sum_{i=1}^N \xi^i_{n+1} = 0$ almost surely.

To sum up, we decompose \eqref{eqn:consensus_decomposition} into
\begin{equation}\label{eqn:decomposition_DSGD}
    \theta_{n+1} = \theta_{n} - \gamma_{n+1}\nabla f(\theta_n) - \gamma_{n+1} r_n - \gamma_{n+1} \frac{1}{N} \sum_{i=1}^N \left(e^i_{n+1} + \nu^i_n - \nu^i_{n+1} + \xi^i_{n+1}\right).
\end{equation}
Now that $\lim_{p\to\infty} \sum_{n=1}^p \frac{1}{N}\sum_{i=1}^N \gamma_n e^i_n$ and $\lim_{p\to\infty} \sum_{n=0}^{p} \frac{1}{N}\sum_{i=1}^N \gamma_{n+1}(\nu^i_n - \nu_{n+1}^i)$ converge and are finite, $\lim_{n\to\infty} r_n = 0$, $\lim_{n\to\infty} \frac{1}{N}\sum_{i=1}^N \xi^i_n = 0$ , all the conditions of C3 in Theorem \ref{theorem:a.s.convergence_SA} are satisfied. Additionally, \cref{assumption:Boundedness on model parameter} corresponds to C1, \cref{assumption:Decreasing step size and slowly increasing communication interval} meets C2, and  C4 is automatically satisfied when we choose the lyapunov function $V(\theta) = f(\theta)$. Therefore, $\limsup_n \inf_{\theta^* \in \cL}\|\theta_n - \theta^*\| = 0$.

\section{Proof of \cref{theorem: CLT_D-SGD}}\label{appendix_proof_of_CLT}
To obtain \cref{theorem: CLT_D-SGD}, we need to utilize the existing CLT result for general SA in Theorem \ref{theorem:CLT_classical_SA} and check all the necessary conditions therein.

\begin{theorem}[Theorem 2.1 \cite{fort2015central}]\label{theorem:CLT_classical_SA} Consider the stochastic approximation iteration \eqref{eqn:update_SA}, assume
	\begin{enumerate}
		\item[C1.] Let $\theta^*$ be the root of function $h$, i.e., $h(\theta^*) = 0$, and assume $\lim_{n\to\infty} \theta_n = \theta^*$. Moreover, assume the mean field $h$ is twice continuously differentiable in a neighborhood of $\theta^*$, and the Jacobian $\mH \triangleq \nabla h(\theta^*)$ is Hurwitz, i.e., the largest real part of its eigenvalues $B < 0$;
		\item[C2.] The step size $\sum_n \gamma_n = \infty$, $\sum_n \gamma^2_n < \infty$, and either (i). $\log(\gamma_{n-1}/\gamma_n) = o(\gamma_n)$, or (ii). $\log(\gamma_{n-1}/\gamma_n) \sim \gamma_n/\gamma_{\star}$ for some $\gamma_{\star} > 1/2|B|$;
		\item[C3.] $\sup_{n} \|\theta^i_n\| < \infty$ almost surely for any $i \in [N]$;
		\item[C4.] \begin{enumerate}
			\item $\{e_n\}_{n\geq 0}$ is an $\cF_n$-Martingale difference sequence, i.e., $\E[e_n | \cF_{n-1}] = 0$, and there exists $\tau > 0$ such that $\sup_{n\geq 0} \E[\|e_n\|^{2+\tau} | \cF_{n-1}] < \infty$;
			\item $\E[e_{n+1}e_{n+1}^T|\cF_n] = \mU + \mD^{(A)}_n + \mD^{(B)}_n$, where $\mU$ is a symmetric positive semi-definite matrix and 
			\begin{equation}\label{eqn:condition_c4}
				\begin{cases}
					&\mD^{(A)}_n \to 0 ~~\text{almost surely}, \\
					&\lim_{n} \gamma_n \E\left[\left\|\sum_{k=1}^n \mD^{(B)}_k\right\|\right] = 0.
				\end{cases}
			\end{equation}
		\end{enumerate}
		\item[C5.] Let $r_n = r^{(1)}_n + r^{(2)}_n$, $r_n$ is $\cF_n$-adapted, and 
		\begin{equation}
			\begin{cases}
				& \left\|r^{(1)}_n\right\|= o(\sqrt{\gamma_n}) \quad a.s. \\
				& \sqrt{\gamma_n}\left\|\sum_{k=1}^n r^{(2)}_k\right\| = o(1) \quad a.s.\\
			\end{cases}
		\end{equation}
	\end{enumerate}
	Then, 
	\begin{equation}
		\frac{1}{\sqrt{\gamma_n}} (\theta_n - \theta^*)  \xrightarrow[n\to\infty]{dist.} \cN(0, \mV),
	\end{equation}
	where 
	\begin{equation}\label{eqn:lyapunov_equation}
		\begin{cases}
			& \mV \mH^T + \mH\mV = - \mU ~~~~~~~~~~~~~~~~~~~~~~~~~~~~~~~~~~~~~~~~~~~~~~~ \text{in case C2 (i)},\\
			& \mV(\mI_d + 2\gamma_{\star} \mH^T) + (\mI_d + 2\gamma_{\star} \mH)\mV= -2\gamma_{\star}\mU ~~~~~~ \text{in case C2 (ii)}.\\
		\end{cases}
	\end{equation}
	\qed
\end{theorem}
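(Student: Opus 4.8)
The statement is the standard stochastic-approximation CLT, which I would prove by linearizing the recursion around $\theta^*$, writing the rescaled error as a weighted martingale sum, and applying a martingale-array CLT. Write $\Delta_n \triangleq \theta_n - \theta^*$. By C1 the iterates converge to $\theta^*$ and by C3 they stay bounded, so it suffices to argue on the event that $\theta_n$ has entered a fixed neighborhood of $\theta^*$ on which $h$ is $C^2$; there a second-order Taylor expansion gives $h(\theta_n) = \mH\Delta_n + \rho_n$ with $\|\rho_n\| \le C\|\Delta_n\|^2$. Substituting into \eqref{eqn:update_SA} yields the linear recursion
\begin{equation*}
    \Delta_{n+1} = (\mI_d + \gamma_{n+1}\mH)\Delta_n + \gamma_{n+1}e_{n+1} + \gamma_{n+1}(\rho_n + r_{n+1}).
\end{equation*}
Since $\mH$ is only Hurwitz rather than symmetric, I would work with the Lyapunov norm induced by the positive-definite $\mQ$ solving $\mH^T\mQ + \mQ\mH = -\mI_d$ (which exists because $B<0$), in which $\mI_d + \gamma\mH$ is a strict contraction for small $\gamma$.

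First I would establish the a priori rate $\E[\|\Delta_n\|^2] = O(\gamma_n)$: taking conditional expectations of $\|\Delta_{n+1}\|_{\mQ}^2$, the martingale difference $e_{n+1}$ kills the cross term and contributes $O(\gamma_{n+1}^2)$ by C4(a), the contraction supplies a factor $(1 - c\gamma_{n+1})$, and the remainder $\rho_n + r_{n+1}$ is controlled through C5 and the quadratic bound, so a discrete Gr\"onwall argument gives the rate. Consequently $\Delta_n = O_P(\sqrt{\gamma_n})$ and $\rho_n = O_P(\gamma_n)$, so after rescaling by $\gamma_n^{-1/2}$ the quadratic remainder is asymptotically negligible. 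Unrolling the linear recursion with the deterministic transfer matrices $\Phi(n,k) \triangleq \prod_{j=k+1}^n (\mI_d + \gamma_j\mH)$, $\Phi(n,n)=\mI_d$, gives
\begin{equation*}
    \gamma_n^{-1/2}\Delta_n = \gamma_n^{-1/2}\Phi(n,0)\Delta_0 + \sum_{k=1}^n \gamma_n^{-1/2}\gamma_k\Phi(n,k)e_k + \sum_{k=1}^n \gamma_n^{-1/2}\gamma_k\Phi(n,k)(\rho_{k-1}+r_k),
\end{equation*}
and I would show the first and third sums vanish in probability: the initial term by exponential decay of $\Phi(n,0)$, and the remainder sum by the rate above together with $\|r_k^{(1)}\| = o(\sqrt{\gamma_k})$ and an Abel summation absorbing $r_k^{(2)}$ via $\sqrt{\gamma_n}\|\sum_k r_k^{(2)}\| = o(1)$.

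The core is the martingale sum $S_n \triangleq \sum_{k=1}^n \chi_{n,k}$ with $\chi_{n,k} \triangleq \gamma_n^{-1/2}\gamma_k\Phi(n,k)e_k$, to which I would apply a martingale-array CLT requiring (a) convergence of the conditional covariance $\sum_k \E[\chi_{n,k}\chi_{n,k}^T\mid\cF_{k-1}] \to \mV$ and (b) a conditional Lindeberg condition. For (a), insert $\E[e_ke_k^T\mid\cF_{k-1}] = \mU + \mD_{k-1}^{(A)} + \mD_{k-1}^{(B)}$: the $\mD^{(A)}$ part vanishes since $\mD_k^{(A)}\to\vzero$ a.s. while the weights stay summable, and the $\mD^{(B)}$ part vanishes by Abel summation using $\gamma_n\E[\|\sum_k\mD_k^{(B)}\|]\to0$. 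The decisive $\mU$ part is a Toeplitz/Riemann-sum argument: with $s = \sum_{j=k+1}^n\gamma_j$ one has $\Phi(n,k)\approx e^{s\mH}$ and $\gamma_n^{-1}\gamma_k^2 = \gamma_k\cdot(\gamma_k/\gamma_n)$, where C2 dictates $\gamma_k/\gamma_n$: in case C2(i) it tends to $1$, giving $\int_0^\infty e^{s\mH}\mU e^{s\mH^T}ds$, whereas in case C2(ii) it tends to $e^{s/\gamma_\star}$, which combines with $e^{s\mH}$ into $e^{s\mM}$ for $\mM = \mH + \mI_d/2\gamma_\star$, giving $\int_0^\infty e^{s\mM}\mU e^{s\mM^T}ds$. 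Either integral is exactly the solution $\mV$ of the stated Lyapunov equation \eqref{eqn:lyapunov_equation}. For (b), the $(2+\tau)$-moment bound in C4(a) yields the conditional Lyapunov condition $\sum_k\E[\|\chi_{n,k}\|^{2+\tau}\mid\cF_{k-1}]\to0$ (an extra vanishing factor $\gamma_k^{\tau/2}$ appears), hence Lindeberg. Combining (a) and (b) gives $S_n \xrightarrow[n\to\infty]{dist.}\cN(\vzero,\mV)$, and with the negligible terms the theorem follows.

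The main obstacle is the Toeplitz/Riemann-sum step for the $\mU$ part, i.e. the precise asymptotics of $\Phi(n,k)$ and of the ratio $\gamma_k/\gamma_n$ that force the two distinct generators $\mM=\mH$ versus $\mM=\mH+\mI_d/2\gamma_\star$. This is exactly where the step-size dichotomy C2(i)/C2(ii) is consumed, and it is also where the critical case $a=1$ is delicate: the naive rescaling fails to converge unless one accounts for the additional drift $\mI_d/2\gamma_\star$, whose Hurwitz-ness --- and hence the very existence of $\mV$ as a convergent integral --- is guaranteed precisely by $\gamma_\star > 1/2|B|$.
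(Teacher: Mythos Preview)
The paper does not prove this statement at all: Theorem~\ref{theorem:CLT_classical_SA} is quoted verbatim from \cite{fort2015central} (note the \qed immediately after the statement) and is invoked as a black box to establish the paper's own Theorem~\ref{theorem: CLT_D-SGD}. The only discussion surrounding it is a remark that the positive-definiteness of $\mU$ assumed in \cite{fort2015central} can be relaxed to positive semi-definiteness without changing the proof. Hence there is no ``paper's own proof'' to compare against.

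That said, your sketch is a faithful outline of the standard argument underlying the cited result (as in \cite{fort2015central} and the earlier \cite{delyon2000stochastic}): linearize around $\theta^*$, unroll via the product $\Phi(n,k)=\prod_{j=k+1}^n(\mI_d+\gamma_j\mH)$, dispose of the initial condition and remainder via C5 and the Hurwitz contraction, and pass the martingale array through a CLT using C4. Your identification of the crux --- the Toeplitz/Riemann-sum step where C2(i) vs.\ C2(ii) produces the two generators $\mH$ and $\mH+\mI_d/2\gamma_\star$ --- is exactly right, and your treatment of the $\mD^{(A)}$/$\mD^{(B)}$ splitting matches how C4(b) is consumed. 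One minor point: you assert $\E[\|\Delta_n\|^2]=O(\gamma_n)$ via a Gr\"onwall argument, but under the stated assumptions (no global contractivity, only local $C^2$ and a.s.\ convergence) this $L^2$ rate is not immediate; the cited proof works instead on the event $\{\theta_n\to\theta^*\}$ and controls $\Delta_n$ pathwise, which is closer to how you phrased the setup but not quite how you phrased the rate estimate.
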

Note that the matrix $\mU$ in the condition C4(b) of Theorem \ref{theorem:CLT_classical_SA} was assumed to be positive definite in the original Theorem 2.1 \cite{fort2015central}. It was only to ensure that the solution $\mV$ to the Lyapunov equation \eqref{eqn:lyapunov_equation} is positive definite, which was only used for the stability of the related autonomous linear ODE (e.g., Theorem 3.16 \cite{chellaboina2008nonlinear} or Theorem 2.2.3 \cite{horn_johnson_1991}). However, in this paper, we do not need strict positive definite matrix $\mV$. Therefore, we extend $\mU$ to be positive semi-definite such that $\mV$ is also positive semi-definite (see Lemma \ref{lemma:close_form_lyapunov_eq} for the closed form of matrix $\mV$). Such kind of extension does not change any of the proof steps in \cite{fort2015central}.

\subsection{Discussion about C1-C3}\label{section:C1_C3}
Our Assumption 2.1 corresponds to C1 by letting function $h(\theta) = -\nabla f(\theta)$ therein. We can also let $\gamma_{\star}$ in \cref{theorem: CLT_D-SGD} large enough to satisfy C2. The typical form of step size, also indicated in \cite{fort2015central}, is polynomial step size $\gamma_n \sim \gamma_{\star} / n^a $ for $a\in (0.5,1]$. Note that $a\in(0.5,1)$ satisfies C2 (i) and $a=1$ satisfies C2 (ii). Assumption 2.4 corresponds to C3.\footnote{Theorem \ref{theorem:CLT_classical_SA} is slightly modified in terms of condition C3, which is mentioned as a special case in Section 2.2 \cite{fort2015central}. For the sake of mathematical simplicity, we stick to condition C3 in the proof.}

\subsection{Analysis of C4}\label{section:C4}
To check condition C4, we need to analyze the Martingale difference sequence $\{e^i_n\}$. Recall $e^i_{n+1} = m_{\theta^i_n}(X^i_n) -\mP_i m_{\theta^i_n}(X^i_{n-1})$ such that there exists a constant $C$,
\begin{equation}
\begin{split}
	\E\left[\left\|e^i_{n+1}\right\|^{2+\tau} | \cF_{n}\right] &\leq C\E\left[\left.\left\|m_{\theta^i_n}(X^i_n)\right\|^{2+\tau} + \left\|\mP_i m_{\theta^i_n}(X^i_{n-1})\right\|^{2+\tau} \right| \cF_{n}\right] \\
	&= C\sum_{Y\in \cX^i} \mP_i(X^i_{n-1},Y)\left\|m_{\theta^i_n}(Y)\right\|^{2+\tau} + C\left\|\mP_i m_{\theta^i_n}(X^i_{n-1})\right\|^{2+\tau}.
\end{split}
\end{equation}
Since $\|m_{\theta^i_n}(Y)\| < \infty$ almost surely by \cref{assumption:Boundedness on model parameter} and $\cX^i$ is a finite state space, at all time $n$, we have
\begin{equation}
	\sum_{Y\in \cX^i} \mP_i(X^i_{n-1},Y)\left\|m_{\theta^i_n}(Y)\right\|^{2+\tau} < \infty \quad a.s.
\end{equation}
and there exists another constant $C'$ such that by definition of $\mP_i m_{\theta^i_n}(X^i_{n-1})$, we have
\begin{equation}
	\left\|\mP_i m_{\theta^i_n}(X^i_{n-1})\right\|^{2+\tau} \leq  C'\sum_{Y\in \cX^i} \mP_i(X^i_{n-1},Y)\left\|m_{\theta^i_n}(Y)\right\|^{2+\tau} < \infty \quad a.s.
\end{equation}
Therefore, $\E[\|e^i_{n+1}\|^{2+\tau} | \cF_{n}] < \infty$ a.s. for all $n$ and C4.(a) is satisfied.

We now turn to C4.(b). Note that for any $i\neq j$, we have $\E[e^i_{n+1}(e^j_{n+1})^T|\cF_n] = \E[e^i_{n+1}|\cF_n]\cdot\E[(e^j_{n+1})^T|\cF_n] = 0$ due to the independence between agent $i$ and $j$, and $\E[e^i_{n+1}|\cF_n] = 0$. Then, we have 
\begin{equation}
	\E\left[\left.\left(\frac{1}{N}\sum_{i=1}^N e^i_{n+1}\right)\left(\frac{1}{N}\sum_{i=1}^N e^i_{n+1}\right)^T\right|\cF_n\right] = \frac{1}{N^2} \sum_{i=1}^N \E\left[\left.e^i_{n+1}(e^i_{n+1})^T\right|\cF_n\right].
\end{equation}
The analysis of $\E[e^i_{n+1}(e^i_{n+1})^T|\cF_n]$ is inspired by Section 4 \cite{fort2015central} and Section 4.3.3 \cite{delyon2000stochastic}, where they constructed another Poisson equation to further decompose the noise terms therein.\footnote{However, we note that \cite{fort2015central,delyon2000stochastic} considered the Lipschitz continuity of function $F^i_{\theta^i}(x)$ defined in \eqref{eqn:def_F_i} as an assumption instead of a conclusion, where we give a detailed proof for this. We also obtain matrix $\mU_i$ in an explicit form, which coincides with the definition of \textit{asymptotic covariance matrix} and was not simplified in \cite{fort2015central}. The discussion on the improvement of $\mU_i$ is outlined in Section 3.2, which was not the focus of \cite{fort2015central,delyon2000stochastic} and was not covered therein.} Here, expanding $\E[e^i_{n+1}(e^i_{n+1})^T|\cF_n]$ gives
\begin{equation}\label{eqn:covariance_e_n}
	\begin{split}
		\E\left[\left.e^i_{n+1}(e^i_{n+1})^T\right|\cF_n\right] =& \E[m_{\theta^i_n}(X^i_{n})m_{\theta^i_n}(X^i_{n})^T|\cF_n] + \mP_i m_{\theta^i_n}(X^i_{n-1}) \left(\mP_i m_{\theta^i_n}(X^i_{n-1})\right)^T \\
		&\!-\! \E[m_{\theta^i_n}(X^i_{n})|\cF_n] \left(\mP_i m_{\theta^i_n}(X^i_{n-1})\right)^T \!\!-\! \mP_i m_{\theta^i_n}(X^i_{n-1}) \E[m_{\theta^i_n}(X^i_{n})^T|\cF_n] \\
		=& \sum_{y\in\cX_i}\mP_i(X_{n-1}, y) m_{\theta^i_n}(y)m_{\theta^i_n}(y)^T \!-\!  \mP_i m_{\theta^i_n}(X^i_{n-1}) \left(\mP_i m_{\theta^i_n}(X^i_{n-1})\right)^T.
	\end{split}
\end{equation}
Denote by 
\begin{equation}\label{eqn:def_Gi}
	G_i(\theta^i,x) \triangleq \sum_{y\in\cX_i}\mP_i(x, y) m_{\theta^i}(y)m_{\theta^i}(y)^T -  \mP_i m_{\theta^i}(x) \left(\mP_i m_{\theta^i}(x)\right)^T,
\end{equation}
and let its expectation w.r.t the stationary distribution $\vpi_i$ be $g_i(\theta^i) \triangleq \E_{x\sim \vpi_i}[G_i(\theta^i,x)]$,
we can construct another Poisson equation, i.e.,
\begin{equation}
	\begin{split}
		&\E\left[\left.e^i_{n+1}(e^i_{n+1})^T\right|\cF_n\right] - \sum_{X^i_n \in \cX_i}\vpi(X^i_n)\E\left[\left.e^i_{n+1}(e^i_{n+1})^T\right|\cF_n\right] \\ 
		=& G_i(\theta^i_n,X^i_{n-1}) - g_i(\theta^i_{n}) \\
		=& \varphi^i_{\theta_n^i}(X^i_{n-1}) - \mP_i \varphi^i_{\theta_n^i}(X^i_{n-1}),
	\end{split}
\end{equation}
for some matrix-valued function $\varphi^i: \R^d \times \cX_i \to \R^{d \times d}$. Following the similar steps shown in  \eqref{eqn:poisson_equation} - \eqref{eqn:closed_form_sol_poisson_eq}, we can obtain the closed-form expression
\begin{equation}\label{eqn:def_F_i}
	\varphi^i_{\theta^i}(x) =  \sum_{y \in \cX_i} \left(\mI - \mP_i + \vone(\vpi_i)^T\right)^{-1}(x,y) G_i(\theta^i,x) - g_i(\theta^i).
\end{equation}
Then, we can decompose \eqref{eqn:covariance_e_n} into
\begin{equation}\label{eqn:55}
	G_i(\theta^i_n,X^i_{n-1}) = \underbrace{g_i(\theta^*)}_{\mU_i} + \underbrace{g_i(\theta^i_n) - g_i(\theta^*)}_{\mD^{(1)}_{i,n}} + \underbrace{\varphi^i_{\theta_n^i}(X^i_{n}) - \mP_i \varphi^i_{\theta_n^i}(X^i_{n-1})}_{\mD^{(2,a)}_{i,n}} + \underbrace{\varphi^i_{\theta_n^i}(X^i_{n-1}) - \varphi^i_{\theta_n^i}(X^i_{n})}_{\mD^{(2,b)}_{i,n}}.
\end{equation}
Let $\mU \triangleq \frac{1}{N^2} \sum_{i=1}^N \mU_i$, $\mD^{(1)}_n \triangleq \frac{1}{N^2}\sum_{i=1}^N \mD^{(1)}_{1,n}$, $\mD^{(2,a)}_{n}\triangleq \frac{1}{N^2}\sum_{i=1}^N\mD^{(2,a)}_{i,n}$, and $\mD^{(2,b)}_{n}\triangleq \frac{1}{N^2}\sum_{i=1}^N\mD^{(2,b)}_{i,n}$, we want to prove that $\mD^{(1)}_n$ satisfies the first condition in C4, and $\mD^{(2,a)}_{n}, \mD^{(2,b)}_{n}$ meet the second condition in C4.

We now show that for all $i$, $G_i(\theta^i, x)$ is Lipschitz continuous in $\theta^i \in \Omega$ for some compact subset $\Omega \subset \R^{d}$. For any $x \in \cX_i$ and $\theta^i_1, \theta_2^i \in \Omega$, we can get
\begin{equation}
	\begin{split}
		&\|m_{\theta^i_1}(x)m_{\theta^i_1}(x)^T - m_{\theta^i_2}(x)m_{\theta^i_2}(x)^T \| \\
		=& \|m_{\theta^i_1}(x)( m_{\theta^i_1}(x) - m_{\theta^i_2}(x))^T - ( m_{\theta^i_1}(x) - m_{\theta^i_2}(x))m_{\theta^i_2}(x)^T\| \\
		\leq& \|m_{\theta^i_1}(x) - m_{\theta^i_2}(x)\|(m_{\theta^i_1}(x)\| + \|m_{\theta^i_2}(x)\|) \\
		\leq& C\|\theta^i_1 - \theta^i_2 \|,
	\end{split}
\end{equation}
for some constant $C$, where the last inequality comes from $\|m_{\theta^i_1}(x)\| < \infty$ since $\theta^i_1 \in \Omega$ and the Lipschitz continuous function $m_{\theta^i}(x)$. Similarly, we can get $\|\mP_i m_{\theta^i_1}(x) - \mP_i m_{\theta^i_2}(x)\| \leq C\|\theta^i_1 - \theta^i_2\|$. Therefore, $G_i(\theta^i,x)$ and $g_i(\theta^i)$ are Lipschitz continuous in $\theta^i \in \Omega$ for any $x\in\cX_i$.

For the sequence $\{\mD^{(1)}_{i,n}\}_{n\geq 0}$, by applying \cref{theorem: almost_sure_convergence} and conditioned on $\lim_{n\to\infty} \theta_n = \theta^*$ for an optimal point $\theta^* \in \cL$, we have $\lim_{n\to\infty} \|g_i(\theta^i_n) - g_i(\theta^*)\| \leq \lim_{n\to\infty} C \|\theta^i_n - \theta^*\| = 0$. This implies $\mD^{(1)}_{i,n} \to 0$ for every $i\in[N]$ and thus $\mD^{(1)}_{n} \to 0$ as $n \to \infty$ almost surely, which satisfies the first condition in \eqref{eqn:condition_c4}.

For the Martingale difference sequence $\{\mD^{(2,a)}_{i,n}\}_{n\geq 0}$, we use Burkholder inequality (e.g., Theorem 2.10 \cite{hall2014martingale}, \cite{davis1970intergrability}) such that for $p \geq 1$ and some constant $C_p$, 
\begin{equation}\label{eqn:burkholder_result}
	\E\left[\left\|\sum_{i=1}^n \mD^{(2,a)}_{i,n}\right\|^{p}\right] \leq C_p \E\left[\left(\sum_{i=1}^n\left\|\mD^{(2,a)}_{i,n}\right\|^2\right)^{p/2}\right].
\end{equation}
By the definition \eqref{eqn:def_Gi} and Assumption 2.4, for a sample path, $\sup_n\|G_i(\theta^i_n, x)\| < \infty$ for any $x \in \cX_i$, as well as $\sup_n\|g_i(\theta^i_n)\| < \infty$, which leads to $\sup_n\|\varphi^i_{\theta^i_n}(x)\| < \infty$ for any $x \in \cX_i$ because of \eqref{eqn:def_F_i}. Then, we have $\sup_n \|\mD^{(2,a)}_{i,n}\| \leq C < \infty$ for the path-dependent constant $C$. Taking $p=1$ and we have
\begin{equation}
	\lim_{n\to\infty}\gamma_n C_p \sqrt{\sum_{i=1}^n \left\| \mD^{(2,a)}_{i,n}\right\|^2} \leq \lim_{n\to\infty} C_p C \gamma_n \sqrt{n} = 0 \quad a.s.
\end{equation}
Thus, Lebesgue dominated convergence theorem gives 
$$\lim_{n\to\infty}\gamma_n C_p\E\left[\sqrt{\sum_{i=1}^n\|\mD^{(2,a)}_{i,n}\|^2}\right] = \E\left[\lim_{n\to\infty}\gamma_n C_p\sqrt{\sum_{i=1}^n\|\mD^{(2,a)}_{i,n}\|^2}\right]= 0$$
and we have $\lim_{n\to\infty}\gamma_n \E[\|\sum_{i=1}^n \mD^{(2,a)}_{i,n}\|] = 0$.

For the sequence $\{\mD^{(2,b)}_{i,n}\}_{n\geq 0}$, we have
\begin{equation}
	\begin{split}
		\sum_{k=1}^n \mD^{(2,b)}_{i,k} =& \sum_{k=1}^{n} \left(\varphi^i_{\theta^i_{k}}(X^i_{k-1}) - \varphi^i_{\theta^i_{k-1}}(X^i_{k-1})\right) + \varphi^i_{\theta^i_{0}}(X^i_0) - \varphi^i_{\theta^i_{n}}(X^i_{n}) \\
		=&\sum_{k=1}^n \left(\varphi^i_{\theta^i_{k}}(X^i_{k\!-\!1}) \!-\! \varphi^i_{\theta_{k}}\!(X^i_{k\!-\!1}) \!+\! \varphi^i_{\theta_{k}}\!(X^i_{k\!-\!1}) \!-\! \varphi^i_{\theta_{k\!-\!1}}\!(X^i_{k\!-\!1}) \!+\! \varphi^i_{\theta_{k\!-\!1}}\!(X^i_{k\!-\!1}) \!-\! \varphi^i_{\theta^i_{k\!-\!1}}\!(X^i_{k\!-\!1})\!\right) \\
		&+ \varphi^i_{\theta^i_{0}}(X^i_0) - \varphi^i_{\theta^i_{n}}(X^i_{n}).
	\end{split}
\end{equation}
Since $G_i(\theta^i, x)$ and $g_i(\theta^i)$ are Lipschitz continuous in $\theta^i\in\Omega$, $\varphi^i_{\theta^i}(x)$ is also Lipschitz continuous in $\theta^i \in \Omega$ and is bounded. We have
\begin{equation}\label{eqn:quantity_of_D_2b}
	\begin{split}
		\left\|\sum_{k=1}^n\mD^{(2,b)}_{i,k}\right\|\leq& \left\|\sum_{k=1}^n \varphi^i_{\theta^i_{k}}(X^i_{k-1}) - \varphi^i_{\theta^i_{k-1}}(X^i_{k-1})\right\| + \left\|\varphi^i_{\theta^i_{0}}(X^i_0)\right\| + \left\|\varphi^i_{\theta^i_{n}}(X^i_{n})\right\| \\
		\leq&  \left\|\sum_{k=1}^n \varphi^i_{\theta^i_{k}}(X^i_{k-1}) - \varphi^i_{\theta^i_{k-1}}(X^i_{k-1})\right\| + D_1 \\
		\leq& \sum_{k=1}^n D_2D_{\Omega}\gamma_k + D_1
	\end{split}
\end{equation}
where $\|\varphi^i_{\theta^i_{0}}(X^i_0)\| + \|\varphi^i_{\theta^i_{n}}(X^i_{n})\| \leq D_1$ for a given sample path, $D_2$ is the Lipschitz constant of $\varphi^i_{\theta^i}(x)$, and $\|\nabla F_i(x^i, X^i)\| \leq D_{\Omega}$ for any $x^i \in \Omega$ and $X^i \in \cX^i$. Then,
\begin{equation}
\gamma_n \left\|\sum_{k=1}^n\mD^{(2,b)}_{i,k}\right\| \leq D_2D_{\Omega}\gamma_n\sum_{k=1}^n \gamma_k + \gamma_n D_1 \to 0 \quad \text{as} ~ n \to \infty
\end{equation}
because $\gamma_n \sum_{k=1}^{n}\gamma_k = O(n^{1-2a})$ by assumption \ref{assumption:Decreasing step size and slowly increasing communication interval}.
Therefore, the second condition of C4 is satisfied.

\subsection{Analysis of C5}\label{section:C5}
We now analyze condition C5. The decreasing rate of each term in \eqref{eqn:decomposition_DSGD} has been proved in \cref{appendix:proof_of_a.s.convergence}. Specifically, by assumption \ref{assumption:Boundedness on model parameter}, there exists a compact subset for a given sample path, and  
\begin{itemize}
	\item we have shown that $\left\|r_n^{(A)}\right\| = O(\eta_n)$ a.s., which implies $\left\|r_n^{(A)}\right\| = o(\sqrt{\gamma_n})$ a.s.
	\item For $\frac{1}{N} \sum_{i=1}^N \xi^i_n$, in the case of increasing communication interval, $\frac{1}{N} \sum_{i=1}^N \xi^i_n = O(\gamma_n + \eta_n)$, by Assumption 2.3-ii), we know $(\gamma_n + \eta_n)/\sqrt{\gamma_n} = \sqrt{\gamma_n} + \sqrt{\gamma_n} K^{L+1}_{\tau_n} = o(1)$ such that $\|\frac{1}{N} \sum_{i=1}^N \xi^i_n\| = o(\sqrt{\gamma_n})$ almost surely. On the other hand, in the case of bounded communication interval, $\frac{1}{N} \sum_{i=1}^N \xi^i_n = O(\gamma_n)$ such that $\|\frac{1}{N} \sum_{i=1}^N \xi^i_n\| = o(\sqrt{\gamma_n})$ a.s.
	\item Since $\sup_n \|\nu^i_n\| < \infty$ almost surely, we have $\sup_p\|\frac{1}{N}\sum_{i=1}^N \sum_{k=0}^p(\nu^i_k - \nu^i_{k+1})\| = \sup_p \|\frac{1}{N}\sum_{i=1}^N (\nu^i_0 - \nu^i_{p+1})\| < \infty$ almost surely. Then, $\sqrt{\gamma_p} \|\frac{1}{N}\sum_{i=1}^N \sum_{k=0}^p(\nu^i_k - \nu^i_{k+1})\| = O(\sqrt{\gamma_p})$ leads to $\sqrt{\gamma_p}\|\frac{1}{N}\sum_{i=1}^N \sum_{k=0}^p(\nu^i_k - \nu^i_{k+1})\| = o(1)$ a.s.
\end{itemize}
Let $r^{(1)}_n \triangleq r^{(A)}_n + \frac{1}{N} \sum_{i=1}^N \xi^i_n$ and $r^{(2)}_n \triangleq \frac{1}{N} \sum_{i=1}^N (\nu^i_k - \nu^i_{k+1})$. From above, we can see that C5 in Theorem \ref{theorem:CLT_classical_SA} is satisfied and we show that all the conditions in \cref{theorem:CLT_classical_SA} have been satisfied.

\subsection{CLosed Form of Limitimg Covariance Matrix}
Lastly, we need to analyze the closed-form expression of $\mU$ as in C4 (b) of \cref{theorem:CLT_classical_SA}. Recall that $\mU = \frac{1}{N^2}\sum_{i=1}^N \mU_i$ and $\mU_i =  g_i(\theta^*)$ in \eqref{eqn:55}. We now give the exact form of function $g_i(\theta^*)$ as follows:
\begin{equation}\label{eqn:asym_cov_theta_i}
	\begin{split}
		g_i(\theta^*) = &~ \sum_{x\in\cX_i} \vpi_i(x) \left[ m_{\theta^*}(x)m_{\theta^*}(x)^T - \left(\sum_{y\in\cX_i}\mP_i(x,y) m_{\theta^*}(y)\right)\left(\sum_{y\in\cX_i} \mP_i(x,y) m_{\theta^*}(y)\right)^T\right] \\
		= &~ \E\left[ \left(\sum_{s=0}^{\infty} [\nabla F_i(\theta^*,X_s) - \nabla f_i(\theta^*)]\right)\left(\sum_{s=0}^{\infty} [\nabla F_i(\theta^*,X_s) - \nabla f_i(\theta^*)]\right)^T\right] \\
		& - \E\left[\left(\sum_{s=1}^{\infty} [\nabla F_i(\theta^*,X_s) - \nabla f_i(\theta^*)]\right)\left(\sum_{s=1}^{\infty} [\nabla F_i(\theta^*,X_s) - \nabla f_i(\theta^*)]\right)^T\right] \\
		= &~ \E\left[\left(\nabla F_i(\theta^*,X^i_0) - \nabla f_i(\theta^*)\right)\left(\nabla F_i(\theta^*,X^i_0)- \nabla f_i(\theta^*)\right)^T\right] \\
		&+ \E\left[\left(\nabla F_i(\theta^*,X^i_0) - \nabla f_i(\theta^*)\right)\left(\sum_{s=1}^{\infty} [\nabla F_i(\theta^*,X_s) - \nabla f_i(\theta^*)]\right)^T\right] \\
		&+ \E\left[\left(\sum_{s=1}^{\infty} [\nabla F_i(\theta^*,X_s) - \nabla f_i(\theta^*)]\right)\left(\nabla F_i(\theta^*,X^i_0) - \nabla f_i(\theta^*)\right)^T\right] \\
		= &~ \text{Cov}(\nabla F_i(\theta^*, X_0), \nabla F_i(\theta^*, X_0)) \\
		&+ \sum_{s=1}^{\infty} \left[\text{Cov}(\nabla F_i(\theta^*, X_0), \nabla F_i(\theta^*, X_s)) + \text{Cov}(\nabla F_i(\theta^*, X_s), \nabla F_i(\theta^*, X_0))\right], \\
		= &~ \mSigma(\nabla F(\theta^*,\cdot)).
	\end{split}
\end{equation}
where the second equality comes from the recursive form of $m_{\theta^i}(x)$ in \eqref{eqn:closed_form_sol_poisson_eq}, and that the process $\{X_{n}\}_{n\geq 0}$ is in its stationary regime, i.e., $X_0 \sim \vpi_i$ from the beginning. The last equality comes from rewriting $\text{Cov}(\nabla F_i(\theta^*, X_i), \nabla F_i(\theta^*, X_j))$ in a matrix form. Note that $g_i(\theta^*)$ is exactly the asymptotic covariance matrix of the underlying Markov chain $\{X^i_n\}_{n\geq 0}$ associated with the test function $\nabla F_i(\theta^*, \cdot)$. By utilizing the following lemma, we can obtain the explicit form of $\mV$ as defined in \eqref{eqn:lyapunov_equation}.

\begin{lemma}[Lemma D.2.2 \cite{kailath2000linear}]\label{lemma:close_form_lyapunov_eq}
	If all the eigenvalues of matrix $\mM$ have negative real part, then for every positive semi-definite matrix $\mU$ there exists a unique positive semi-definite matrix $\mV$ satisfying $\mU + \mM\mV + \mV\mM^T = \vzero$. The explicit solution $\mV$ is given as
	\begin{equation}\label{eqn:solution_lyapunov}
		\mV = \int_0^{\infty} e^{\mM t}\mU e^{(\mM^T)t} dt.
	\end{equation}
\end{lemma}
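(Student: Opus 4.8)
The plan is to treat this as the classical Lyapunov equation result and prove it in four movements: first establish that the integral defining $\mV$ in \eqref{eqn:solution_lyapunov} converges, then verify by direct differentiation that this $\mV$ solves $\mU + \mM\mV + \mV\mM^T = \vzero$, next observe that positive semi-definiteness is immediate from the form of the integrand, and finally settle uniqueness by a separate differentiation argument. Throughout, the engine is the same elementary identity relating $e^{\mM t}$ to its time derivative, applied in slightly different ways.

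First I would establish convergence of the integral. Because every eigenvalue of $\mM$ has strictly negative real part, one can pick any $\alpha > 0$ strictly smaller than the modulus of the largest real part of the spectrum and obtain a constant $C > 0$ with $\|e^{\mM t}\| \le C e^{-\alpha t}$ for all $t \ge 0$; the constant $C$ absorbs the polynomial-in-$t$ factors that can arise from nontrivial Jordan blocks of $\mM$. Consequently $\|e^{\mM t}\mU e^{\mM^T t}\| \le C^2 \|\mU\| e^{-2\alpha t}$, so the integrand is dominated by an integrable function and $\mV$ is well-defined. I would then verify the Lyapunov identity by noting that the integrand is an exact derivative, namely $\frac{d}{dt}\left(e^{\mM t}\mU e^{\mM^T t}\right) = \mM e^{\mM t}\mU e^{\mM^T t} + e^{\mM t}\mU e^{\mM^T t}\mM^T$, so that
\[
\mM\mV + \mV\mM^T = \int_0^{\infty} \frac{d}{dt}\left(e^{\mM t}\mU e^{\mM^T t}\right) dt = \left[e^{\mM t}\mU e^{\mM^T t}\right]_0^{\infty} = \vzero - \mU,
\]
where the upper limit vanishes by the decay bound and the lower limit equals $\mU$ since $e^{\mM\cdot 0} = \mI_d$. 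Rearranging yields $\mU + \mM\mV + \mV\mM^T = \vzero$.

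Positive semi-definiteness is then immediate: for any $\vx$, writing $\vy(t) = e^{\mM^T t}\vx$ gives $\vx^T\mV\vx = \int_0^{\infty} \vy(t)^T \mU\, \vy(t)\, dt \ge 0$ because $\mU \succeq \vzero$. For uniqueness, I would take two solutions and let $\mW$ denote their difference, so that $\mW$ satisfies the homogeneous equation $\mM\mW + \mW\mM^T = \vzero$. Then $\frac{d}{dt}\left(e^{\mM t}\mW e^{\mM^T t}\right) = e^{\mM t}\left(\mM\mW + \mW\mM^T\right)e^{\mM^T t} = \vzero$, so the map $t \mapsto e^{\mM t}\mW e^{\mM^T t}$ is constant; it equals $\mW$ at $t = 0$ and tends to $\vzero$ as $t \to \infty$ by the same decay bound, forcing $\mW = \vzero$.

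The one genuinely delicate point I expect is the exponential-decay estimate $\|e^{\mM t}\| \le C e^{-\alpha t}$ for a possibly non-normal $\mM$: one cannot read the decay rate directly from the spectral norm of $\mM$, and must instead argue through the Jordan (or Schur) normal form to confirm that the polynomial prefactors from repeated eigenvalues are dominated by the strictly negative exponential once $\alpha$ is chosen just below $|B|$. This estimate is used in three places above (convergence of $\mV$, the vanishing upper limit in the verification step, and the vanishing limit in the uniqueness argument), so it is the load-bearing analytic fact; everything else reduces to routine matrix calculus.
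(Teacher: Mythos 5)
Your proof is correct and is the canonical argument for this result. The paper itself does not prove this lemma---it quotes it directly as Lemma D.2.2 of \cite{kailath2000linear}---and your four steps (integrability of $e^{\mM t}\mU e^{\mM^T t}$ via the Hurwitz decay bound, verification through the exact-derivative identity $\frac{d}{dt}\bigl(e^{\mM t}\mU e^{\mM^T t}\bigr) = \mM e^{\mM t}\mU e^{\mM^T t} + e^{\mM t}\mU e^{\mM^T t}\mM^T$ and the fundamental theorem of calculus, positive semi-definiteness of the quadratic form, and uniqueness from the homogeneous equation forcing $e^{\mM t}\mW e^{\mM^T t}$ to be constant yet decay to $\vzero$) reproduce the standard textbook proof, including the correct observation that the estimate $\|e^{\mM t}\|\le Ce^{-\alpha t}$ for $\alpha$ just below $|B|$ must absorb the polynomial prefactors arising from nontrivial Jordan blocks and is the single analytic fact underpinning all three limits.
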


\subsection{CLT of Polyak-Ruppert Averaging}
We now consider the CLT result of Polyak-Ruppert averaging $\Bar{\theta}_n = \frac{1}{n}\sum_{k=0}^{n-1} \theta_k$. The steps follow similar way by verifying that the conditions in the related CLT of Polyak-Ruppert averaging for the stochastic approximation are satisfied. The additional assumption is given below.

\begin{enumerate}
	\item[C6.] For the sequence $\{r_n\}$ in \eqref{eqn:update_SA}, $n^{-1/2}\sum_{k=0}^n r_k^{(1)} \to 0$ with probability $1$.
\end{enumerate}

Then, the CLT of Polyak-Ruppert averaging is as follows.
\begin{theorem}[Theorem 3.2 of \cite{fort2015central}]\label{theorem:CLT_PRAVG_SA}
	Consider the iteration \eqref{eqn:update_SA}, assume C1, C3, C4, C5 in \cref{theorem:CLT_classical_SA} are satisfied. Moreover, assume C6 is satisfied. Then, with step size $\gamma_n \sim \gamma_{\star}/n^a$ for $a \in (0.5,1)$,
	we have
	\begin{equation}
		\sqrt{n}(\Bar{\theta}_n - \theta^*) \xrightarrow[n\to\infty]{dist.} \cN(0, \mV'),
	\end{equation}
	where $\mV' = \mH^{-1}\mU\mH^{-T}$.
\end{theorem}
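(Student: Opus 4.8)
The plan is to linearize the recursion \eqref{eqn:update_SA} around $\theta^*$, rearrange it into a telescoping identity, and extract a martingale CLT for the averaged iterate. Write $z_n \triangleq \theta_n - \theta^*$. Since $h(\theta^*)=\vzero$ and $h$ is twice continuously differentiable near $\theta^*$ (C1), a Taylor expansion gives $h(\theta_n) = \mH z_n + \beta_n$ with $\|\beta_n\| \leq C\|z_n\|^2$ once $\theta_n$ is close to $\theta^*$ (guaranteed eventually by $\theta_n\to\theta^*$ in C1). Substituting into \eqref{eqn:update_SA} and solving for $\mH z_n$ yields $\mH z_n = \gamma_{n+1}^{-1}(z_{n+1}-z_n) - \beta_n - e_{n+1} - r_{n+1}$. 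Summing over $n=0,\dots,N-1$, dividing by $N$, and using $\Bar{\theta}_N - \theta^* = \frac{1}{N}\sum_{n=0}^{N-1} z_n$, I obtain
\[
\sqrt{N}\,\mH(\Bar{\theta}_N - \theta^*) = T_N^{(1)} - T_N^{(2)} - T_N^{(3)} - T_N^{(4)},
\]
where $T_N^{(1)} \triangleq \frac{1}{\sqrt N}\sum_{n=0}^{N-1}\gamma_{n+1}^{-1}(z_{n+1}-z_n)$, $T_N^{(2)} \triangleq \frac{1}{\sqrt N}\sum_{n=0}^{N-1}\beta_n$, $T_N^{(3)} \triangleq \frac{1}{\sqrt N}\sum_{n=0}^{N-1}e_{n+1}$, and $T_N^{(4)} \triangleq \frac{1}{\sqrt N}\sum_{n=0}^{N-1}r_{n+1}$. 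The goal is to show $T_N^{(3)}$ carries the limiting law while the other three are $o_P(1)$, and then invert $\mH$ (nonsingular, being Hurwitz by C1).

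For the dominant term $T_N^{(3)}$, I would apply a martingale CLT (e.g.\ the Hall--Heyde / Lindeberg version) to the $\cF_n$-martingale difference array $\{N^{-1/2}e_{n+1}\}$. Condition C4(a) furnishes $\sup_n \E[\|e_{n+1}\|^{2+\tau}\mid \cF_n]<\infty$, which gives uniform integrability and hence the conditional Lindeberg condition. For the conditional covariance I use C4(b): the Cesàro average $\frac{1}{N}\sum_{n=0}^{N-1}\E[e_{n+1}e_{n+1}^T\mid\cF_n] = \mU + \frac{1}{N}\sum_n \mD^{(A)}_n + \frac{1}{N}\sum_n \mD^{(B)}_n$; the first correction vanishes because $\mD^{(A)}_n\to\vzero$ a.s.\ (Cesàro mean of a null sequence), and the second because $\E\|\sum_{k\le n}\mD^{(B)}_k\| = o(\gamma_n^{-1}) = o(n^{a})$ forces $\frac{1}{N}\E\|\sum_{k\le N}\mD^{(B)}_k\| = o(N^{a-1})\to 0$ for $a<1$. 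This yields $T_N^{(3)} \xrightarrow[N\to\infty]{dist.} \cN(\vzero,\mU)$.

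The negligibility of the remaining terms is where the step-size window $a\in(0.5,1)$ is used on both ends, and it rests on the auxiliary $L^2$ rate $\E\|z_n\|^2 = O(\gamma_n)$, which follows from the stability of the linearized recursion under C1, C3, C4 and is established en route to the non-averaged result \cref{theorem:CLT_classical_SA}. For $T_N^{(2)}$, $\frac{1}{\sqrt N}\sum_n \E\|\beta_n\| \leq \frac{C}{\sqrt N}\sum_n \E\|z_n\|^2 = \frac{1}{\sqrt N}O(N^{1-a}) = O(N^{1/2-a})\to 0$ precisely because $a>1/2$. For the telescoping term $T_N^{(1)}$ I apply summation by parts, $T_N^{(1)} = \frac{1}{\sqrt N}\big[\gamma_N^{-1}z_N - \gamma_1^{-1}z_0 + \sum_{n=1}^{N-1}(\gamma_n^{-1}-\gamma_{n+1}^{-1})z_n\big]$; the boundary term is $O_P(N^{(a-1)/2})\to 0$ using $z_N = O_P(\sqrt{\gamma_N})$ from \cref{theorem:CLT_classical_SA}, while $|\gamma_n^{-1}-\gamma_{n+1}^{-1}| = O(n^{a-1})$ combined with $\E\|z_n\| = O(n^{-a/2})$ gives $\frac{1}{\sqrt N}\sum_n O(n^{a/2-1}) = O(N^{(a-1)/2})\to 0$ for $a<1$. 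For $T_N^{(4)}$, I split $r_n = r^{(1)}_n + r^{(2)}_n$: the new assumption C6 directly handles $\frac{1}{\sqrt N}\sum_k r^{(1)}_k\to 0$ (C5's bound $\|r^{(1)}_n\|=o(\sqrt{\gamma_n})$ alone is insufficient after averaging), while C5's control $\sqrt{\gamma_n}\|\sum_{k\le n} r^{(2)}_k\| = o(1)$ gives $\frac{1}{\sqrt N}\|\sum_k r^{(2)}_k\| = o(N^{(a-1)/2})\to 0$.

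Combining, $\sqrt{N}\,\mH(\Bar{\theta}_N-\theta^*)\xrightarrow[N\to\infty]{dist.}\cN(\vzero,\mU)$, and left-multiplying by the fixed invertible $\mH^{-1}$ (continuous linear map, Slutsky) yields $\sqrt{N}(\Bar{\theta}_N-\theta^*)\xrightarrow[N\to\infty]{dist.}\cN(\vzero,\mH^{-1}\mU\mH^{-T})$, i.e.\ $\mV'=\mH^{-1}\mU\mH^{-T}$. I expect the main obstacle to be the interlocking control of $T_N^{(1)}$ and $T_N^{(2)}$: both require the sharp $L^2$ bound $\E\|z_n\|^2=O(\gamma_n)$, and the summation-by-parts estimate for the telescoping term must be balanced so that the boundary contribution (needing $a<1$) and the Taylor-remainder contribution (needing $a>1/2$) are simultaneously negligible, which is exactly what pins the admissible range to $a\in(0.5,1)$ and makes this the delicate, averaging-specific part of the argument beyond \cref{theorem:CLT_classical_SA}.
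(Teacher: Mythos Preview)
Your proposal is a correct outline of the classical Polyak--Juditsky argument for the averaged CLT, but note that the paper does \emph{not} prove this theorem at all: it is stated as a quoted result (Theorem~3.2 of \cite{fort2015central}) and used as a black box. The paper's only contribution in this subsection is to verify that the additional condition C6 holds in their specific UD-SGD decomposition \eqref{eqn:decomposition_DSGD}, namely that $n^{-1/2}\sum_{k=1}^n(r_k^{(A)}+\frac{1}{N}\sum_i\xi_{k+1}^i)\to 0$, which they obtain from the rates $\|r_n^{(A)}\|=O(\gamma_n)$ and $\frac{1}{N}\sum_i\|\xi_n^i\|=O(\gamma_n)$ established earlier.

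Your sketch goes well beyond this and reconstructs what is presumably Fort's own proof: linearize, solve for $\mH z_n$, sum, isolate the martingale term $T_N^{(3)}$, and kill $T_N^{(1)},T_N^{(2)},T_N^{(4)}$ using the $L^2$ rate $\E\|z_n\|^2=O(\gamma_n)$ together with the step-size window $a\in(0.5,1)$. The estimates you give for each piece are right, including the role of C6 for $r^{(1)}$ and the Abel-summation treatment of the telescoping term. The one point to be careful about is that the auxiliary bound $\E\|z_n\|^2=O(\gamma_n)$ is typically proved on the event $\{\theta_n\to\theta^*\}$ under C3 (almost-sure boundedness) rather than unconditionally; this is consistent with how \cite{fort2015central} and the present paper state their results, but it should be made explicit when you write out the full argument.
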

Discussion about C1 and C3 can be found in Section \ref{section:C1_C3}. Condition C4 has been analyzed in Section \ref{section:C4} and condition C5 has been examined in Section \ref{section:C5}. The only condition left to analyze is C6, which is based on the results obtained in Section \ref{section:C5}. In view of \eqref{eqn:decomposition_DSGD}, $r^{(1)}_n = r_n^{(A)} + \frac{1}{N} \sum_{i=1}^N \xi^i_{n+1}$, so C6 is equivalent to
\begin{equation}\label{eqn:93}
	n^{-1/2}\sum_{k=1}^n \left[r_k^{(A)} + \frac{1}{N} \sum_{i=1}^N \left(\xi^i_{k+1}\right)\right] \to 0 \quad w.p.1.
\end{equation}
In Section \ref{section:C5}, we have shown that $\left\|r_n^{(A)}\right\| = O(\eta_n)$, $\frac{1}{N} \sum_{i=1}^N \xi^i_n = O(\gamma_n)$. Note that by Assumption \ref{assumption:Decreasing step size and slowly increasing communication interval}, we consider bounded communication interval for step size $\gamma_n \sim \gamma_{\star}/n^a$ for $a \in (0.5,1)$, and hence, $\eta_n = O(\gamma_n)$ such that $\left\|r_n^{(A)}\right\| = O(\gamma_n)$. We then know that 
\begin{equation}
	\sum_{k=1}^n \left\|r_n^{(A)}\right\| = O(n^{1-a}), \quad \sum_{k=1}^n\|\frac{1}{N} \sum_{i=1}^N \xi^i_n\| = O(n^{1-a}),
\end{equation}
such that 
\begin{equation}
	n^{-1/2}\sum_{k=1}^n \left\|r_k^{(A)} + \frac{1}{N} \sum_{i=1}^N \left(\xi^i_{k+1}\right)\right\| = O(n^{1/2 - a}) = o(1),
\end{equation}
which proved \eqref{eqn:93} and C6 is verified. Therefore, Theorem \ref{theorem:CLT_PRAVG_SA} is proved under our Assumptions \ref{assumption:Regularity of the gradient} - \ref{assumption:Contraction property of communication matrix}.

\section{Discussion on the comparison of \cref{theorem: CLT_D-SGD} to the CLT result in \cite{li2022stat}}\label{appendix:CLT_result_discussion}

As a byproduct of our \cref{theorem: CLT_D-SGD}, we have the following corollary.
\begin{corollary}\label{corollary:constant_interval}
    Under Assumptions \ref{assumption:Regularity of the gradient} - \ref{assumption:Contraction property of communication matrix}, for the sub-sequence $\{n_l\}_{l\geq 0}$ where $K_l = K$ for all $l$, we have
    \begin{equation}
        \frac{1}{\sqrt{n_l}}\sum_{k=1}^l (\Bar{\theta}_{n_k} - \theta^*) \xrightarrow[l\to\infty]{dist.} \cN(0,\mV')
    \end{equation}
\end{corollary}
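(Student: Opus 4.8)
The plan is to derive the corollary from the Polyak--Ruppert CLT \eqref{eqn:GD-SGD_CLT_PR_AVG} by upgrading the marginal martingale CLT that underlies \cref{theorem:CLT_PRAVG_SA} to a functional (invariance-principle) statement. Throughout I specialize to the constant-interval regime $K_l \equiv K$, so $n_l = lK$, and take $\gamma_n \sim \gamma_{\star}/n^a$ with $a \in (0.5,1)$, placing us in case C2(i) of \cref{theorem:CLT_classical_SA}. Since $K$ is bounded, every condition C1--C6 verified in \cref{appendix_proof_of_CLT} for the full sequence remains valid here, and the matrices $\mH$ and $\mU$ (hence $\mV' = \mH^{-1}\mU\mH^{-T}$) are unchanged; the only genuinely new ingredient is the additional averaging over the aggregation indices $k = 1,\dots,l$.

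First I would reuse the Polyak--Ruppert linearization already assembled in Sections~\ref{section:C4}--\ref{section:C5}. Writing $\bar e_j \triangleq \frac{1}{N}\sum_{i=1}^N e^i_j$ for the aggregated martingale-difference noise, whose conditional covariance converges to $\mU$, the decomposition \eqref{eqn:decomposition_DSGD} together with the standard summation-by-parts argument yields $\bar\theta_n - \theta^* = -\mH^{-1}\frac{1}{n}\sum_{j=1}^n \bar e_j + R_n$ with $\sqrt{n}\,\|R_n\| \to 0$ almost surely, which is exactly what condition C5 and its averaged form C6 certify. Substituting this into the left-hand side and exchanging the order of the two summations (Abel summation over $k$ for fixed $j$) rewrites $\frac{1}{\sqrt{n_l}}\sum_{k=1}^l(\bar\theta_{n_k} - \theta^*)$, up to a term negligible after the $n_l^{-1/2}$ scaling, as a single weighted partial sum $-\mH^{-1} n_l^{-1/2}\sum_{j=1}^{n_l} c_{j,l}\,\bar e_j$, where the uniformly bounded weights $c_{j,l} = \sum_{k \geq \lceil j/K\rceil}^{l}(kK)^{-1}$ arise from the harmonic tails and admit an explicit continuous profile as $j/n_l \to s \in (0,1]$.

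The core step is then a functional CLT for the martingale array $\{\bar e_j\}$: the uniform $(2+\tau)$-moment bound from C4(a) supplies the Lindeberg condition and the conditional-covariance convergence from C4(b) supplies the limiting bracket, so that $m^{-1/2}\sum_{j \leq \lfloor ms\rfloor}\bar e_j \Rightarrow \mU^{1/2}W(s)$ in $D([0,1],\R^d)$ for a standard Brownian motion $W$. Passing to the limit by the continuous mapping theorem identifies $n_l^{-1/2}\sum_j c_{j,l}\bar e_j$ with a stochastic-integral functional of $W$, whose law is centered Gaussian with a covariance read off from the It\^o isometry and premultiplied by $\mH^{-1}$. \textbf{The main obstacle} is precisely this covariance-identification: one must show that, after the change from the round index $k$ to physical time $j$ and the $n_l^{-1/2}$ normalization, the accumulated weights $c_{j,l}$ collapse to produce exactly the stated $\mV' = \mH^{-1}\mU\mH^{-T}$, while simultaneously controlling the doubly-indexed remainder $n_l^{-1/2}\sum_{k \leq l} R_{n_k}$ uniformly in $l$. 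The moment and rate estimates already established for C4--C6 make that remainder vanish, so the residual work is the careful bookkeeping of the weight profile and its continuous limit, which is what pins down the limiting covariance and recovers the LSGD-RC CLT of \cite{li2022stat}.
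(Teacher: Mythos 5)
Your proposal takes a completely different route from the paper, and it contains a genuine gap at exactly the step you defer. The paper's proof of \cref{corollary:constant_interval} is two lines: since $K_l \equiv K$, the aggregation index $n_l = Kl$ is a deterministic sequence with $n_l/l \to K$, so the subsequence CLT (\cref{theorem:subsequence_CLT}, Theorem 14.4 of \cite{billingsley2013convergence}) transfers the Polyak--Ruppert CLT \eqref{eqn:GD-SGD_CLT_PR_AVG}, already established in \cref{theorem: CLT_D-SGD}, directly to the subsequence $\{n_l\}$ --- no invariance principle, no weighted martingale sums, no It\^{o} calculus. Your plan instead hinges on a functional CLT for the aggregated noise array $\{\bar e_j\}$, which is established nowhere in the paper (Fort's results, \cref{theorem:CLT_classical_SA} and \cref{theorem:CLT_PRAVG_SA}, are marginal CLTs only), so you would have to prove that weak convergence in $D([0,1],\R^d)$ from scratch; this is plausible from C4 via a standard martingale FCLT, but it is heavy machinery the paper's argument never needs.

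The concrete gap is the covariance identification that you yourself flag as ``the main obstacle'': it is not carried out, and under your own setup it fails. First, your claim that the weights are uniformly bounded is wrong: $c_{j,l} = \sum_{k=\lceil j/K\rceil}^{l}(kK)^{-1} \sim \frac{1}{K}\log(n_l/j)$, which diverges for $j = o(n_l)$; only square-integrability of the limiting profile survives. Second, finishing your continuous-mapping/It\^{o} step with the profile $c(u) = \frac{1}{K}\log(1/u)$ yields limiting covariance $\mH^{-1}\mU\mH^{-T}\cdot\frac{1}{K^2}\int_0^1 \log^2(1/u)\,du = \frac{2}{K^2}\mV'$, since $\int_0^1\log^2(1/u)\,du = 2$. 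So the weights do not ``collapse to produce exactly the stated $\mV'$''; your construction, carried through honestly, produces a different constant. This means either the double-sum decomposition $\sum_{k\le l}(\Bar{\theta}_{n_k}-\theta^*) = \sum_{j\le n_l} c_{j,l}(\theta_j-\theta^*)$ is not aligned with the reading of the statement that the paper's subsequence argument uses, or a normalization is missing --- either way the proposal as written does not reach the stated conclusion, and a proof that leaves its decisive identity as ``careful bookkeeping'' has not proved the corollary. (Your remainder control is the one part that does go through: $\sqrt{n}\,\|R_n\|\to 0$ a.s.\ gives $n_l^{-1/2}\sum_{k\le l}\|R_{n_k}\| = o\bigl(n_l^{-1/2}\sum_{k\le l}(kK)^{-1/2}\bigr) = o(1)$; the failure is solely in the covariance step.)
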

\begin{proof}
    Since $K_l = K$ for all $l$, we have $n_l = Kl$. There is an existing result showing the CLT result of the partial sum of a sub-sequence (after normalization) has the same normal distribution as the partial sum of the original sequence.
    \begin{theorem}[Theorem 14.4 of \cite{billingsley2013convergence}]\label{theorem:subsequence_CLT}
        Given a sequence of random variable $\theta_1, \theta_2, \cdots$ with partial sum $S_n \triangleq \sum_{k=1}^n \theta_k$ such that $\frac{1}{\sqrt{n}}S_n \xrightarrow[n\to\infty]{dist.} \cN(0,\mV)$. Let $n_l$ be some positive random variable taking integer value such that $\theta_{n_l}$ is on the same space as $\theta_n$. In addition, for some sequence $\{b_l\}_{l\geq 0}$ going to infinity, $n_l/b_l\to c$ for a positive constant $c$. Then, $\frac{1}{\sqrt{n_l}}S_{n_l} \xrightarrow[l\to\infty]{dist.} \cN(0,\mV)$.
    \end{theorem}
    From \cref{theorem:subsequence_CLT} and our \cref{theorem: CLT_D-SGD}, we have $\frac{1}{\sqrt{n_l}}\sum_{k=1}^l (\Bar{\theta}_{n_k} - \theta^*) \xrightarrow[l\to\infty]{dist.} \cN(0,\mV')$.
\end{proof}
Recently, \cite{li2022stat} studied the CLT result under the LSGD-FP algorithm with \textit{i.i.d} sampling (with slightly different setting of the step size). We are able recover Theorem 3.1 of \cite{li2022stat} under the constant communication interval while adjusting their step size to make a fair comparison. We state their algorithm below for self-contained purpose. During each communication interval $n \in (n_l, n_{l+1}]$,
\begin{equation}\label{eqn:L-SGD-FC_COLT}
\theta_{n+1}^i = \begin{cases}
                    \theta_{n}^i - \gamma_{l} \nabla F_i(\theta_{n}^i, X^i_{n}) & ~~~~ \text{if}~~ n \in (n_l,n_{l+1}), \\
                    \frac{1}{N}\sum_{i=1}^N(\theta_{n}^i - \gamma_{l} \nabla F_i(\theta_{n}^i, X^i_{n})) & ~~~~ \text{if}~~ n = n_{l+1}.
                  \end{cases}
\end{equation}
The CLT result associated with \eqref{eqn:L-SGD-FC_COLT}  is given below.
\begin{theorem}[Theorem 3.1 of \cite{li2022stat}]
    Under LSGD-FP algorithm with \textit{i.i.d.} sampling, we have
    \begin{equation}\label{eqn:CLT_COLT22}
        \frac{\sqrt{n_l}}{l}\sum_{k=1}^l \left(\Bar{\theta}_{n_k} - \theta^*\right) \xrightarrow[l\to\infty]{dist.} \cN(0, \nu\mV'),        
    \end{equation}
    where $\nu \triangleq \lim_{l \to \infty} \frac{1}{l^2} (\sum_{k=1}^l K_l)(\sum_{k=1}^l K_l^{-1})$.
\end{theorem}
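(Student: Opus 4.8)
The plan is to capitalise on the single feature that separates this statement from the paper's main theorem: under \textit{i.i.d.}\ sampling the centred gradient
\begin{equation}
\zeta_s \triangleq \tfrac{1}{N}\sum_{i=1}^N\left(\nabla F_i(\theta^*, X^i_s) - \nabla f_i(\theta^*)\right)
\end{equation}
is \emph{already} an $\cF_s$-martingale difference sequence whose conditional covariance converges to $\mU = \tfrac{1}{N^2}\sum_{i=1}^N\mU_i$. Consequently the Poisson-equation decomposition \eqref{eqn:48} collapses (the terms $\nu^i_n$ and $\xi^i_n$ no longer carry a mixing bias), and I may reuse the consensus estimate of \cref{lemma:consensus}, specialised to the LSGD-FP pattern $\mW=\vone\vone^T/N$, to discard the consensus error at the CLT scale. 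Writing $\mH=\nabla^2 f(\theta^*)$, the consensus iterate then obeys, inside the $k$-th interval $I_k$ with constant step $\gamma_k$, the linearised recursion $\theta_{s+1}-\theta^*=(\mI_d-\gamma_k\mH)(\theta_s-\theta^*)-\gamma_k\zeta_s+\gamma_k R_s$, with $R_s$ the curvature remainder controlled by \cref{assumption:Regularity of the gradient} and \cref{assumption:Boundedness on model parameter}.

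The central observation is that, because the effective step $\gamma_k K_k\to 0$, within a single round the iterate barely moves, so the within-interval (local) average satisfies $\bar\theta_{n_k}-\theta^*=(\theta_{n_{k-1}}-\theta^*)+o(\cdot)$; setting $\delta_k\triangleq\theta_{n_k}-\theta^*$, the target reduces to $\sum_{k=1}^l(\bar\theta_{n_k}-\theta^*)=\sum_{k=1}^l\delta_{k-1}+o(\cdot)$. First I would compose the linearised map $K_k$ times to obtain the across-interval recursion $\delta_k=(\mI_d-\gamma_kK_k\mH)\delta_{k-1}-\gamma_k\sum_{s\in I_k}\zeta_s$, i.e.\ a stochastic-approximation iteration with effective step $\beta_k\triangleq\gamma_kK_k$ and martingale block $\omega_k\triangleq\gamma_k\sum_{s\in I_k}\zeta_s$ of covariance $\gamma_k^2K_k\mU$. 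Applying Polyak--Ruppert averaging to this $\delta$-recursion (Abel summation rewrites $\tfrac1l\sum_k\delta_{k-1}$ as $-\mH^{-1}\tfrac1l\sum_k\beta_k^{-1}\omega_k$, the telescoped boundary terms being lower order) yields the clean representation
\begin{equation}
\sum_{k=1}^l(\bar\theta_{n_k}-\theta^*)=-\mH^{-1}\sum_{k=1}^l\frac{1}{K_k}\sum_{s\in I_k}\zeta_s+o(\cdot).
\end{equation}

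From here the conclusion is a martingale CLT. The right-hand side is a linear functional of the mutually uncorrelated martingale differences $\{\zeta_s\}$, and since $\text{Cov}\big(\tfrac{1}{K_k}\sum_{s\in I_k}\zeta_s\big)=\tfrac{1}{K_k}\mU$ with independent blocks across $k$, the covariance of the sum is $\big(\sum_{k=1}^l K_k^{-1}\big)\mV'$ with $\mV'=\mH^{-1}\mU\mH^{-1}$. Multiplying by $(\sqrt{n_l}/l)^2=n_l/l^2$ and substituting $n_l=\sum_{k=1}^l K_k$ produces covariance $\tfrac{1}{l^2}\big(\sum_k K_k\big)\big(\sum_k K_k^{-1}\big)\mV'\to\nu\mV'$, while the asymptotic normality follows by checking the Lindeberg condition for the triangular array $\{\tfrac{\sqrt{n_l}}{l}\tfrac{1}{K_k}\mH^{-1}\zeta_s\}$, which is immediate from the uniformly bounded $(2+\tau)$-th moments already used for condition C4(a) of \cref{theorem:CLT_classical_SA}.

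I expect the main obstacle to lie in the two reductions hidden in the $o(\cdot)$ terms: bounding the within-interval displacement $\bar\theta_{n_k}-\delta_{k-1}$ and the curvature remainder $R_s$ uniformly in $k$ so that they remain negligible after the $\sqrt{n_l}/l$ scaling — this is precisely where the decay of $\gamma_k$ and the growth restriction on $K_k$ must be invoked — together with justifying the Abel-summation step for the $\delta$-recursion when $\beta_k=\gamma_kK_k$ varies with $k$. By contrast, the remaining bookkeeping, namely the independence of the noise blocks across intervals and the existence of the limit defining $\nu$, is comparatively routine.
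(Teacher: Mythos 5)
This statement is not proved in the paper at all: it is imported verbatim as Theorem~3.1 of \cite{li2022stat}, and the surrounding appendix only performs a consistency check in the opposite direction --- specializing the paper's own CLT (\cref{theorem: CLT_D-SGD} via \cref{corollary:constant_interval} and the subsequence CLT, \cref{theorem:subsequence_CLT}, together with a step-size matching argument) to the constant interval $K_l\equiv K$, where $\nu=1$. Your proposal is therefore a genuinely different contribution: a direct proof of the general varying-$K_l$ statement, and its architecture is sound and essentially the one underlying \cite{li2022stat} itself. The chain --- i.i.d.\ sampling makes the centred gradients a martingale difference array so the Poisson decomposition collapses; each round is compressed into one stochastic-approximation step with effective step $\beta_k=\gamma_kK_k$ and noise block $\omega_k$ of covariance $\gamma_k^2K_k\mU$; Abel summation (Polyak--Ruppert) of the round-level recursion yields the weighted martingale representation $-\mH^{-1}\sum_{k}K_k^{-1}\sum_{s\in I_k}\zeta_s$; and the covariance computation $\frac{n_l}{l^2}\bigl(\sum_k K_k^{-1}\bigr)\mV' = \frac{1}{l^2}\bigl(\sum_k K_k\bigr)\bigl(\sum_k K_k^{-1}\bigr)\mV'\to\nu\mV'$ --- is exactly where the factor $\nu$ originates, namely the non-uniform weights $1/K_k$. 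This is information the paper's own recovery argument cannot produce, since it is confined to constant $K$; conversely, the paper's route (deduce the constant-$K$ case from a general Markovian-sampling CLT) buys generality in the sampling while your route buys the sharp variance inflation constant $\nu\ge 1$ under i.i.d.\ sampling.

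Two small cautions, neither fatal. First, in \cite{li2022stat} the quantity $\bar\theta_{n_k}$ is the across-agent average at the $k$-th synchronization time, not a within-interval time average; your reading costs only an index shift ($\delta_{k-1}$ versus $\delta_k$), absorbed by your displacement estimate, but the reduction is cleaner if stated directly for $\sum_k\delta_k$. Second, the martingale differences must be evaluated at the running iterates ($\zeta_s$ at $\theta_s^i$, not at $\theta^*$); the replacement error is Lipschitz in $\|\theta_s^i-\theta^*\|$ and belongs with your curvature remainder $R_s$, after which the conditional covariance converges to $\mU$ along $\theta_n\to\theta^*$, feeding the Lindeberg check as you indicate. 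The genuinely delicate estimates are the ones you already flagged: the within-round displacement, the second-order terms $O(\gamma_k^2K_k^2)$ dropped when composing the linearized map, and the Abel boundary terms, all of which must be shown $o(l/\sqrt{n_l})$ using $a\in(0.5,1)$ and the slow growth of $K_l$.
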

Note that $\nu = 1$ for constant $K$. We can rewrite \eqref{eqn:CLT_COLT22} as
\begin{equation}
     \frac{\sqrt{n_l}}{l}\sum_{k=1}^l \left(\Bar{\theta}_{n_k} - \theta^*\right) = \frac{\sqrt{n_l}}{\sqrt{l}}\frac{1}{\sqrt{l}}\sum_{k=1}^{l}(\Bar{\theta}_{n_k} - \theta^*) = \sqrt{K}\frac{1}{\sqrt{l}}\sum_{k=1}^{l}(\Bar{\theta}_{n_k} - \theta^*) 
\end{equation}
such that 
\begin{equation}\label{eqn:CLT_COLT_after_trans}
    \frac{1}{\sqrt{l}}\sum_{k=1}^{l}(\Bar{\theta}_{n_k} - \theta^*) \xrightarrow[l\to\infty]{dist.} \cN(0, \frac{1}{K}\mV').
\end{equation}
Note that the step size in \eqref{eqn:L-SGD-FC_COLT} keeps unchanged during each communication interval, while ours in \eqref{eqn:GD-SGD} keeps decreasing even in the same communication interval. This makes our step size decreasing faster than theirs. To make a fair comparison, we only choose a sub-sequence $\{n_{Kl}\}_{l\geq 0}$ in \eqref{eqn:CLT_COLT_after_trans} such that it is `equivalent' to see that our step sizes become the same at each aggregation step. In this case, we again use \cref{theorem:subsequence_CLT} to obtain
\begin{equation}
    \frac{1}{\sqrt{Kl}}\sum_{s=1}^l (\Bar{\theta}_{n_{Ks}} - \theta^*) \xrightarrow[l\to\infty]{dist.} \cN(0, \frac{1}{K}\mV'),   
\end{equation}
such that
\begin{equation}
    \frac{1}{\sqrt{l}}\sum_{s=1}^{l} (\Bar{\theta}_{n_{Ks}} - \theta^*) = \sqrt{K}\frac{1}{\sqrt{Kl}}\sum_{s=1}^l (\Bar{\theta}_{n_{Ks}} - \theta^*) \xrightarrow[l\to\infty]{dist.} \cN(0, \mV').
\end{equation}
Therefore, our \cref{corollary:constant_interval} also recovers Theorem 3.1 of \cite{li2022stat} under the constant communication interval $K$, but with more general communication patterns and \textit{Markovian} sampling.

\section{Discussion on Communication Patterns
}
\subsection{Examples of Communication Matrix $\mW$}\label{subsection:examples_of_W}

\textbf{Metropolis Hasting Algorithm:} In the decentralized learning such as D-SGD, HLSGD and DFL, $\mW$ at the aggregation step can be generated locally using the Metropolis Hasting algorithm based on the underlying communication topology, and is deterministic \cite{pu2020asymptotic,koloskova2020unified,zeng2022finite}. Specifically, each agent $i$ exchanges its degree $d_i$ with its neighbors $j\in N(i)$, forming the weight $\mW(i,j) = \min\{1/d_i, 1/d_j\}$ for $j\in N(i)$ and $\mW(i,i) = 1 - \sum_{j\neq N(i)} \mW(i,j)$. In this case, $\mW$ is doubly stochastic and symmetric. By Perron-Frobenius theorem, its SLEM $\lambda_2(\mW) < 1$ . Then, $\|\mW^T\mW - \mJ\| = \|\mW^2 - \mJ\| = \lambda_2^2(\mW) < 1$, which satisfies \cref{assumption:Contraction property of communication matrix}-ii). It is worth noting that this algorithm is robust to time-varying communication topologies.

\textbf{Client Sampling in FL:} For LSGD-FP studied in \cite{stich2018local,woodworth2020local,khodadadian2022federated}, $\mW = \vone\vone^T/N$ trivially satisfies \cref{assumption:Contraction property of communication matrix}-ii). For LSGD-PP on the other hand, only a small fraction of agents participate in each aggregation step for consensus \cite{Li2020On,fraboni2022general}. Denote by $\cS$ a randomly selected set of agents (without replacement) of fixed size $|\cS| \in \{1,2,\cdots,N\}$ at time $n$ and $\mW_{\cS}$ plays a role of aggregating $\theta^i_n$ for agent $i\in \cS$. Additionally, the central server needs to broadcast updated parameter $\theta_{n+1}$ to the newly selected set $\cS'$ with the same size, which results in a bijective mapping $\sigma$ (for $\cS\to\cS'$ and $[N]/\cS \to [N]/\cS'$) and a corresponding permutation matrix $\mT_{\cS\to \cS'}$. Then, the communication matrix becomes $\mW = \mT_{\cS\to\cS'}\mW_{\cS}$.\footnote{In \cref{appendix:discussion on client sampling}, we will discuss the mathematical equivalence between our client sampling strategy and the commonly used one in the FL literature \cite{Li2020On,fraboni2022general}.} Specifically, $\mT_{\cS\to\cS'}(i,j) = 1$ if $j=\sigma(i)$ and $\mT_{\cS\to\cS'}(i,j) = 0$ otherwise. Besides, $\mW_{\cS}(i,j) = 1/|\cS|$ for $i, j \in \cS$, $\mW_{\cS}(i,i) = 1$ for $i \notin \cS$, and $\mW_{\cS}(i,j) = 0$ otherwise. Note that $\mW_{\cS}$ is now a random matrix, since $\cS$ is a randomly chosen subset of size $|\cS|$. Clearly, for each choice of $\cS$, $\mW_{\cS}$ is doubly stochastic, symmetric and $\mW_{\cS}^2 = \mW_{\cS}$. Taking the expectation of $\mW_{\cS}$ w.r.t the randomly selected set $\cS$ gives $\E_{\cS}[\mW_{\cS}](i,i) = 1 - (|\cS|-1)/N$ for $i\in[N]$ and $\E_{\cS}[\mW_{\cS}](i,j) = (|\cS|-1)/N(N-1)$ for $i\neq j$. Note that $\E_{\cS}[\mW_{\cS}]$ has all positive entries. Therefore, we use the fact $\mT^T\mT = \mI$ for permutation matrix $\mT$ such that $\|\E[\mW] - \mJ \| = \|\E_{\cS,\cS'}[\mW_{\cS}^T\mT_{\cS,\cS'}^T\mT_{\cS,\cS'}\mW_{\cS}] - \mJ \| = \|\E_{\cS}[\mW_{\cS}^T\mW_{\cS}] - \mJ\| = \|\E_{\cS}[\mW_{\cS}] - \mJ\| < 1$ by Perron–Frobenius theorem and eigendecomposition, which satisfies \cref{assumption:Contraction property of communication matrix}-ii).

\subsection{Discussion on partial client sampling}\label{appendix:discussion on client sampling}
The commonly used partial client sampling algorithm in the FL literature \cite{Li2020On,fraboni2022general} is FedAvg as follows: 
\begin{enumerate}
	\item At time $n$, the central server updates its global parameter $\theta_{n} = \frac{1}{|\cS|}\sum_{i\in\cS} \theta^i_{n}$ from the agents in the previous set $\cS$. Then, the central server selects a new subset of agents $\cS'$ and broadcasts $\theta_n$ to agent $i \in \cS'$, i.e., $\theta_{n}^i = \theta_n$;
	\item Each selected agent $i$ computes $K$ steps of SGD locally and consecutively updates its local parameter $\theta^i_{n+1}, \cdots, \theta^i_{n+K}$ according to \eqref{eqn:GD-SGD-local-update};
	\item Each selected agent $i \in \cS'$ uploads $\theta^i_{n+K}$ to the central server.
\end{enumerate}
Then, the central server repeats the above three steps with $\theta_{n+K}$ and a new set of selected agents. 

In our client sampling scheme, at the aggregation step $n$, the design of $\mW_{\cS}$  results in $\tilde{\theta}^i_{n} = \frac{1}{|\cS|}\sum_{j\in\cS}\theta_{n}^j$ for a selected agent $i \in \cS$, and $\tilde{\theta}_{n}^i = \theta_{n}^i$ for an unselected agent $i\notin \cS$. Meanwhile, the central server updates the global parameter $\tilde{\theta}_{n} = \tilde{\theta}^i_{n}$ for $i \in \cS$. Then, the permutation matrix $\mT_{\cS\to\cS'}$ ensures that the newly selected agent $i\in\cS'$ will use $\tilde{\theta}_{n}$ as the initial point for its subsequent SGD iterations. Consequently, from the selected agents' perspective, the communication matrix $\mW = \mT_{\cS\to\cS'}\mW_{\cS}$ corresponds to step 1 in FedAvg. As we can observe, both algorithms update the global parameter identically from the central server's viewpoint, rendering them mathematically equivalent regarding the global parameter update.

We acknowledge that under the \textit{i.i.d} sampling strategy, the behavior of unselected agents in our algorithm differs from FedAvg. Specifically, unselected agents are idle in FedAvg, while they continue the SGD computation in our algorithm (despite not contributing to the global parameter update).  Importantly, when an unselected agent is later selected, the central server overwrites its local parameter during the broadcasting process. This ensures that the activities of agents when they are unselected have no impact on the global parameter update.

To our knowledge, the FedAvg algorithm under the \textit{Markovian} sampling strategy remains unexplored in the FL literature. Extrapolating the behavior of unselected agents in FedAvg from \textit{i.i.d} sampling to Markovian sampling suggests that unselected agents would remain idle. In contrast, our algorithm enables unselected agents to continue evolving $X^i_n$. These additional transitions contribute to faster mixing of the Markov chain for each unselected agent and a smaller bias of $F_i(\theta, X^i_n)$ relative to its mean field $f_i(\theta)$, potentially accelerating the convergence.

\section{Additional Simulation}\label{appendix:simulation}
\subsection{Simulation Setup in Section \ref{section:simulation}}\label{appendix:simulation1}
This simulation is performed on a PC with an AMD R9 5950X, RTX 3080 and 128 GB RAM. In this simulation, we assume that agents follow the DSGD algorithm \eqref{eqn:GD-SGD}. In Figure \ref{fig:1a} - \ref{fig:1c}, each agent holds a disjoint local dataset (non-overlapping data points for every agent), while we distribute the ijcnn1 dataset \cite{chang2011libsvm} with more varied distribution among $100$ agents by leveraging Dirichlet distribution with the default alpha value of $0.5$ in Figure \ref{fig:1d} - \ref{fig:1f}. 

Moreover, we assume that all agents are distributed over a communication network. In order to create this network among $100$ agents and the graph-like dataset structure held by each agent, we utilize connected sub-graphs from the real-world graph \textit{Facebook} in SNAP \cite{leskovec2014snap}. All $100$ agents collaborate together to generate a deterministic communication matrix $\mW = [W_{ij}]$ with Metropolis Hasting algorithm of the following form: For $i,j \in [N]$, we have
\begin{align*}
    &W_{ij} = \begin{cases}
        \min\left\{\frac{1}{d_i}, \frac{1}{d_j}\right\} & \text{if agent $j$ is the neighbor of agent $i$}, \\
        0 & \text{otherwise},
    \end{cases} \\
    &W_{ii} = 1 - \sum_{j \in [N]} W_{ij},
\end{align*}
where $d_i$ represents the degree of agent $i$ in the graph. The communication interval $K$ is set to $1$, as is the usual choice in DSGD \cite{wai2020convergence,zeng2022finite,olshevsky2022asymptotic,sun2023decentralized}. 

For the first group of agents, we assume they have full access to their datasets, thus performing \textit{i.i.d.} sampling or single shuffling. In particular, 
\begin{itemize}
    \item \textit{i.i.d.} sampling employed by agent $i$ means that the data point $X_n^i$ is independently and uniformly sampled from its dataset $\mathcal{X}_i$ at each time $n$.
    \item Single shuffling, by its name, only shuffles the dataset once and adheres to that specific order throughout the training process.
\end{itemize}

On the other hand, within the second group of agents, we assume that they hold graph-like datasets. Now, we introduce simple random walk (SRW), non-backtracking random walk (NBRW), and self-repellent random walk (SRRW) in order:
\begin{itemize}
    \item SRW refers to the walker that chooses one of
the neighboring nodes uniformly at random.
    \item NBRW, as studied in \cite{alon2007non,lee2012beyond,ben2018comparing}, is a variation of SRW, which selects one of the neighbors uniformly at random, with the exception of the one visited in the last step.
    \item SRRW, recently proposed by \cite{doshi2023self}, is designed with a nonlinear transition kernel $\mK[\vx] \in [0,1]^{N\times N}$ of the following form:
    \begin{equation}
        K_{ij}[\vx] \triangleq \frac{P_{ij}(x_j/\mu_j)^{-\alpha}}{\sum_{k\in[N]}P_{ik}(x_k/\mu_k)^{-\alpha}}, \quad \forall i,j \in [N],
    \end{equation}
    where matrix $\mP = [P_{ij}]$ is the transition kernel of the baseline Markov chain and $\boldsymbol{\mu} = [\mu_i]$ is its corresponding stationary distribution. Additionally, $\alpha$ denotes the force of self repellence, and larger $\alpha$ leads to stronger force of self repellence, thus higher sampling efficiency \cite[Corollary 4.3]{doshi2023self}. Moreover, vector $\vx \in \R^{N}$ is in the interior of probability simplex, representing the empirical distribution, in other words, the visit frequency of each node in the graph. The update rule of this empirical distribution is in the following form:
    \begin{equation}\label{eqn:SRRW_iterates}
        \vx_{n+1} = \vx_n + \beta_{n+1}(\boldsymbol{\delta}_{X_{n+1}} - \vx_n),
    \end{equation}
    where $\beta_n \triangleq (n+1)^{-b}$ is the step size of SRRW iterates. $b=1$ was original proposed in \cite{doshi2023self} and is recently extended to $b \in (0.5,1)$ in \cite{hu2024accelerating}. In this simulation, we use SRW as the baseline Markov chain of SRRW, and in turn $\boldsymbol{\mu}$ is proportional to the degree distribution. We also assume $\vx_0 = \mathbf{1}/N$, i.e., each node has been visited once, and choose the step size $\beta_n = (n+1)^{-0.8}$, force of self repellence $\alpha = 20$ according to the suggestion in \cite[Section 4]{hu2024accelerating}.
\end{itemize}
Since SRW, NBRW, and SRRW all admit the stationary distribution that is proportional to degree distribution, in order to obtain unfirom target in \eqref{eqn:logistic_regression}, we need to reweight the gradient computed by each agent $i$ in order to maintain an asymptotic unbiased gradient. Thus, agent $i$ should modify its SGD update from \eqref{eqn:GD-SGD-local-update} to the following:
\begin{equation}
    \theta_{n+\nicefrac{1}{2}}^i = \theta_{n}^i - \gamma_{n+1} \nabla F_i(\theta_{n}^i, X^i_{n}) / d(X_n^i).
\end{equation}

\subsection{Image Classification Task}\label{appendix:simulation2}

In this part, we perform the image classification task through a $5$-layer neural network, where the CIFAR10 dataset \cite{krizhevsky2009learning} with $50$k image data is evenly distributed to $10$ agents. Each agent possesses $5$k images, which are further divided into $200$ batches, each batch with $25$ images. 

The Convolutional Neural Network (CNN) model used in this simulation encompasses:
\begin{itemize}
    \item Two convolutional layers (i.e., \textit{nn.Conv2d(3, 6, 5)} and \textit{nn.Conv2d(6, 16, 5)}), each followed by ReLU activation functions to introduce non-linearity and max pooling (i.e., \textit{nn.MaxPool2d(2, 2)}) to reduce spatial dimensions.
    \item Three fully connected (linear) layers, concluding with a softmax output to handle the multi-class classification problem.
\end{itemize}

Similar to the simulation setup in Section \ref{section:simulation}, among the $10$ participating agents, five have unrestricted access to their respective data allocations, enabling them to utilize the shuffling method to iterate through their batches. The other five agents are designed to simulate limited data access scenarios. Their data access is structured using five distinct graph topologies extracted from the SNAP dataset collection \cite{leskovec2014snap}, each graph simulating a unique communication pattern among the batches (nodes) of data. Within these topologies, agents adopt one of three random walk strategies --- SRW, NBRW, and SRRW, all with importance reweighting --- to sample the batches for training.

\begin{figure}[t]
	\centering
	\subfigure{\includegraphics[width=0.32\textwidth]{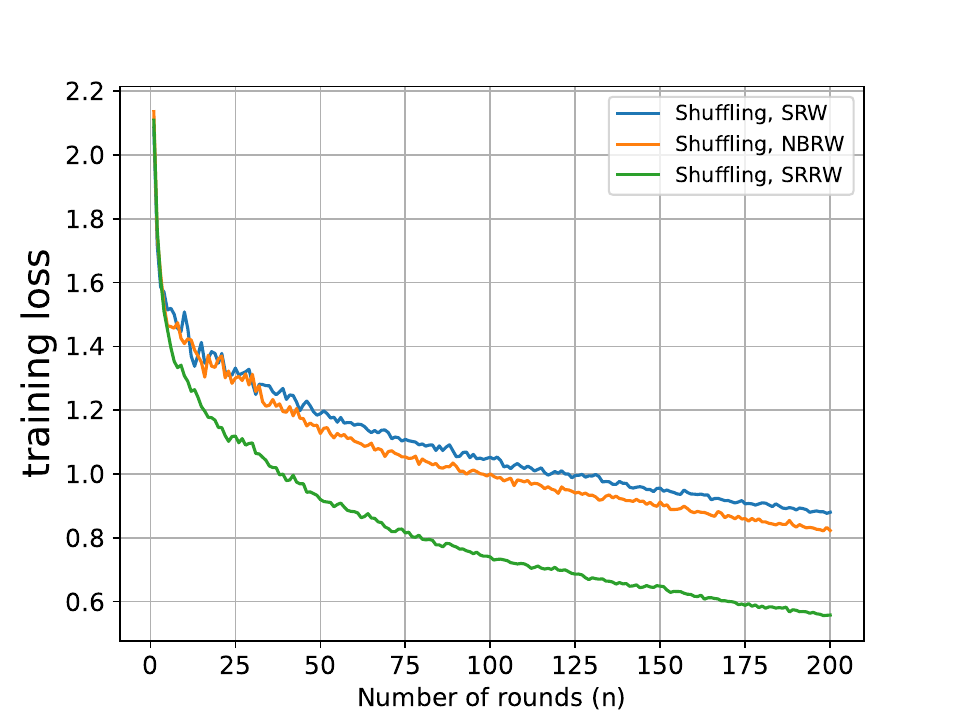}\label{fig:neural_network_result}}
	\subfigure{\includegraphics[width=0.32\textwidth]{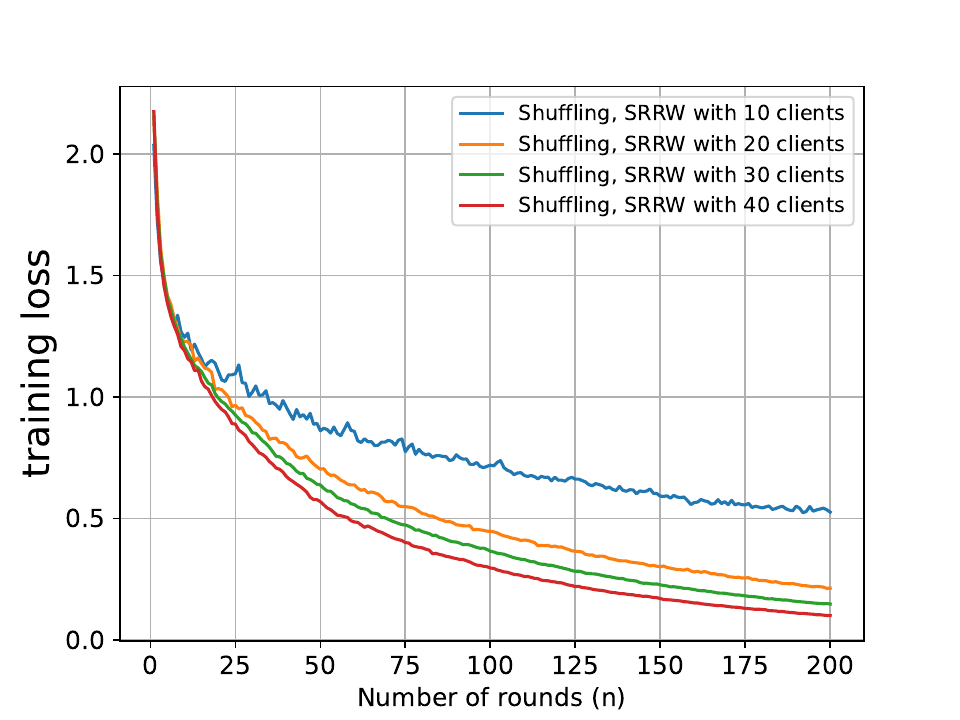}\label{fig:2b}}
	\subfigure{\includegraphics[width=0.32\textwidth]{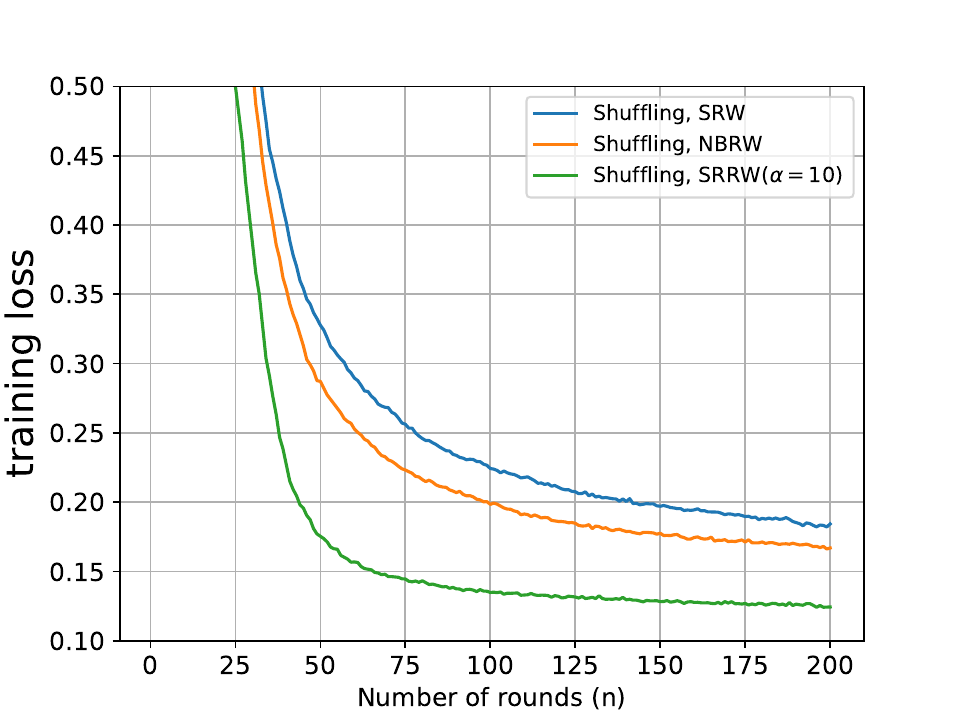}\label{fig:2c}}
	\vspace{-3mm}
	\caption{Image classification experiment. From left to right: (a) Comparison of various sampling strategies in image classification problem using $5$-layer neural network. (b) Train a $5$-layer CNN model with different number of total agents (clients) to show the linear speedup effect. (c) Train ResNet-$18$ model with different sampling strategies among $10$ agents with participation ratio $0.4$.}
	\vspace{-6mm}
	\label{fig:experiments2}
\end{figure}

Local model training is conducted for five epochs at each agent before aggregating the updates at a central server --- a process repeated for a total of $200$ communication rounds. Each epoch consists of a full traversal of the local dataset of agents in the first group, or $200$ batches sampled for training in the second group of agents. To mimic realistic conditions, we also introduce partial agent participation where only $40\%$ of agents are selected randomly to transmit their updates in each round, reflecting the intermittent communication in real-world FL deployments. Lastly, the selection of the step size $\beta_n$ for SRRW iterates \eqref{eqn:SRRW_iterates} is a critical aspect of our experiments. In this simulation, we experiment with various values of $b\in\{0.501,0.6,0.7,0.8,0.9\}$ to determine the most advantageous setting for maximizing the benefits of the SRRW strategy. Based on our findings, the best choice for the SRRW step size is $b = 0.501$, in other words, $\beta_n = (n+1)^{-0.501}$.

The simulation result is quantified by averaging the training loss across ten repeated trials for each configuration. As depicted in \cref{fig:neural_network_result}, the training results are consistent with our previous findings in \cref{fig:1a} in the context of the FL framework and the training of neural networks: the use of a more effective sampling approach, even for a portion of the agents, results in significant enhancements in the overall training of the model, and this improvement is further enhanced through the highly efficient sampling strategy SRRW. 

In Figure \ref{fig:2b} and \ref{fig:2c}, we perform image classification experiments in the FL setting with partial client participation. Only $4$ random agents will participate in the training process at each aggregation phase. In Figure \ref{fig:2b}, we fix the sampling strategy (shuffling, SRRW with $\alpha = 10$) and test the linear speedup effect for the $5$-layer CNN model by duplicating $10$ agents to $N$ agents with $N \in \{10, 20, 30, 40\}$, keeping the same participation ratio $0.4$. As can be seen from the plot, the training loss is inversely proportional to the number of agents, i.e., at $200$ rounds, the training loss is $0.52$ for $10$ agents, $0.23$ for $20$ agents, $0.18$ for $30$ agents, and $0.12$ for $40$ agents. In Figure \ref{fig:2c}, we extend the current simulation from 5-layer CNN model to ResNet-18 model \cite{he2016deep} in order to numerically test the performance of different sampling strategies in a more complex neural network training task. By fixing the shuffling in the first group of agents, we observe that improving Markovian sampling from SRW to NBRW, then to SRRW, gives accelerated training process with smaller training loss.

\section{Limitations}\label{appendix:limitations}
Our study provides crucial insights into the identification of nuanced agents' sampling behaviors in UD-SGD, where improving each agent's sampling strategy speeds up overall system performance without additional computational burden except the additional storage for the visit counts used for sampling their datasets. Our UD-SGD is scalable in terms of larger datasets as the sampling strategy (i.e., random walk) utilized by each agent leverages only local information for its dataset. However, our work has two limitations that should be acknowledged.

\begin{itemize}
    \item[1.] Assumption \ref{assumption:Boundedness on model parameter} posits that the parameter trajectory $\{\theta_n\}$ is almost surely bounded, which is a strong assumption. This is crucial for guaranteeing the well-defined nature of all related quantities. Some mechanisms such as projections onto a compact subset \cite[Chapter 5.1]{kushner2003stochastic}, or truncation-based method with expanding compact subsets can do the trick to ensure that the iteration is always bounded \cite{andrieu2005stability}. As mentioned in our discussion after Assumption \ref{assumption:Boundedness on model parameter}, only recently the stability of SGD under Markovian sampling has been guaranteed to hold for some class of objective function $f$ \cite{borkar2021ode}, while the discussion on stability issue under multi-agent scenario with Markovian sampling remains an open problem and we do not pursue to remove this stability assumption in this paper.  
    \item[2.] Asymptotic analysis: The main results of our work, i.e., almost sure convergence and central limit theorem in distributed optimization, are based on asymptotic analysis and might not accurately represent the finite-sample performance of each contributing agent in the system. The state-of-the-art finite-sample analysis in the literature only focuses on the worst-performing agent that cannot capture the nuanced differences between agents' sampling strategies, with the explanation detailed in \cref{footnote2}. Thus, a finite-sample error bound that distinguish every agent's dynamics is still unknown and regarded as a future direction.
\end{itemize}

\section*{Checklist}
\begin{enumerate}
\item {\bf Claims}
    \item[] Question: Do the main claims made in the abstract and introduction accurately reflect the paper's contributions and scope?
    \item[] Answer: \answerYes{} 
    \item[] Justification: The abstract and the introduction clearly state the claims made, including the contributions made in the paper and important assumptions and limitations. The claims made match theoretical and experimental results, and reflect how much the results can be expected to generalize to other settings.

\item {\bf Limitations}
    \item[] Question: Does the paper discuss the limitations of the work performed by the authors?
    \item[] Answer: \answerYes{} 
    \item[] Justification: The `Limitations' section is provided in Appendix \ref{appendix:limitations}.

\item {\bf Theory Assumptions and Proofs}
    \item[] Question: For each theoretical result, does the paper provide the full set of assumptions and a complete (and correct) proof?
    \item[] Answer: \answerYes{} 
    \item[] Justification: All the theorems, formulas, and proofs in the paper have been numbered and cross-referenced. All assumptions have been clearly stated or referenced in the statement of any theorems. The complete and correct proofs have been provided in appendices. 

\item {\bf Experimental Result Reproducibility}
    \item[] Question: Does the paper fully disclose all the information needed to reproduce the main experimental results of the paper to the extent that it affects the main claims and/or conclusions of the paper (regardless of whether the code and data are provided or not)?
    \item[] Answer: \answerYes{} 
    \item[] Justification: The detailed simulation setups have been provided in Appendix \ref{appendix:simulation}.

\item {\bf Open access to data and code}
    \item[] Question: Does the paper provide open access to the data and code, with sufficient instructions to faithfully reproduce the main experimental results, as described in supplemental material?
    \item[] Answer: \answerNo{} 
    \item[] Justification: We do not provide clean source codes, but detailed instructions to perform the experiment have been provided in Appendix \ref{appendix:simulation}.

\item {\bf Experimental Setting/Details}
    \item[] Question: Does the paper specify all the training and test details (e.g., data splits, hyperparameters, how they were chosen, type of optimizer, etc.) necessary to understand the results?
    \item[] Answer: \answerYes{} 
    \item[] Justification: The full details have been provided in Appendix \ref{appendix:simulation}.

\item {\bf Experiment Statistical Significance}
    \item[] Question: Does the paper report error bars suitably and correctly defined or other appropriate information about the statistical significance of the experiments?
    \item[] Answer: \answerYes{} 
    \item[] Justification: We plot the error bars in Figure \ref{fig:1a}. However, error bars are intentionally omitted in Figure \ref{fig:experiments} in the main body to avoid a cluttered plot and deliver the main message.

\item {\bf Experiments Compute Resources}
    \item[] Question: For each experiment, does the paper provide sufficient information on the computer resources (type of compute workers, memory, time of execution) needed to reproduce the experiments?
    \item[] Answer: \answerYes{} 
    \item[] Justification: We specify the platform that runs the simulation in Appendix \ref{appendix:simulation1}.
    
\item {\bf Code Of Ethics}
    \item[] Question: Does the research conducted in the paper conform, in every respect, with the NeurIPS Code of Ethics \url{https://neurips.cc/public/EthicsGuidelines}?
    \item[] Answer: \answerYes{} 
    \item[] Justification: We fully obey the NeurIPS Code of Ethics.

\item {\bf Broader Impacts}
    \item[] Question: Does the paper discuss both potential positive societal impacts and negative societal impacts of the work performed?
    \item[] Answer: \answerNA{} 
    \item[] Justification: This work mainly aims to demonstrate how the improvement of partial agents' sampling strategies can accelerate the overall system's performance. It advocates for improving common wealth and has no negative social impact.

\item {\bf Safeguards}
    \item[] Question: Does the paper describe safeguards that have been put in place for responsible release of data or models that have a high risk for misuse (e.g., pretrained language models, image generators, or scraped datasets)?
    \item[] Answer: \answerNA{} 
    \item[] Justification: The paper poses no such risks.

\item {\bf Licenses for existing assets}
    \item[] Question: Are the creators or original owners of assets (e.g., code, data, models), used in the paper, properly credited and are the license and terms of use explicitly mentioned and properly respected?
    \item[] Answer: \answerYes{} 
    \item[] Justification: We have cited correctly the datasets \textit{ijcnn1} and \textit{CIFAR-10} used for experiments in our paper.

\item {\bf New Assets}
    \item[] Question: Are new assets introduced in the paper well documented and is the documentation provided alongside the assets?
    \item[] Answer: \answerNA{} 
    \item[] Justification: The paper does not release new assets.

\item {\bf Crowdsourcing and Research with Human Subjects}
    \item[] Question: For crowdsourcing experiments and research with human subjects, does the paper include the full text of instructions given to participants and screenshots, if applicable, as well as details about compensation (if any)? 
    \item[] Answer: \answerNA{} 
    \item[] Justification: The paper does not involve crowdsourcing nor research with human subjects.

\item {\bf Institutional Review Board (IRB) Approvals or Equivalent for Research with Human Subjects}
    \item[] Question: Does the paper describe potential risks incurred by study participants, whether such risks were disclosed to the subjects, and whether Institutional Review Board (IRB) approvals (or an equivalent approval/review based on the requirements of your country or institution) were obtained?
    \item[] Answer: \answerNA{} 
    \item[] Justification: The paper does not involve crowdsourcing nor research with human subjects.
\end{enumerate}

\end{document}